\newcommand{\symfootnote}[1]{%
\let\oldthefootnote=\thefootnote%
\stepcounter{mpfootnote}%
\addtocounter{footnote}{-1}%
\renewcommand{\thefootnote}{\fnsymbol{mpfootnote}}%
\footnote{#1}%
\let\thefootnote=\oldthefootnote%
}
\title{Robustness to Unbounded Smoothness of Generalized SignSGD}
\author{
Michael Crawshaw\symfootnote{*Authors in alphabetical order.}\\ George Mason University \\ {mcrawsha@gmu.edu}
\and
Mingrui Liu\footnotemark[1]\\ George Mason University \\
{mingruil@gmu.edu} 
\and
Francesco Orabona\footnotemark[1] \\ Boston University \\
{francesco@orabona.com}
\and
Wei Zhang\footnotemark[1]\\ IBM T.~J.~Watson Research Center\\ {weiz@us.ibm.com}
\and
Zhenxun Zhuang\footnotemark[1]\\ Boston University \\ {zxzhuang@bu.edu}
}
\date{}
\newcommand{\ba}{\boldsymbol{a}}
\newcommand{\bb}{\boldsymbol{b}}
\newcommand{\bd}{\boldsymbol{d}}
\newcommand{\bm}{\boldsymbol{m}}
\newcommand{\bg}{\boldsymbol{g}}
\newcommand{\bx}{\boldsymbol{x}}
\newcommand{\bu}{\boldsymbol{u}}
\newcommand{\by}{\boldsymbol{y}}
\newcommand{\bv}{\boldsymbol{v}}
\newcommand{\boldepsilon}{\boldsymbol{\epsilon}}
\newcommand{\E}{\mathbb{E}}
\newcommand{\R}{\mathbb{R}}
\newtheorem{theorem}{Theorem}
\newtheorem{lemma}{Lemma}
\newtheorem{assumption}{Assumption}
\newcommand{\VecLOneNorm}[1]{\|\boldsymbol{#1}\|_1}
\newcommand{\VecLInftyNorm}[1]{\|\boldsymbol{#1}\|_{\infty}}
\newcommand{\PartialDerivative}[2]{\frac{\partial F}{\partial x_{#2}}(\bx_{#1})}
\newcommand{\PartialDerivativeGeneral}[2]{\frac{\partial F}{\partial x_{#2}}(#1)}
\newcommand{\StocGradient}[2]{g_{#1, #2}}
\newenvironment{addcustomcounttheorem}[1]
  {\customcounttheorem}
  {\endcustomcounttheorem}
\newenvironment{addcustomcountlemma}[1]
  {\customcountlemma}
  {\endcustomcountlemma}
\newcommand{\BoldLZeroLOne}{\ensuremath{(\boldsymbol{L_0} ,\boldsymbol{L_1})}\xspace}
\newcommand{\PNorm}[1]{\|#1\|_2}
\newcommand{\PNormDimension}{\sqrt{d}}
\begin{document}

\maketitle

\begin{abstract}
Traditional analyses in non-convex optimization typically rely on the smoothness assumption, namely requiring the gradients to be Lipschitz. However, recent evidence shows that this smoothness condition does not capture the properties of some deep learning objective functions, including the ones involving Recurrent Neural Networks and LSTMs. Instead, they satisfy a much more relaxed condition, with potentially unbounded smoothness. Under this relaxed assumption, it has been theoretically and empirically shown that the gradient-clipped SGD has an advantage over the vanilla one. In this paper, we show that clipping is not indispensable for Adam-type algorithms in tackling such scenarios: we theoretically prove that a generalized SignSGD algorithm can obtain similar convergence rates as SGD with clipping but does not need explicit clipping at all. This family of algorithms on one end recovers SignSGD and on the other end closely resembles the popular Adam algorithm. Our analysis underlines the critical role that momentum plays in analyzing SignSGD-type and Adam-type algorithms: it not only reduces the effects of noise, thus removing the need for large mini-batch in previous analyses of SignSGD-type algorithms, but it also substantially reduces the effects of unbounded smoothness and gradient norms. We also compare these algorithms with popular optimizers on a set of deep learning tasks, observing that we can match the performance of Adam while beating the others.
\end{abstract}

%%%%%%%%%%%%%%%%%%%%%%%%%%%%%%%%%%%%%%%%%%%%%%%%%%%%%%%%%%%%
% Introduction
%%%%%%%%%%%%%%%%%%%%%%%%%%%%%%%%%%%%%%%%%%%%%%%%%%%%%%%%%%%%
\section{Introduction}
\label{sec:intro}

Recent years have witnessed a surge in non-convex machine learning models, with a focus on deep neural networks~\cite{lecun2015deep}. DNNs have achieved tremendous progress in a variety of tasks, including computer vision~\cite{krizhevsky2012imagenet, he2016deep, kamp2018efficient}, natural language processing~\cite{devlin2018bert, VaswaniSPUJGKP17}, and a lot more. Despite their huge empirical success, the theoretical analyses of non-convex optimization~\cite{Jain2017NonconvexOF} prove to be fundamentally more challenging than the established convex optimization theory~\cite{boyd2004convex}. Among the numerous literature, many of them assume smoothness of the objective function, namely requiring the gradients to be Lipschitz. Under this scenario, past works have succeeded in proving the convergence rates for a number of algorithms, e.g., Stochastic Gradient Descent~\cite{ghadimi2013stochastic}, AdaGrad~\cite{WardWB19,LiO19}, and STORM~\cite{CutkoskyO19,cutkosky2020momentum}.

Nevertheless, it was recently observed that the smoothness assumption does not capture the training of LSTMs~\cite{hochreiter1997long}: the Hessian can grow with the size of the gradients~\cite{zhang2019gradient}. Inspired by this, Zhang et al.~\cite{zhang2019gradient} proposed a relaxed smoothness assumption, named $(L_0, L_1)$ smoothness:
\begin{equation}
    \|\nabla^2 F(\bx)\| \le L_0 + L_1\|\nabla F(\bx)\|~.\label{eq:global_l0l1}
\end{equation}
They also showed that the well-known gradient clipping technique can ensure SGD's convergence in such scenarios. Later, their results were improved to show that SGD with clipping can be made unaffected by the $L_1$ in~\eqref{eq:global_l0l1} and is able to recover the optimal convergence rate of SGD under the original smoothness setting~\cite{zhang2020improved, Robustness21Jin}.

Nevertheless, the $(L_0, L_1)$ condition has not yet been empirically verified beyond LSTMs. Therefore, our \textbf{first contribution} lies in studying the applicability and generalization of the $(L_0, L_1)$ condition. In particular, we have empirically verified that the popular Transformer~\cite{VaswaniSPUJGKP17} model also seems to satisfy this assumption, see Figure~\ref{fig:global_l0l1}. Yet, we noticed that different coordinates, especially when they are in different layers of the model, exhibit very distinct $L_0$ and $L_1$ values as shown in Figure~\ref{fig:transformer_l0l1_coordinate_wise}. Hence, we propose to refine the $(L_0, L_1)$ assumption in~\eqref{eq:global_l0l1} to a coordinate-wise version (Assumption~\ref{asp:l0l1_coordinate}) and consider this to better capture the loss surface when training deep neural networks like Transformers.

\begin{figure}[t]
\begin{minipage}{.48\textwidth}
        \centering
        \begin{minipage}{0.48\linewidth}
            \includegraphics[width=\linewidth]{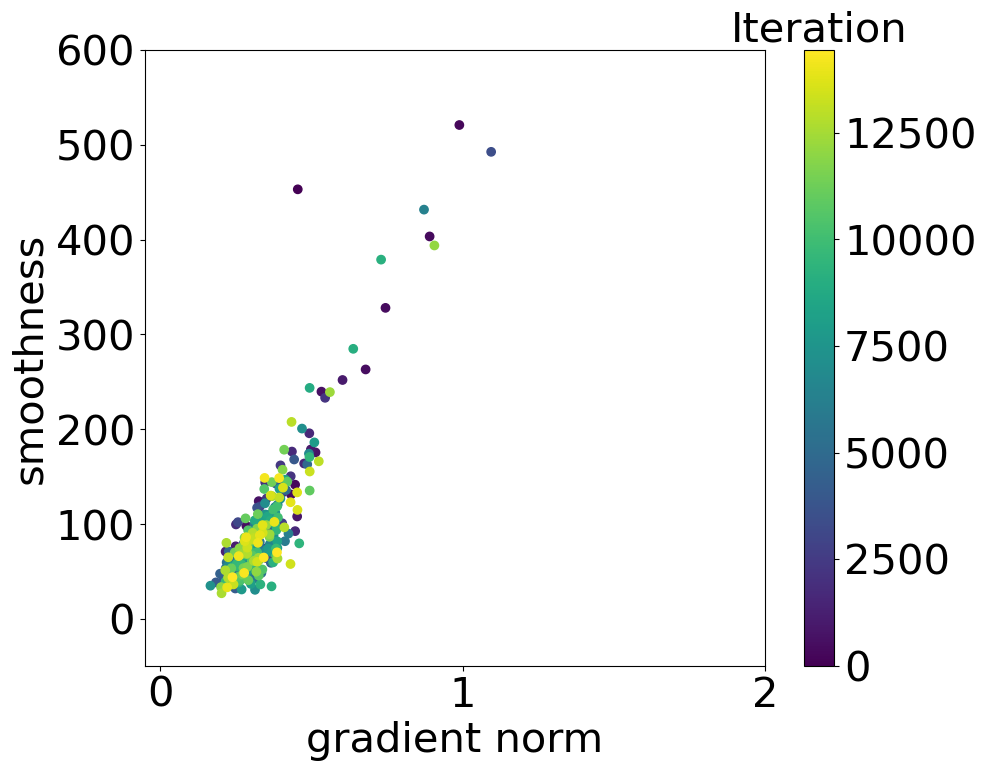}
            {\centerline{(a) Wikitext-2}}
        \end{minipage}
        \hfill
        \begin{minipage}{0.48\linewidth}
            \includegraphics[width=\linewidth]{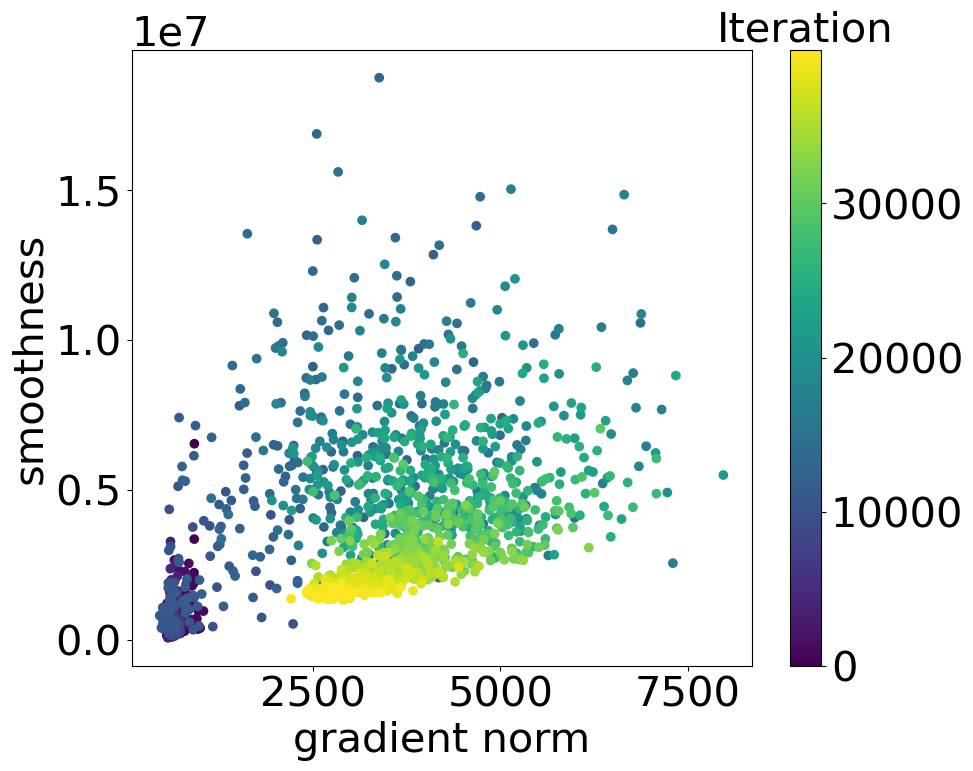}
            {\centerline{(b) WMT'16 de-en}}
        \end{minipage}
        \captionof{figure}{Local gradient Lipschitz constant vs.~Gradient norm on training (a) a $2$-layer Transformer Encoder on Wikitext-2 (b) a $6$-layer Transformer on WMT'16 Multimodal Machine Translation de-en dataset. The colorbar indicates \#Iterations in training. Details in Section~\ref{ssec:transformer_l0l1}.}
        \label{fig:global_l0l1}
    \end{minipage}
    \hspace{.02\textwidth}
    \begin{minipage}{.48\textwidth}
        \centering
        \includegraphics[width=0.85\textwidth]{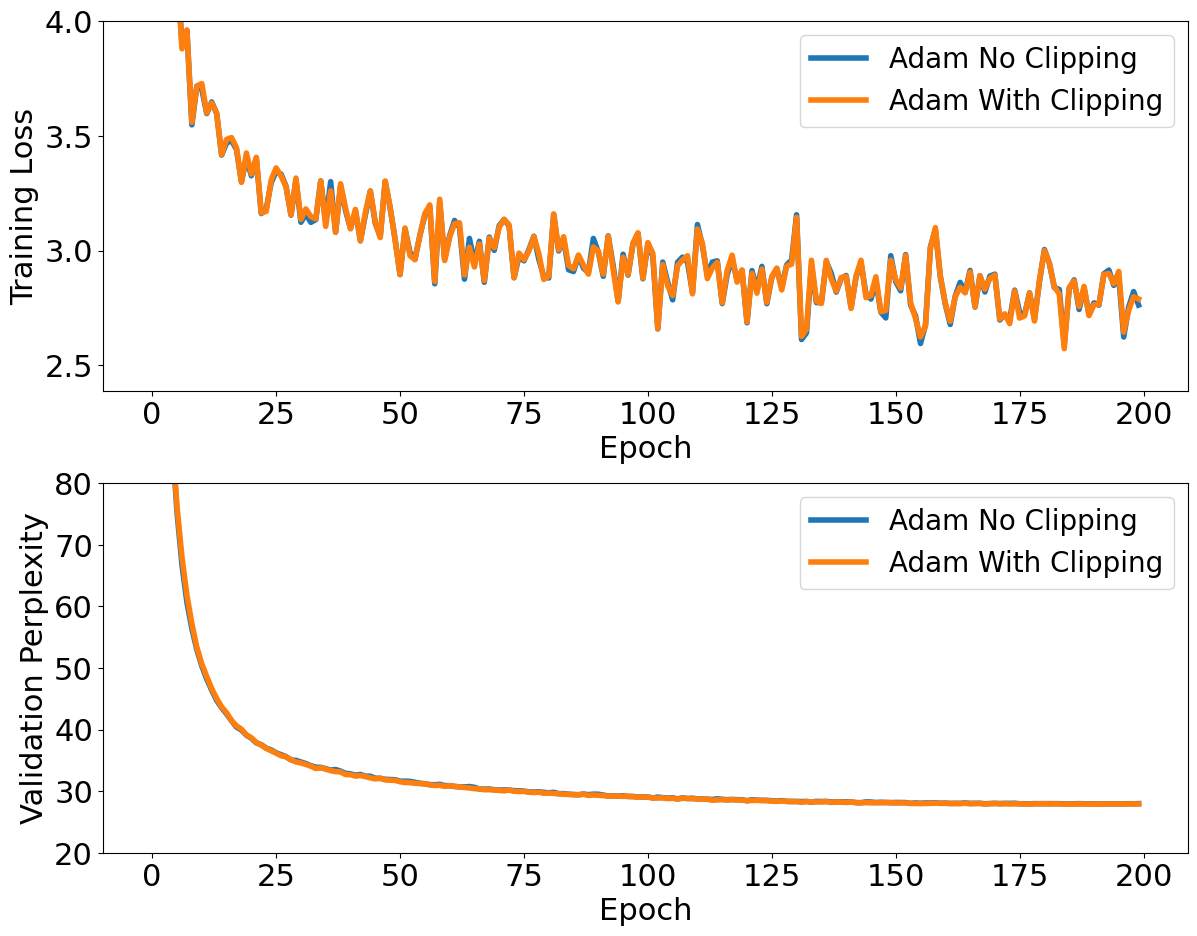}
        \captionof{figure}{Training GPT-2 on Wikitext-103 using Adam with or without gradient clipping.}
        \label{fig:gpt2_adam_clipping}
    \end{minipage}
\end{figure}

\begin{figure}[t]
     \centering
     \begin{subfigure}[b]{0.24\textwidth}
         \centering
         \includegraphics[width=\textwidth]{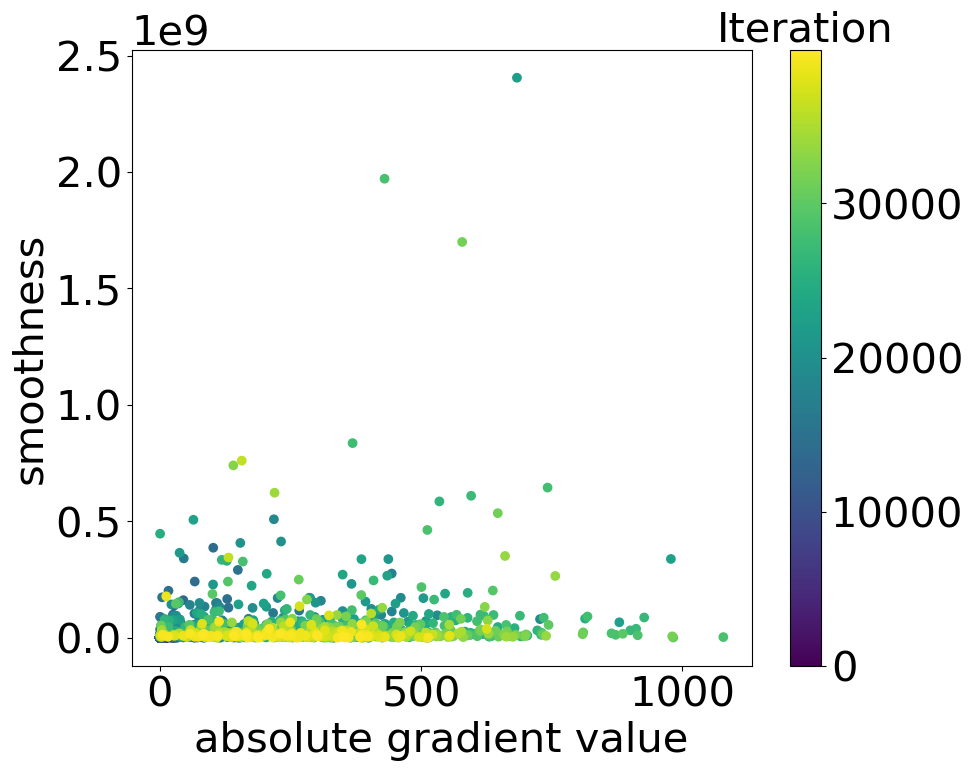}
         \caption{Encoder First Layer}
         \label{fig:enc_first}
     \end{subfigure}
     \hfill
     \begin{subfigure}[b]{0.24\textwidth}
         \centering
         \includegraphics[width=\textwidth]{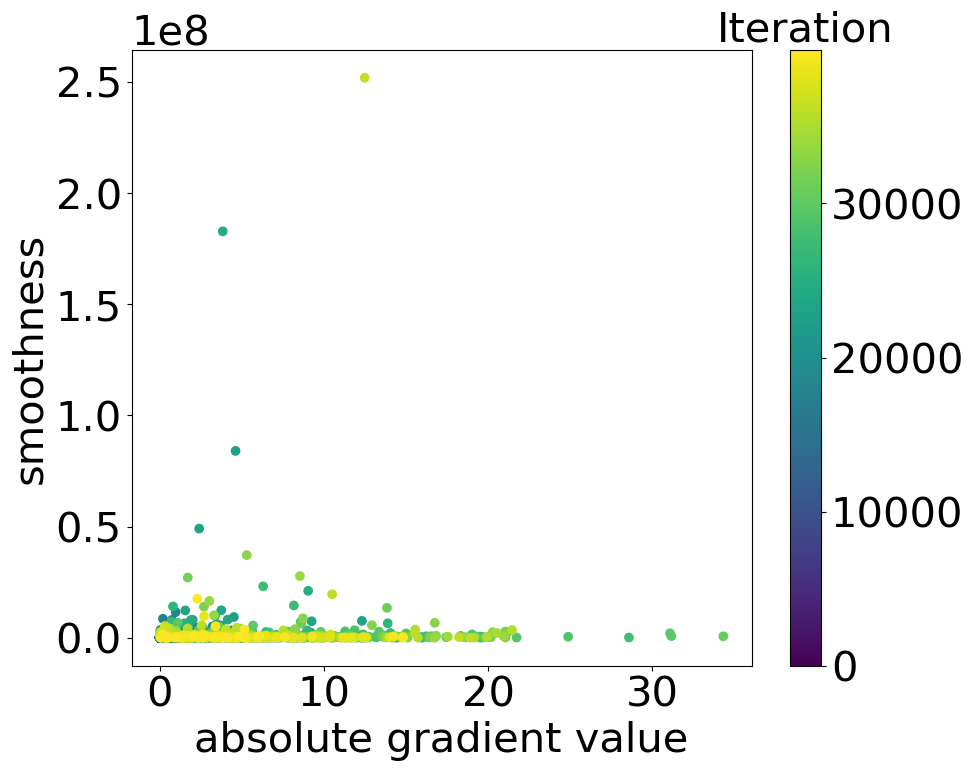}
         \caption{Encoder Last Layer}
         \label{fig:enc_last}
     \end{subfigure}
     \hfill
     \begin{subfigure}[b]{0.24\textwidth}
         \centering
         \includegraphics[width=\textwidth]{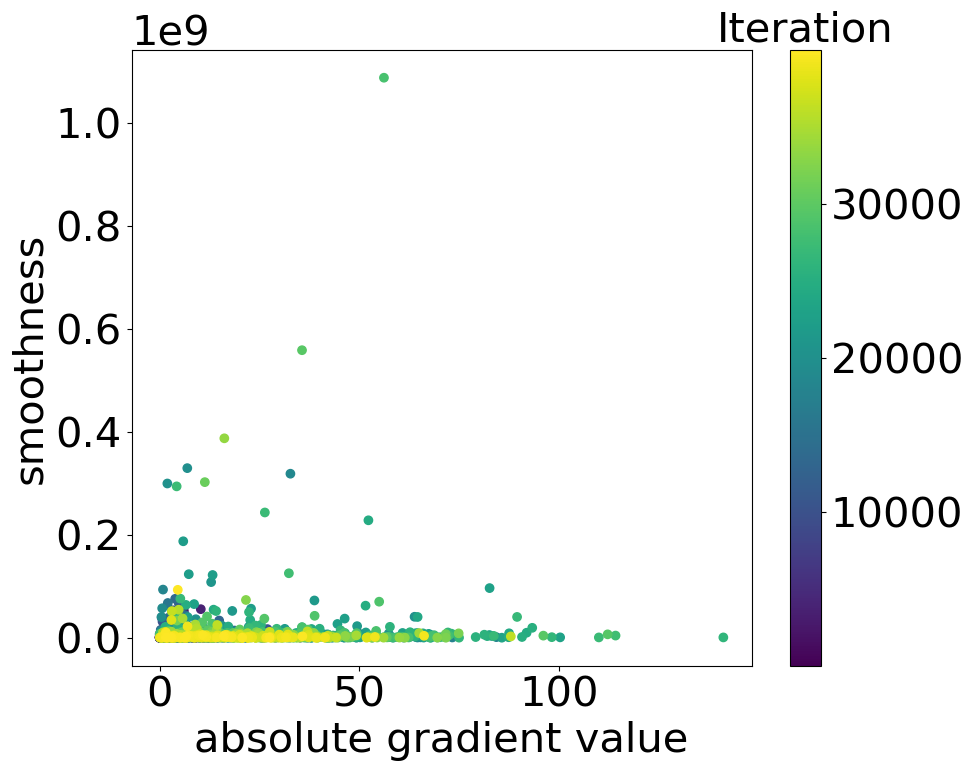}
         \caption{Decoder Second Layer}
         \label{fig:dec_second}
     \end{subfigure}
     \hfill
     \begin{subfigure}[b]{0.24\textwidth}
         \centering
         \includegraphics[width=\textwidth]{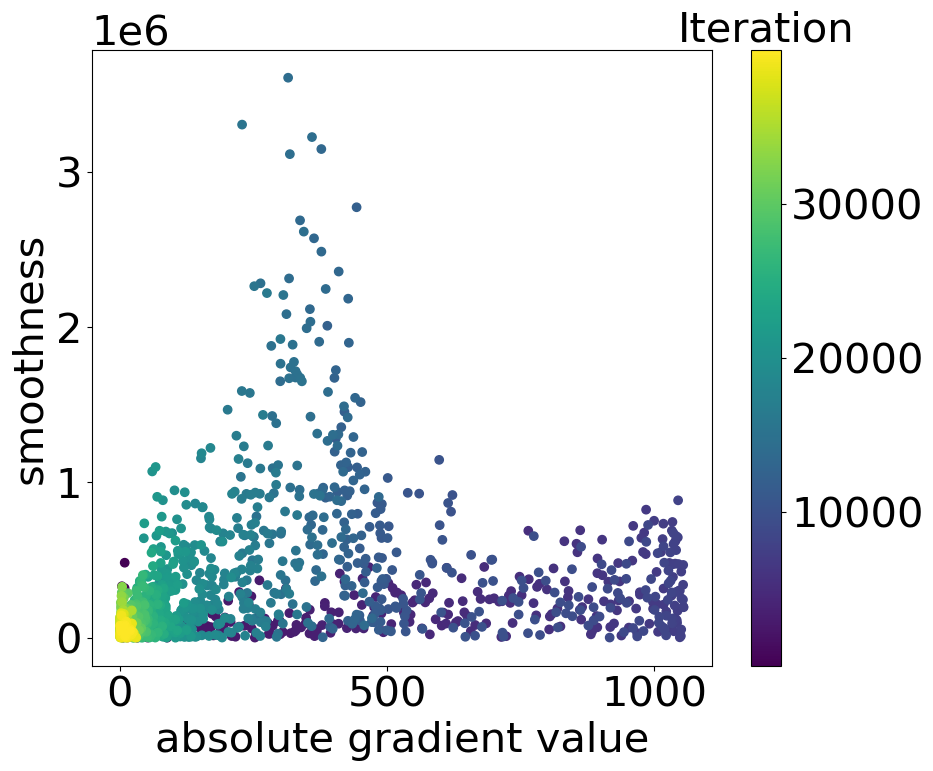}
         \caption{Decoder Last Layer}
         \label{fig:dec_last}
     \end{subfigure}
    \caption{Local gradient Lipschitz constant vs.~absolute gradient value on training a Transformer on WMT'16 Multimodal Translation de-en dataset. Each figure represents a randomly picked coordinate in corresponding layers. The colorbar indicates \#Iterations during training. Details in Section~\ref{ssec:transformer_l0l1}.}
    \label{fig:transformer_l0l1_coordinate_wise}
\end{figure}

Given that we assume (a generalization) of the $(L_0,L_1)$ assumption, it would be natural to use some clipping procedure. However, we found out that the use of clipping on Adam~\cite{KingmaB14}, while carried out in common practice~\cite[e.g.,][]{gpt2_naacl}, \emph{has no effect on the training and testing performance on optimizing a large transformer model as shown in Figure~\ref{fig:gpt2_adam_clipping}.}
In retrospect, this might not be surprising: It is known that Adam has an implicit clipping behavior due to the normalization by the estimated second moment of the gradients. Indeed, Adam can be interpreted as a variant of SignSGD~\cite{balles2018dissecting}.

Inspired by this, our \textbf{second contribution} is to propose and analyze a \emph{generalized SignSGD} algorithm under the relaxed smoothness assumption. It is parameterized in such a way that it on one end recovers SignSGD while on the other end closely resembles Adam. Apart from the convergence rates, we also located the critical role the momentum plays in analyzing Adam-type algorithms: it not only reduces the effects of noise but also gives an exponential decaying effect on the unbounded gradient norms and smoothness. This can partly explain the phenomenon that clipping does not help Adam.

The structure of this paper is as follows. Section~\ref{sec:related} discusses related works and how our paper builds upon and distinguishes from them. The settings and assumptions are carried out in Section~\ref{sec:settings}. We will introduce formally the generalized SignSGD algorithm and its analysis in Section~\ref{sec:theory}, with a detailed discussion on the bounds and the role of momentum. The experimental results are shown in Section~\ref{sec:experiments}, comparing our algorithm with some popular competitors in deep learning tasks. Finally, we draw some conclusions and discuss the limitations of our work in Section~\ref{sec:conclusion}.

\textbf{Notations} We will use $[d]$ to denote the sequence $[1,2,\ldots,d]$ and use bold letters to represent vectors, e.g., $\bu\in\R^d$.
The $j$-th coordinate of a vector $\bu$ is $u_j$.
Throughout this paper, we study the Euclidean space $\R^d$ with the inner product $\langle\cdot, \cdot\rangle$.
$\E[\bu]$ means the expectation with respect to the underlying probability distribution of a random variable $\bu$, and $\E_t[\bu]$ is the conditional expectation of $\bu$ conditioned on the past of time $t$.
The gradient of $F$ at $\bx$ is denoted by $\nabla F(\bx)$.
We use $\mathbb{I}(\cdot)$ to denote the indicator function, $\|\bu\|_p$ to denote the $p$-norm: $\|\bu\|_p:=\left(\sum^d_{j=1}|u_j|^p\right)^{1/p}$ and $\|\bu\|_{\infty}$ the maximum norm: $\|\bu\|_{\infty}:=\max\{|u_1|,\ldots,|u_d|\}$. We also denote by $\sum^j_{k=i} x_k = 0$ when $i > j$.

%%%%%%%%%%%%%%%%%%%%%%%%%%%%%%%%%%%%%%%%%%%%%%%%%%%%%%%%%%%%
% Related Works
%%%%%%%%%%%%%%%%%%%%%%%%%%%%%%%%%%%%%%%%%%%%%%%%%%%%%%%%%%%%
\section{Related Works}
\label{sec:related}

\textbf{Adaptive Gradient Methods} Adaptive gradient methods~\cite{mcmahan2010adaptive,duchi2011adaptive,KingmaB14,hinton2012deep,reddi2019convergence} are popular optimizers for training deep neural networks. The traditional analysis of adaptive gradient methods is providing regret bounds under the online convex optimization framework~\cite{duchi2011adaptive,KingmaB14,reddi2019convergence}. Recently, there are some analysis of adaptive gradient methods for nonconvex smooth functions~\cite{chen2018universal,chen2018convergence,zaheer2018adaptive,defossez2020simple,zou2019sufficient}. Zou et al.~\cite{ZouCLG21} introduces an intriguing connection between Adam~\cite{KingmaB14} and SignGD~\cite{bernstein2018signsgd} when training a two-layer neural network in the deterministic setting, where SignGD is an algorithm following the negative gradient sign direction to perform the update. However, these works cannot be directly extended to nonconvex functions with unbounded smoothness in the stochastic setting. To the best of our knowledge, this work is the first one establishing guarantees for coordinate-wise type optimizers like generalized SignSGD as well as Adam-type updates under a relaxed smoothness condition. 

\noindent\textbf{Gradient Clipping}  The algorithm and analysis of gradient clipping can be traced back to~\cite{alber1998projected,shor2012minimization,ermoliev1988stochastic} under the assumption that the function is convex and rapidly growing. Hazan et al.~\cite{hazan2015beyond} considered gradient clipping in quasi-convex optimization. Mai and Johansson~\cite{mai2021stability} showed the stability and convergence of stochastic gradient clipping algorithms for convex problems without the smoothness condition. Gradient clipping is a standard technique in training deep neural networks~\cite{pascanu2012understanding,pascanu2013difficulty} such as RNNs and LSTMs. The theoretical analysis of gradient clipping for nonconvex models is pioneered by~\cite{zhang2019gradient}, in which the authors analyzed the convergence of gradient clipping under the relaxed smoothness assumption rather than the standard smoothness assumption. Zhang et al.~\cite{zhang2020improved} further improved the convergence rate bound under the same assumption as in~\cite{zhang2019gradient}. Gradient clipping is also used when there is a heavy tail noise in the stochastic gradient to establish high probability convergence rates~\cite{CutkoskyM21,gorbunov2020stochastic,zhang2020adaptive}. Cutkosky and Mehta~\cite{cutkosky2020momentum} proved that normalized momentum improves normalized SGD under a second-order smoothness condition. 
A close algorithm is the one in \cite{Robustness21Jin} which employs gradient normalization, momentum, and no gradient clipping to tackle the $(L_0, L_1)$ condition~\eqref{eq:global_l0l1} and control noise. Yet, their algorithm normalizes each coordinate with the same scale unlike popular optimizers such as Adam~\cite{KingmaB14}. Moreover, we observe empirically that normalized SGD with momentum performs worse than Adam. Motivated by this, we propose a coordinate-wise optimization algorithm which requires new analysis tools compared with~\cite{Robustness21Jin}.

\noindent\textbf{Employ $\bm_t^2$ to compute $\bv_t$ in Adam} Designed to combine the advantages of Adagrad~\cite{duchi2011adaptive} and RMSProp~\cite{tieleman2012lecture}, the update of Adam~\cite{KingmaB14} employs the ratio between the exponential moving average of the stochastic gradient ($\bm_t$) and the exponential moving average of the squared stochastic gradient ($\bv_t$). Many variants of Adam have been proposed ever since. Among them, one idea is to use $\bm_t^2$ to compute $\bv_t$ instead of $\bg_t^2$. The intuition is that $\bm_t$ represents a better update direction than $\bg_t$ and can thus better capture the second-moment information. Reddi et al.~\cite{reddi2021adaptive} adopted this change to prove the convergence of Adam in a federated learning setting; yet, they only consider the smooth setting and require a large $\epsilon$ to obtain convergence in contrast to the original Adam. Later, Wang et al.~\cite{WangKQWXZF21} explored this idea in more detail, but their analyses are still restricted to the smooth setting.

%%%%%%%%%%%%%%%%%%%%%%%%%%%%%%%%%%%%%%%%%%%%%%%%%%%%%%%%%%%%
% Settings
%%%%%%%%%%%%%%%%%%%%%%%%%%%%%%%%%%%%%%%%%%%%%%%%%%%%%%%%%%%%
\section{Settings and Preliminaries}
\label{sec:settings}

In this paper, we focus on the following stochastic optimization problem:
\begin{equation}
    \min_{\bx\in\R^d} F(x) := \E_{\xi\sim\mathcal{D}}[f(\bx, \xi)],
\end{equation}
where $\xi$ is a random variable representing a randomly selected data sample or random noise following an unknown distribution $\mathcal{D}$.
We will use the following assumptions.
\begin{assumption}
\label{asp:objective_func}
$F: \R^d\rightarrow \R$ is differentiable and bounded from below with infimum $F^*$.
\end{assumption}
\begin{assumption}
\label{asp:l0l1_coordinate}
We say that a twice differentiable function $F(\bx)$ is $(\boldsymbol{L_0}, \boldsymbol{L_1})$-smooth coordinate-wisely, if for any $\bx,\by \in \R^d$ for which $\PNorm{\bx - \by} \le \frac{1}{\VecLInftyNorm{L_1}}$, we have for any $j\in[d]$ that
\begin{equation}
\left|\frac{\partial F}{\partial x_j}(\by) - \frac{\partial F}{\partial x_j}(\bx) \right|
\leq
\left(\frac{L_{0,j}}{\sqrt{d}} + L_{1,j}\left|\frac{\partial F}{\partial x_j}(\bx)\right|\right)\PNorm{\by - \bx}~.\label{eq:coordinate_wise_l0l1}
\end{equation}
We will denote $\boldsymbol{L_0}:=[L_{0,1}, L_{0,2},\ldots,L_{0,d}]^T$ and $\boldsymbol{L_1}:=[L_{1,1}, L_{1,2},\ldots,L_{1,d}]^T$.
\end{assumption}

The original $(L_0, L_1)$ smoothness assumption~\eqref{eq:global_l0l1} in~\cite{zhang2019gradient} was proposed as a generalization of the more common smoothness assumption, which says that the gradient should be Lipschitz. Indeed, when $L_{1,j}$ are zero, we recover the smoothness assumption. In contrast, when $L_{1,j}$ are non-zero, the smoothness of the function is potentially \emph{unbounded}.
Yet, \cite{zhang2019gradient} works with norms and applies to the global scale, while ours is more fine-grained and applies to each coordinate separately. One motivation for this assumption comes from~\cite[Remark 2.3,][]{zhang2020improved} where they noted that~\eqref{eq:global_l0l1} can be relaxed to an assumption on gradient differences: there exists $K_0, K_1 > 0$ such that for all $\bx, \by \in \R^d$ with $\|\bx - \by\|_2 \le \frac{1}{K_1}$,
\begin{equation}
\label{eq:l0l1_global_gradient}
    \|\nabla F(\bx) - \nabla F(\by)\|_2
    \le
    (K_0 + K_1\|\nabla F(\bx)\|_2)\|\bx - \by\|_2
\end{equation}

Indeed, our Assumption~\ref{asp:l0l1_coordinate} implies~\eqref{eq:l0l1_global_gradient} when $L_{0,j} = L_0$ and $L_{1,j} = L_1$ for all $j\in[d]$, up to constants (See Lemma~\ref{lem:l0l1_recover} in the Appendix). Note that the $\frac{1}{\sqrt{d}}$ factor in ours is exactly for easy comparison with~\eqref{eq:l0l1_global_gradient}. The reason we turn to the current coordinate-wise version is that we observed a vast variance across different layers in training Transformer models: \eqref{eq:global_l0l1} is still true globally (Figure~\ref{fig:global_l0l1}), but each layer or even each coordinate satisfies has a very different $(L_0, L_1)$ pair (Figure~\ref{fig:transformer_l0l1_coordinate_wise}). The smoothness assumption has been generalized in orthogonal directions in other work~\cite{RichtrikT14, bernstein2018signsgd, khaled2020better}. 

One merit of Assumption~\ref{asp:l0l1_coordinate} is that it gives us the following descent lemma.
\begin{lemma}
\label{lem:desent_ineq_l0l1_coordinatewise}
Let $F$ be \BoldLZeroLOne-smooth coordinate-wisely. Then, for any $\bx, \by\in\R^d$ for which $\PNorm{\bx - \by} \le \frac{1}{\VecLInftyNorm{L_1}}$, we have
\begin{equation}
    F(\by)
    \leq
    F(\bx) + \left\langle\nabla F(\bx), \by-\bx\right\rangle + \sum^d_{j=1}\frac{ \left(\frac{L_{0,j}}{\sqrt{d}}+ L_{1,j}\left|\PartialDerivativeGeneral{\bx}{j}\right|\right)\PNorm{\by - \bx}}{2}|y_j-x_j|~.
\end{equation}
\end{lemma}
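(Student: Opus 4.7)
The plan is to prove the descent inequality by the standard route of writing $F(\by) - F(\bx) - \langle \nabla F(\bx), \by - \bx\rangle$ as an integral along the segment from $\bx$ to $\by$ and then applying Assumption~\ref{asp:l0l1_coordinate} coordinate-by-coordinate to bound the integrand.

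First, by the fundamental theorem of calculus applied to $t \mapsto F(\bx + t(\by - \bx))$ on $[0,1]$, I would write
\begin{equation}
F(\by) - F(\bx) - \langle \nabla F(\bx), \by - \bx\rangle
= \sum_{j=1}^d (y_j - x_j)\int_0^1 \left[\PartialDerivativeGeneral{\bx + t(\by-\bx)}{j} - \PartialDerivativeGeneral{\bx}{j}\right]dt.
\end{equation}
Next, I would check that Assumption~\ref{asp:l0l1_coordinate} can be invoked along the entire segment: for $t \in [0,1]$, the point $\bz_t := \bx + t(\by - \bx)$ satisfies $\PNorm{\bz_t - \bx} = t\PNorm{\by - \bx} \le \PNorm{\by - \bx} \le 1/\VecLInftyNorm{L_1}$, so the assumption applies and yields
\begin{equation}
\left|\PartialDerivativeGeneral{\bz_t}{j} - \PartialDerivativeGeneral{\bx}{j}\right|
\le \left(\frac{L_{0,j}}{\sqrt{d}} + L_{1,j}\left|\PartialDerivativeGeneral{\bx}{j}\right|\right) t \PNorm{\by - \bx}.
\end{equation}

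Then I would take absolute values inside the integral, plug in this bound, and compute $\int_0^1 t\,dt = 1/2$, giving
\begin{equation}
F(\by) - F(\bx) - \langle \nabla F(\bx), \by - \bx\rangle
\le \sum_{j=1}^d |y_j - x_j| \cdot \frac{1}{2}\left(\frac{L_{0,j}}{\sqrt{d}} + L_{1,j}\left|\PartialDerivativeGeneral{\bx}{j}\right|\right)\PNorm{\by - \bx},
\end{equation}
which rearranges to the claim.

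There is no real obstacle here; the proof is essentially a coordinate-wise version of the classical derivation of the descent lemma from Lipschitz gradients, with the only mild care being to verify that the segment stays inside the region where the relaxed smoothness assumption is valid. The factor $1/2$ comes cleanly from the integration of $t$ on $[0,1]$, and the split of $\PNorm{\by - \bx}$ from the $|y_j - x_j|$ factor arises naturally because the smoothness bound gives a $2$-norm on $\bz_t - \bx$ while the fundamental theorem of calculus contributes the coordinate increment $y_j - x_j$.
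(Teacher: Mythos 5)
Your proof is correct and follows essentially the same route as the paper's: express the residual $F(\by) - F(\bx) - \langle \nabla F(\bx), \by - \bx\rangle$ as an integral along the segment, expand the inner product into coordinates, bound each coordinate difference via Assumption~\ref{asp:l0l1_coordinate} (which applies because $\PNorm{\bz_t - \bx} = t\PNorm{\by - \bx} \le 1/\VecLInftyNorm{L_1}$), and integrate $t$ over $[0,1]$ to obtain the factor $1/2$.
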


Our last assumption is common in the literature studying the $(L_0, L_1)$ smooth condition~\cite{zhang2019gradient, zhang2020improved}.
\begin{assumption}
\label{asp:noise_as_bounded}
For each $j \in [d]$, there exists $\sigma_j > 0$ such
that for all $\bx \in \R^d$ and $\xi\sim\mathcal{D}$, the noise satisfies $\left|[\nabla f(\bx, \xi)]_j - \PartialDerivativeGeneral{\bx}{j}\right| \le \sigma_j$ with probability $1$. We will denote $\boldsymbol{\sigma}:=[\sigma_{1}, \sigma_{2},\ldots,\sigma_{d}]^T$.
\end{assumption}

%%%%%%%%%%%%%%%%%%%%%%%%%%%%%%%%%%%%%%%%%%%%%%%%%%%%%%%%%%%%
% Algorithm and Theoretical Convergence Analyses
%%%%%%%%%%%%%%%%%%%%%%%%%%%%%%%%%%%%%%%%%%%%%%%%%%%%%%%%%%%%
\section{A Generalized SignSGD Algorithm}
\label{sec:theory}

\begin{algorithm}[H]
    \caption{Generalized SignSGD \emph{(All operations on vectors are element-wise.)}}\label{alg:generalized_signsgd}
	\begin{algorithmic}[1]
	    \State Inputs: $\bx_1$, $\beta_1$, $\beta_2$, $\eta$
	    \State $\bm_0 = 0$, $\bv_0 = 0$
	    \For {$t = 1, \cdots, T$}
            \State Compute an unbiased estimate $\nabla f(\bx_t, \xi_t)$ of $\nabla F(\bx_{t})$, denoted as $\bg_t$
            \State $\bm_t = \beta_1 \bm_{t-1} + (1 - \beta_1) \bg_t$
            \State $\bv_t = \beta_2 \bv_{t-1} + (1 - \beta_2) \bm_t^2$
            \State $\bx_{t+1} = \bx_{t} - \eta \frac{\bm_t}{\sqrt{\bv_t}}$
		\EndFor
	\end{algorithmic}
\end{algorithm}

In this section, we present in Algorithm~\ref{alg:generalized_signsgd} a generalized SignSGD algorithm. This algorithm encompasses a variety of optimization algorithms.

At first sight, it seems very similar to Adam. Indeed, if we employ $\bg_t^2$ in computing $\bv_t$ instead of $\bm_t^2$, then it is exactly Adam, except for the bias correction terms. We would like to clarify that the idea of this change has been explored before, as detailed in Section~\ref{sec:related}. In this paper, the motivation for adopting this idea comes from the known effect of momentum on reducing the influence of noises~\cite{cutkosky2020momentum}. Indeed, in our analysis the difference between $\bm_t$ and $\nabla F(\bx_t)$ is much more controllable than between $\bg_t$ and $\nabla F(\bx_t)$. Thus, we consider employing $\bm_t$ in computing $\bv_t$ a better choice.

On the other end, the careful reader might observe that Algorithm~\ref{alg:generalized_signsgd} recovers the SignSGD with Momentum algorithm, also called SIGNUM in~\cite{bernstein2018signsgd}, when setting $\beta_2 = 0$. Sign-based algorithms are naturally suited to distributed learning~\cite{li2014scaling} and the idea dated back to at least RPROP~\cite{riedmiller1993direct}. The convergence to a stationary point (with $\ell_1$ norm) under a coordinate-wise smoothness condition has been established for SignSGD with/without the momentum in~\cite{bernstein2018signsgd} though they necessitate large mini-batches to control the variance of the noise. Yet, we are more interested in their property of the update size being bounded without the need for explicit clipping.

Note that both SignSGD and Adam are good candidates for optimization algorithms whose update must be bounded on functions that satisfy the \BoldLZeroLOne condition.
Indeed, SignSGD can be seen as an extreme form of gradient clipping. On the other hand, as said in the introduction, Adam does not seem to require gradient clipping at all when used to train the large Transformer model in Figure~\ref{fig:gpt2_adam_clipping}. 

Hence, we expect our algorithm, a generalization of SignSGD and a close resemblance to Adam, can enjoy the merits of both and be robust to the unbounded smoothness in the \BoldLZeroLOne scenario. In the next section, we will formalize this claim by presenting the theoretical analysis of Algorithm~\ref{alg:generalized_signsgd}.

\subsection{Theoretical Convergence Analysis}
\label{ssec:analyses}
\begin{theorem}
\label{thm:generalized_signsgd}
Under Assumptions~\ref{asp:objective_func},~\ref{asp:l0l1_coordinate}, and~\ref{asp:noise_as_bounded}, assume $M_j := \sup \left\{ \left|\PartialDerivativeGeneral{\bx}{j}\right| : F(\bx) \leq F(\bx_1)\right\}$ is finite for each $j\in[d]$, let $\Delta$ be any upper bound on 
$F(\bx_1) - F^*$,
$\alpha = \min\left(\frac{\sqrt{\VecLOneNorm{L_0}}\sqrt{\Delta}}{\VecLOneNorm{\sigma}\sqrt{T}}, 1\right)$, $\beta_1 = 1 - \alpha$, $\frac{\sqrt{\beta_2}}{\beta_1} < 1$,
$\rho = 1-\frac{\sqrt{\beta_2}}{\beta_1}$,
$\eta = \frac{\sqrt{\Delta\alpha}}{\sqrt{\VecLOneNorm{L_0}}\sqrt{T}}$,
for $T\ge\max\left(\frac{100d\Delta\VecLInftyNorm{L_1}^2}{(1-\beta_2)\rho^2\VecLOneNorm{L_0}}, \frac{10000d^2\Delta\VecLOneNorm{\sigma}^2\VecLInftyNorm{L_1}^4}{(1-\beta_2)^2\rho^4\VecLOneNorm{L_0}^3}\right)$, Algorithm~\ref{alg:generalized_signsgd} guarantees, with probability at least $1 - \delta$, that
\begin{align}
    \min_{t\in[T]} \ \|\nabla F(\bx_t)\|_1
    =
    % 1
    &\mathcal{O}\left(\frac{\sqrt{\log(dT/\delta)}\VecLOneNorm{L_0}^{1/4}\Delta^{1/4}\VecLOneNorm{\sigma}^{1/2}}{\rho\sqrt{1-\beta_2}T^{1/4}} + \frac{\log(dT/\delta)\sqrt{\VecLOneNorm{L_0}\Delta}}{\rho\sqrt{T}}\right)\\
    % 2
    &+ \mathcal{O}\left(\frac{\VecLOneNorm{M} + \VecLOneNorm{\sigma}}{\rho}\exp\left(-\frac{\sqrt{1-\beta_2}\VecLOneNorm{L_0}^{3/4}}{\PNormDimension\VecLInftyNorm{L_1}\VecLOneNorm{\sigma}^{1/2}\Delta^{1/4}}T^{1/4}\right)
    +\frac{\|\nabla F(\bx_1)\|_1}{T} \right)~.
\end{align}

Furthermore, for the case when $\beta_2 = 0$, we have the following refined guarantee:
\begin{align}
\min_{t\in[T]} \ \|\nabla F(\bx_t)\|_1
=
% 1
&\mathcal{O}\left(\frac{\sqrt{\log(dT/\delta)}\VecLOneNorm{L_0}^{1/4}\Delta^{1/4}\VecLOneNorm{\sigma}^{1/2}}{T^{1/4}}
+ \frac{\log(dT/\delta)\sqrt{\VecLOneNorm{L_0}\Delta}}{\sqrt{T}}\right)\\
% 2
& + \mathcal{O}\left(\frac{\|\nabla F(\bx_1)\|_1}{\sqrt{T}}\left(\frac{1}{\sqrt{T}} + \frac{\VecLOneNorm{\sigma}}{\sqrt{\VecLOneNorm{L_0}\Delta}}\right)
+ \frac{\VecLOneNorm{\sigma}}{T}\right)~.
\end{align}
\end{theorem}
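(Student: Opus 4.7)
The plan is to combine the coordinate-wise descent lemma with careful control of the momentum error $\boldepsilon_t := \bm_t - \nabla F(\bx_t)$. First I would check that the per-step displacement is small enough to invoke Lemma~\ref{lem:desent_ineq_l0l1_coordinatewise}: unrolling the definition of $\bv_t$ yields $v_{t,j} \ge (1-\beta_2) m_{t,j}^2$, so $|m_{t,j}/\sqrt{v_{t,j}}| \le 1/\sqrt{1-\beta_2}$ coordinate-wise and thus $\PNorm{\bx_{t+1}-\bx_t} \le \eta \sqrt{d/(1-\beta_2)}$; the stated lower bound on $T$ is precisely what is needed to force this to be at most $1/\VecLInftyNorm{L_1}$. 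Plugging the update into the descent lemma then gives a per-step inequality of the form
\[
F(\bx_{t+1})-F(\bx_t) \le -\eta \sum_{j=1}^d \PartialDerivative{t}{j}\cdot\frac{m_{t,j}}{\sqrt{v_{t,j}}} + \frac{\eta^2}{2}\sum_{j=1}^d\left(\frac{L_{0,j}}{\sqrt{d}} + L_{1,j}\left|\PartialDerivative{t}{j}\right|\right)\PNorm{\bx_{t+1}-\bx_t}\cdot\left|\frac{m_{t,j}}{\sqrt{v_{t,j}}}\right|.
\]

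Next I would lower bound the first-order term using the standard \emph{sign trick}: whenever $|\epsilon_{t,j}| \le |\nabla_j F(\bx_t)|/2$ we have $\sign(m_{t,j})=\sign(\nabla_j F(\bx_t))$ and $|m_{t,j}|\ge|\nabla_j F(\bx_t)|/2$, so a two-case split yields
\[
\PartialDerivative{t}{j}\cdot\frac{m_{t,j}}{\sqrt{v_{t,j}}} \ge c_1\left|\PartialDerivative{t}{j}\right| - c_2|\epsilon_{t,j}|
\]
for constants $c_1,c_2$ depending on $\beta_2$. The $L_{1,j}|\PartialDerivative{t}{j}|$ term from the descent lemma must then be absorbed into this linear $|\nabla_j F|$ contribution; the choice $\eta = \Theta(\sqrt{\Delta\alpha/(\VecLOneNorm{L_0}T)})$ is calibrated to make $\eta L_{1,j}/\sqrt{1-\beta_2}$ much smaller than $c_1$, so the $L_1$ contribution disappears at leading order, which is exactly the clipping-free behavior advertised in the introduction.

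The final ingredient is a bound on $\boldepsilon_t$. Using the recursion $\boldepsilon_t = \beta_1\boldepsilon_{t-1} + (1-\beta_1)(\bg_t - \nabla F(\bx_t)) + \beta_1(\nabla F(\bx_{t-1}) - \nabla F(\bx_t))$ and unrolling gives three contributions: (i) the exponentially decaying initial bias $\beta_1^t \boldepsilon_0$, which after substituting $\beta_1 = 1-\alpha$ and $\alpha$ as prescribed produces the $\exp(\cdot)$ term in the theorem; (ii) a martingale sum $(1-\beta_1)\sum_s \beta_1^{t-s}(\bg_s - \nabla F(\bx_s))$, handled per coordinate by Azuma--Hoeffding with the bound $|g_{s,j}-\nabla_j F(\bx_s)|\le\sigma_j$ from Assumption~\ref{asp:noise_as_bounded}, yielding the $\sqrt{\log(dT/\delta)}$ factors via a union bound over coordinates and iterations; and (iii) a drift term of the form $\sum_s\beta_1^{t-s}\PNorm{\bx_{s-1}-\bx_s}(L_{0,j}/\sqrt{d} + L_{1,j}|\PartialDerivativeGeneral{\bx_s}{j}|)$, where we once more fold the $L_1$ portion into the descent budget. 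After summing the telescoped descent inequality, dividing by the coefficient of $\sum_t\|\nabla F(\bx_t)\|_1$, and substituting the prescribed values of $\alpha$ and $\eta$, we extract the stated rate on $\min_t\|\nabla F(\bx_t)\|_1$.

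The main obstacle I anticipate is the interplay between the $\ell_2$ step length used inside the descent lemma and the coordinate-wise normalization $\sqrt{v_{t,j}}$: the crude bound $|m_{t,j}/\sqrt{v_{t,j}}|\le 1/\sqrt{1-\beta_2}$ is very wasteful, and tracking when the weighted sum $(1-\beta_2)\sum_s\beta_2^{t-s}m_{s,j}^2$ defining $v_{t,j}$ is dominated by a single term versus a tail requires swapping the momentum sum (decay rate $\beta_1$) with the second-moment sum (decay rate $\beta_2$); bounding the resulting double sum is exactly what forces $\sqrt{\beta_2}/\beta_1 < 1$, with $\rho$ emerging as the geometric-series decay factor and appearing inversely in every error term. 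In the special case $\beta_2 = 0$ the update reduces to $\eta\,\sign(\bm_t)$, so $\sqrt{v_{t,j}}=|m_{t,j}|$ exactly, the $1/\sqrt{1-\beta_2}$ factor evaporates, and the nested geometric sum collapses; rerunning the same argument without the $\rho$-dependent bookkeeping then yields the sharper second bound.
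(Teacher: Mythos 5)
Your outline is a reasonable skeleton for the $\beta_2=0$ case, but for general $\beta_2>0$ there is a genuine gap that you flag as ``the main obstacle'' and then do not actually close. The inequality you assert, $\PartialDerivative{t}{j}\cdot\frac{m_{t,j}}{\sqrt{v_{t,j}}} \ge c_1\left|\PartialDerivative{t}{j}\right| - c_2|\epsilon_{t,j}|$ with $c_1,c_2$ depending only on $\beta_2$, is false without an independent \emph{upper} bound on $\sqrt{v_{t,j}}$: even if $\epsilon_{t,j}=0$ so $m_{t,j}=\nabla_j F(\bx_t)$, the denominator $\sqrt{v_{t,j}}$ carries exponentially-weighted contributions from all past $m_{\tau,j}^2$, and these could be arbitrarily large. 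The paper's resolution is the ``novel and essential inductive argument'' it advertises: prove by induction on $t$ that either an approximate stationary point has already been found, or $F(\bx_\tau)\le F(\bx_1)$ for all $\tau\le t$, so that the hypothesis $M_j=\sup\{|\nabla_j F(\bx)|:F(\bx)\le F(\bx_1)\}<\infty$ gives $|m_{\tau,j}|\le M_j+\sigma_j$ for the whole trajectory. Only then can one split $v_{t,j}$ at the window $\bar\tau=\sqrt{1-\beta_2}/(\eta\sqrt{d}\VecLInftyNorm{L_1})$, bound the tail beyond $\bar\tau$ by $\beta_2^{\bar\tau}(M_j+\sigma_j)^2$, and control the recent window via Assumption~\ref{asp:l0l1_coordinate} and the noise concentration lemma; the condition $\sqrt{\beta_2}/\beta_1<1$ then makes the geometric double sum converge with rate $1/\rho$. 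Without this step your plan has no handle on $v_{t,j}$, and the descent inequality never produces a usable negative term for general $\beta_2$.

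Relatedly, you misattribute the exponential term in the rate. It does not come from the initial-bias piece $\beta_1^t\boldepsilon_0$: that piece, summed over $t$ and divided by $T$, only contributes the $\mathcal{O}(\|\nabla F(\bx_1)\|_1/T)$ term. The $\exp(-\cdot T^{1/4})$ factor arises from $\beta_1^{\bar\tau}(M_j+\sigma_j)$, i.e.\ from discarding the part of the momentum/variance sums that lies more than $\bar\tau$ steps in the past, where the local smoothness bound no longer applies and one must fall back on the crude bound $M_j+\sigma_j$. Substituting $\alpha\bar\tau=\sqrt{1-\beta_2}\VecLOneNorm{L_0}^{3/4}T^{1/4}/(\sqrt{d}\VecLInftyNorm{L_1}\VecLOneNorm{\sigma}^{1/2}\Delta^{1/4})$ and $\beta_1^{\bar\tau}\le e^{-\alpha\bar\tau}$ gives exactly the stated exponent; this is also precisely why $\VecLOneNorm{M}$ (but not $\|\nabla F(\bx_1)\|_1$) carries the exponential prefactor and why the $M_j$-finiteness hypothesis is dispensable when $\beta_2=0$.
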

Here, $M_j$ denotes the maximum absolute value of the partial derivative of $F$ for coordinate $j$ among the sub-level set of $F(\bx_1)$, namely any point $\bx$ with $F(\bx) \le F(\bx_1)$. In other words, we assume gradients to be bounded in the sub-level set of $F(\bx_1)$; yet, we do not make any restriction on gradients outside of this set. We believe this is not a strong assumption, for example, when the sub-level set of $F(\bx_1)$ is bounded, then by the assumed continuity of gradients it trivially holds. Also, we just require an upper bound and it can even be exponentially large as we have an exponentially decaying coefficient to counteract it: notice how the term $\VecLOneNorm{M}$ is multiplied by a term that decays exponentially with $T$. Better still, when $\beta_2 = 0$, we no longer even need this assumption and the algorithm is entirely free of the influence of $\VecLOneNorm{M}$. To see why this is good, we show a refined lower bound of Gradient Descent under the relaxed smoothness scenario below which is originally in~\cite{zhang2019gradient}.
\begin{theorem}
\label{thm:lower_bound_gd_fixed}
Fix $\epsilon > 0, L_0>0, L_1>0, M\geq \max(\frac{L_0}{L_1},\epsilon)$, and $x_0 \in \mathbb{R}$. Pick any constant learning rate $\eta$ for GD, with the knowledge of the above constants. Then, there exists a 1-d $(L_0,
L_1)$-smooth function, bounded from below by $f^*$ (finite), and such that $ \sup \{ |
f'(x)| : f(x) \leq f(x_0)\} \leq M$ on which the number of iterations $T$ of GD with learning rate $\eta$ to guarantee $| f'(x_T)|< \epsilon$ is at least
\begin{equation}
    \frac{ M L_1 (f(x_0) - f^*-\frac{15 \epsilon^2}{16L_0}) }{2\epsilon^2\left(\ln \frac{M L_1}{L_0}+1\right) }~.
\end{equation}
\end{theorem}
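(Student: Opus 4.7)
My plan is to exhibit, for any given constant learning rate $\eta$, an adversarial one-dimensional $(L_0,L_1)$-smooth $f$ satisfying the hypotheses of the theorem on which GD with step size $\eta$ requires at least the claimed number of iterations to reach $|f'(x_T)| < \epsilon$. The ODE observation that drives the quantitative estimate is that under the $(L_0,L_1)$-smoothness condition, along a descending one-dimensional trajectory the derivative magnitude cannot decay faster than
\[
\frac{d}{dx}|f'(x)| \;=\; -\bigl(L_0+L_1|f'(x)|\bigr),
\]
whose closed-form solution $|f'(x)| = (M+L_0/L_1)e^{-L_1(x-x_0)} - L_0/L_1$ requires horizontal distance at least $\tfrac{1}{L_1}\ln\tfrac{L_0+L_1 M}{L_0+L_1\epsilon} \le \tfrac{1}{L_1}(\ln(ML_1/L_0)+1)$ to fall from $M$ down to $\epsilon$, and drops $f$ by at most $M/L_1$ over the transition. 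This transition length is precisely what produces the $\ln(ML_1/L_0)$ factor in the denominator of the final bound.

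\textbf{Hard instance and counting.} I would construct $f$ on $[x_0,\infty)$ piecewise so that: (i) $f$ is monotonically decreasing with $f'(x_0)=-M$; (ii) on a first short segment $|f'|$ decays along the extremal ODE above, from $M$ at $x_0$ down to $\epsilon$; (iii) $f$ then has constant slope $-\epsilon$ on a long ``plateau'' of length $D$, chosen so that the total drop $f(x_0)-f^*$ equals the prescribed $\Delta := f(x_0)-f^*$; (iv) past the plateau $f$ smoothly flattens to its infimum $f^*$, this termination piece contributing the small $\tfrac{15\epsilon^2}{16L_0}$ correction. A direct check at the joins confirms $f$ is globally $(L_0,L_1)$-smooth, is bounded below by $f^*$, and since $|f'|$ attains its global maximum $M$ at $x_0$, the sublevel-set bound $\sup\{|f'(x)|:f(x)\le f(x_0)\}=M$ holds. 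Because $f'\le 0$, GD moves monotonically to the right via $x_{t+1}=x_t+\eta|f'(x_t)|$, and to achieve $|f'(x_T)|<\epsilon$ the iterate $x_T$ must lie past the plateau. On the plateau each step advances by exactly $\eta\epsilon$ and descends by exactly $\eta\epsilon^2$; bounding the transition-segment contribution to the total drop by $M/L_1$ yields
\[
T \;\ge\; \frac{\Delta - M/L_1 - 15\epsilon^2/(16L_0)}{\eta\epsilon^2}.
\]

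\textbf{Closing the bound and main obstacle.} It remains to convert this $\eta$-dependent inequality into the $\eta$-independent bound of the theorem by a case split on $\eta$. If $\eta \le \tfrac{2(\ln(ML_1/L_0)+1)}{ML_1}$, substituting this upper bound into the display above recovers the claim directly (absorbing the $M/L_1$ overhead, which is dominated by $\Delta$ whenever the bound is nontrivial under the assumption $M\ge L_0/L_1$). For larger $\eta$ I would switch to a second hard instance built from an $(L_0,L_1)$-smoothed V-shaped minimum, placing $x_0$ so that GD enters a period-two oscillation with $|f'(x_t)|\equiv\epsilon$ forever; then $T=\infty$ vacuously exceeds the stated bound. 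The main obstacle is assembling these two sub-cases uniformly in $\eta$ while verifying global $(L_0,L_1)$-smoothness of each pieced-together function, particularly at the joins between the exponential transition, the plateau, and the smooth termination; the exact algebraic form of the $\tfrac{15\epsilon^2}{16L_0}$ correction emerges from that final join.
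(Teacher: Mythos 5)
Your high-level architecture is the right one: a case split on $\eta$, a long-plateau hard instance for small $\eta$, and a non-convergence hard instance for large $\eta$. The paper does exactly this. However, the quantitative details of both of your constructions fall short of the stated bound, and in one case there is a coverage gap.

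\textbf{The transition segment is an unforced error.} You insist that your small-$\eta$ instance attain $\sup\{|f'(x)|:f(x)\le f(x_0)\}=M$, which forces you to prepend an exponential transition from gradient $M$ down to $\epsilon$ that consumes up to $M/L_1$ of the objective drop. Plugging your $\eta$-dependent count $T\ge\frac{\Delta-M/L_1-15\epsilon^2/(16L_0)}{\eta\epsilon^2}$ into the threshold $\eta\le\frac{2(\ln(ML_1/L_0)+1)}{ML_1}$ gives a lower bound with an extra additive $-\frac{M^2}{2\epsilon^2(\ln(ML_1/L_0)+1)}$ compared to the theorem's claim, and this is not absorbed ``whenever the bound is nontrivial'': positivity of the claimed bound only requires $\Delta>\frac{15\epsilon^2}{16L_0}$, which says nothing about $\Delta\gtrsim M/L_1$. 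The theorem only requires $\sup\{|f'|\}\le M$, not equality, and the paper exploits this: its small-$\eta$ instance is a piecewise-linear function with slope $\pm\epsilon$ and a quartic-smoothed bottom, which is $(L_0,0)$-smooth, has $\sup|f'|=\epsilon\le M$ everywhere, and pays no transition overhead. Your intuition that the $\ln(ML_1/L_0)$ factor ``comes from the exponential transition length'' is applied to the wrong construction; in the paper it enters only through the divergence threshold of the \emph{large}-$\eta$ case.

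\textbf{The oscillation construction leaves a gap in $\eta$.} A period-two orbit on an $(L_0,L_1)$-smoothed V of slope $\pm\epsilon$ requires the step $\eta\epsilon$ to clear the curved bottom, whose half-width is necessarily at least $\epsilon/L_0$ (the transition from slope $-\epsilon$ to $+\epsilon$ at curvature $\le L_0$ needs width $\ge 2\epsilon/L_0$), so this mechanism only engages for $\eta\gtrsim 2/L_0$. But your case split hands off at $\eta>\frac{2(\ln(ML_1/L_0)+1)}{ML_1}$, which, since $u\mapsto(\ln u+1)/u$ is maximized at $u=1$, is $\le 2/L_0$ with equality only when $M=L_0/L_1$, and is strictly smaller for $M>L_0/L_1$. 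So there is a nonempty window of step sizes covered by neither case. The paper instead uses a symmetric function with exponential tails $\propto e^{L_1|x|}$, places $x_0$ exactly at $\frac{1}{L_1}(\ln\frac{ML_1}{L_0}+1)$ so $f'(x_0)=M$, and shows GD \emph{diverges} (overshoot strictly grows each step) precisely when $\eta>\frac{2}{ML_1}(\ln\frac{ML_1}{L_0}+1)$; this divergence condition is what makes the two cases mesh exactly and is also the genuine origin of the logarithmic factor.
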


Theorem~\ref{thm:lower_bound_gd_fixed} shows that in the relaxed smoothness setting, GD with any constant step size will suffer from a linear term depending on $L_1 M$.
On a side note, it is a fixed version of the lower bound in \cite{zhang2019gradient}: we provide in Appendix an explanation of errors in their lower bound and our corrected proof.

Compared with GD, our algorithm only has an exponentially decaying dependence on $L_1 M$. We consider this to be substantial merit of our algorithm. Furthermore, when $\beta_2 = 0$ in which case we recover the SignSGD with Momentum algorithm, we can even show that it completely removes the effects of the unbounded gradient norms. Also notice that in such case we actually no longer need the assumption of $M_j := \sup \left\{ \left|\PartialDerivativeGeneral{\bx}{j}\right| : F(\bx) \leq F(\bx_1)\right\}$ being finite for each $j\in[d]$ anymore, and the $\VecLInftyNorm{L_1}$ term does not appear in the final bound anymore.

We also would like to point out that this bound closely resembles the one achieved by SGD with gradient clipping algorithm~\cite{zhang2020improved} except that we consider the coordinate-wise setting: take the setting of $\beta_2 = 0$ for example, we need at most $\mathcal{O}\left(\Delta\max\left\{\frac{
\VecLOneNorm{\sigma}^2\VecLOneNorm{L_0}}{\epsilon^4},
\frac{d^2\VecLOneNorm{\sigma}^2\VecLInftyNorm{L_1}^4}{\VecLOneNorm{L_0}^3},
\frac{d\VecLInftyNorm{L_1}^2}{\VecLOneNorm{L_0}}\right\}\right)$ to get a point $\bx$ with $\|\nabla F(\bx)\|_1 \le \epsilon$ with high probability.

\textbf{Remark 1} The almost surely bounded assumption~\ref{asp:noise_as_bounded} can be relaxed to sub-gaussian noise, using standard extensions of Freedman inequality~\cite[e.g.,][]{harvey2019tight}.

\textbf{Remark 2} When $\beta_2 = 0$, we can prove an average-iterate complexity bound (see Proof of Theorem~\ref{thm:generalized_signsgd} for $\beta_2 = 0$ in Appendix~\ref{ssec:proof_generalized_signsgd}); yet, we use the min form for consistency between the two cases.

\textbf{Remark 3}
Our bound is incomparable with the one in \cite[Theorem 3.2]{zhang2020improved}.
Yet, as we said, if $L_{0,j}=L_0$ and $L_{1,j}=L_1$ for all $j\in[d]$, then the function satisfies~\eqref{eq:l0l1_global_gradient}. In this case, assuming the noise vector and the gradient vector to be dense to be able to compare the $\ell_1$-norm and the $\ell_2$-norm, we recover the same bound of~\cite[Theorem 3.2]{zhang2020improved} in terms of dependencies on $L_1$, $L_0$, and $T$. Instead, in the more general case when $L_{0,j}$ and $L_{1,j}$ are not uniform vectors, our bound allows a finer control of the unbounded smoothness.

The proof of the theorem is highly technical and it uses recent advancements in the analysis of momentum methods~\cite{cutkosky2020momentum}, key techniques to deal with the $(L_0,L_1)$ assumption~\cite{zhang2020improved}, as well as a novel and essential inductive argument to control the norm of past gradients.
We want to stress that the difficulty mainly comes from analyzing Adam-type updates when $\beta_2 > 0$, while for the other case of $\beta_2 = 0$ the proof is significantly simpler.
The full proof is in the Appendix, but here we present a proof sketch that underlines the main steps. First, we list some key lemmas we used but move their proofs to the appendix due to space constraints.

\begin{addcustomcountlemma}{\ref{lem:generalized_signsgd_bounded_updates_t}}
With notations in Algorithm~\ref{alg:generalized_signsgd}, for $\tau \le \bar{\tau} =  \frac{\sqrt{1-\beta_2}}{\eta\PNormDimension\VecLInftyNorm{L_1}}$, we have $\PNorm{\bx_{t-\tau} - \bx_{t}} \le \frac{1}{\VecLInftyNorm{L_1}}$.
\end{addcustomcountlemma}

Lemma~\ref{lem:generalized_signsgd_bounded_updates_t} limits our focus to the most recent $\bar{\tau}$ steps on which Assumption~\ref{asp:l0l1_coordinate} and Lemma~\ref{lem:desent_ineq_l0l1_coordinatewise} can apply.

\begin{addcustomcountlemma}{\ref{lem:noise_seq_generalized_signsgd}}
Assume Assumption~\ref{asp:noise_as_bounded}.
With the notation of Algorithm~\ref{alg:generalized_signsgd}, let $j \in [d]$ and $\beta_1\leq 1$. Then, with probability at least $1-3\delta$, for any $t_0\in[t]$, we have
\begin{align}
    \left|\sum^{t_0}_{\tau=1}\beta_1^{t-\tau}\left(\StocGradient{\tau}{j} - \PartialDerivative{\tau}{j}\right)\right|
    \le
    3\sigma_j\max(1, \log(1/\delta))
    + \frac{3}{\sqrt{1-\beta_1^2}}\sqrt{\sigma_j^2\max(1, \log(1/\delta))}
    \triangleq
    E_j~.
\end{align}
\end{addcustomcountlemma}

Lemma~\ref{lem:noise_seq_generalized_signsgd} is the major tool we use to handle the noise we incur during drawing stochastic gradients. It is derived based on Lemma 12 in~\cite{CutkoskyM21}.

\begin{addcustomcountlemma}{\ref{lem:generalized_signsgd_update_lower_bound}}
With the notation of Algorithm~\ref{alg:generalized_signsgd} and 
under the assumptions of Theorem~\ref{thm:generalized_signsgd}, if $\left|\PartialDerivativeGeneral{\bx_{\tau}}{j}\right|\le M_j$ holds for all $\tau\le t$ and $j\in[d]$, and $D > 0$, then, with probability at least $1-3t\delta$ we have that,
\[
\text{either $\left|\PartialDerivative{t}{j}\right| < \frac{5B_j}{D}$ or $\frac{|m_{t,j}|}{\sqrt{v_{t,j}}} \ge \frac{\rho D}{5\sqrt{1-\beta_2}}$}~,
\]
where $B_j \triangleq \frac{\eta L_{0,j} }{\sqrt{1-\beta_2}(1-\beta_1)} + \beta_1^{\bar{\tau}}(M_j + \sigma_j) + (1-\beta_1)E_j$ and $D \triangleq 1 - \frac{2\eta\PNormDimension\VecLInftyNorm{L_1}}{\sqrt{1-\beta_2}(1-\beta_1)}$.
\end{addcustomcountlemma}
Lemma~\ref{lem:generalized_signsgd_update_lower_bound} is similar to Lemma A.2 in~\cite{ZouCLG21} which considered the deterministic and smooth setting; in contrast, our proof is much more challenging in that we need to tackle both the noise and the unbounded smoothness. With this lemma, we know that either the true gradient is small or that the update of our Algorithm~\ref{alg:generalized_signsgd} can be lower bounded.

\begin{addcustomcountlemma}{\ref{lem:err_unravel_signsgdmom}}
Under Assumptions~\ref{asp:objective_func}, \ref{asp:l0l1_coordinate}, and \ref{asp:noise_as_bounded}, using the hyperparameters in Theorem~\ref{thm:generalized_signsgd}, denoting $\alpha = 1 - \beta_1$ and $\boldepsilon_t = \bm_t - \nabla F(\bx_t)$, for all $t$ and $j\in[d]$ we have, with probability at least $1-3\delta$,
{{\small\begin{equation}
    |\epsilon_{t+1, j}|
    \le
    (1 - \alpha)^t\left(\alpha\sigma_j + (1-\alpha)\left|\PartialDerivative{1}{j}\right|\right) +
    \frac{\eta L_{0,j}}{\alpha}
    + \alpha E_j
    + (1 - \alpha)\eta\PNormDimension L_{1,j}\sum^{t-1}_{\tau=0}(1-\alpha)^{\tau}\left|\PartialDerivative{t-\tau}{j}\right|~.
\end{equation}}}
\end{addcustomcountlemma}

Lemma~\ref{lem:err_unravel_signsgdmom} shows how the use of momentum can help control the noise by choosing $\beta_1$ wisely. It is adapted from the proof of Theorem 2 in~\cite{CutkoskyM21} but with the added difficulty of unbounded smoothness.

\begin{proof}[Proof sketch of Theorem~\ref{thm:generalized_signsgd}]
Observing the formula of setting $\beta_1$, we can see that when $\VecLOneNorm{\sigma} \le \sqrt{\VecLOneNorm{L_0}\Delta}/\sqrt{T}$, $\beta_1 = 0$. As $\beta_2 < \beta_1$, Algorithm~\ref{alg:generalized_signsgd} reduces to SignSGD. In this case, the key component is Lemma~\ref{lem:err_unravel_signsgdmom} using which we are able to show that $\sum^T_{t=1}\left|m_{t,j} - \PartialDerivativeGeneral{\bx_t}{j}\right|$ can be controlled as $C_1\sum^{T}_{t=1}\left|\PartialDerivativeGeneral{\bx_t}{j}\right| + C_2$. The summation of true gradients over time can then be 
offsetted by choosing $\eta$ and $\beta_1$ wisely when we invoke the descent lemma~\ref{lem:desent_ineq_l0l1_coordinatewise}. The rest is standard.

Now for the other case in which $\VecLOneNorm{\sigma} > \sqrt{\VecLOneNorm{L_0}\Delta}/\sqrt{T}$, we take a different route.

First, notice that Assumption~\ref{asp:l0l1_coordinate} and the Descent Lemma~\ref{lem:desent_ineq_l0l1_coordinatewise} only hold when two points are not too far away. Thus, we need to restrict our attention to the recent updates (Lemma~\ref{lem:generalized_signsgd_bounded_updates_t}), beyond which we would have no control. This means we want the influence of those updates too long ago to not have a big effect on the current one. To make this happen, one natural idea is to use a bounded gradient assumption, then with the use of exponential averaging, their effect would be quickly reduced. Yet, assuming directly that all gradients are bounded would trivialize the \BoldLZeroLOne assumption. Thus, we pose a much weaker condition, assuming that $M_j := \sup \left\{ \left|\PartialDerivativeGeneral{\bx}{j}\right| : F(\bx) \leq F(\bx_1)\right\}$ being finite for each $j\in[d]$. Then, we prove that $M_j$ will provide an upper bound to all the true gradients the algorithm see. We prove it using induction, analyzing separately the case that either the true gradient is already very small and we have reached the proximity of a stationary point, or the objective function is monotonically non-increasing and the gradient remains bounded.

Having controlled the past gradients, we prove in Lemma~\ref{lem:generalized_signsgd_update_lower_bound} that the update of Algorithm~\ref{alg:generalized_signsgd} is either very small that we can pass or having a constant lower bound that we can use in the Descent Lemma~\ref{lem:desent_ineq_l0l1_coordinatewise}.

Also, considering that this is the stochastic setting, noise typically slows down convergence or can even cause the algorithm to diverge if the hyperparameters are not chosen wisely. To handle this, we invoke Freedman's inequality to show that the addition of adjacent stochastic noise almost cancels out each other and the absolute value of the sum remains controlled (Lemma~\ref{lem:noise_seq_generalized_signsgd}).

Yet, we still need another block to handle the difference between the true gradient and the momentum as we are updating in the direction of the momentum instead of the true gradient. Turns our that we can prove that $\text{sign}(m_{t,j}) = \text{sign}\left(\PartialDerivativeGeneral{\bx_t}{j}\right)$ when $\left|\PartialDerivativeGeneral{\bx_t}{j}\right|$ is not too small. As before, in the case $\left|\PartialDerivativeGeneral{\bx_t}{j}\right|$ is small, we have converged on that coordinate.

Combining all these blocks together, we are able to arrive at the final results.
\end{proof}

%%%%%%%%%%%%%%%%%%%%%%%%%%%%%%%%%%%%%%%%%%%%%%%%%%%%%%%%%%%%
% Experiments
%%%%%%%%%%%%%%%%%%%%%%%%%%%%%%%%%%%%%%%%%%%%%%%%%%%%%%%%%%%%
\section{Experiments}
\label{sec:experiments}

We conducted our experiments using PyTorch~\cite{paszke2019pytorch} on Nvidia V100 GPUs.

\subsection{Comparison with Other Optimizers}
\label{ssec:comparison}
\begin{figure}[t]
    \centering
    \includegraphics[width=\textwidth]{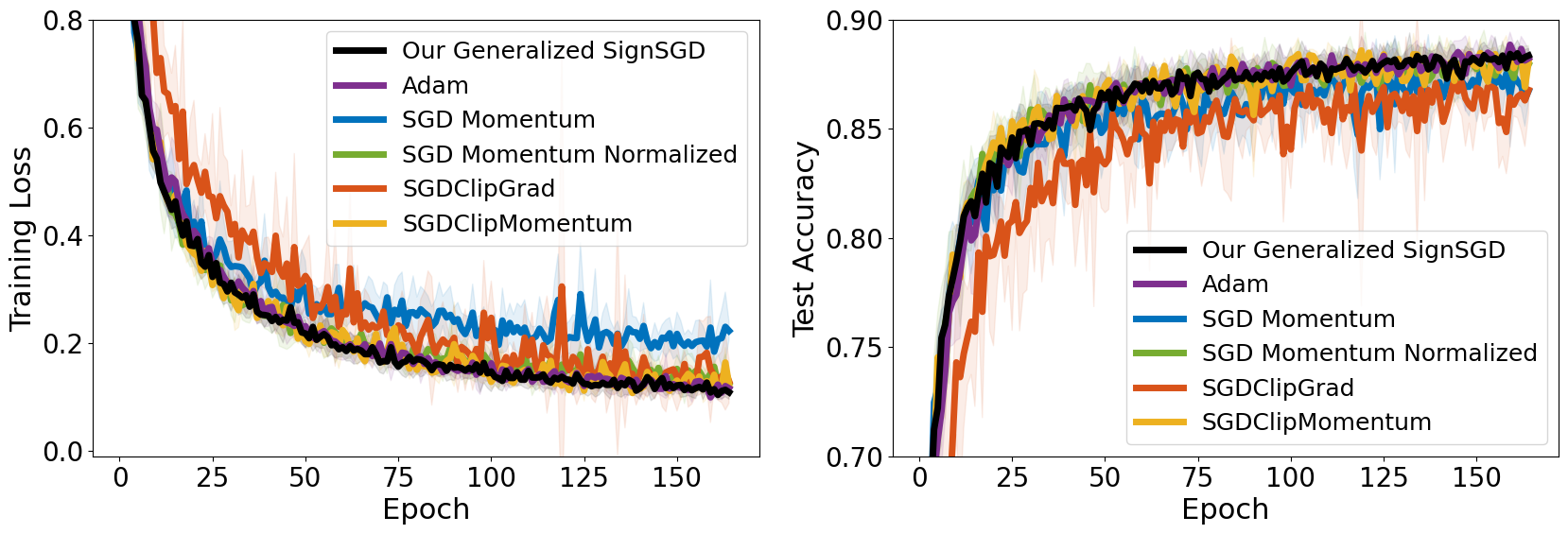}
    \caption{Training a 20-layer Resnet on CIFAR10. The shading of each curve represents the 95\% confidence interval computed across $5$ independent runs from different random seeds.}
    \label{fig:cifar10}
\end{figure}

\begin{figure}[t]
    \centering
    \includegraphics[width=\textwidth]{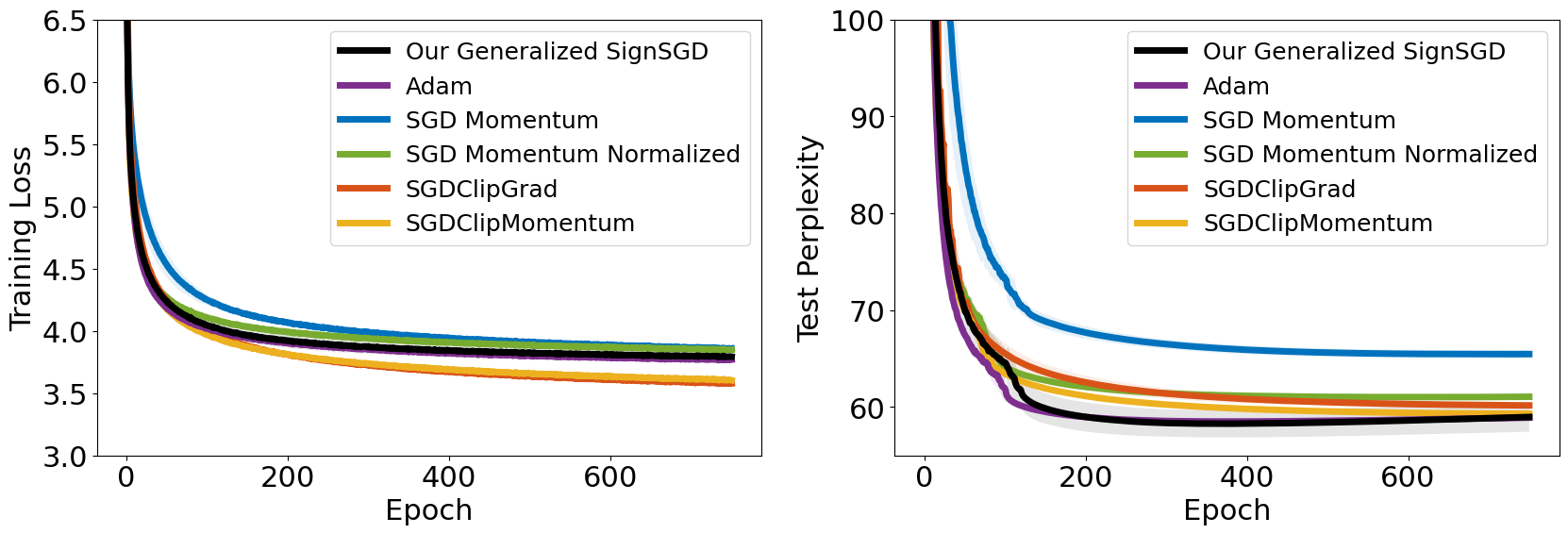}
    \caption{Training an AWD-LSTM to do language modeling (word level) on Penn Treebank. The shading of each curve represents the 95\% confidence interval over 5 independent runs.}
    \label{fig:penntreebank}
\end{figure}

To validate the efficacy of our Algorithm~\ref{alg:generalized_signsgd}, we compare it with Adam~\cite{KingmaB14}, SGD~\cite{robbins1951stochastic}, SGD Momentum Normalized~\cite{Robustness21Jin}, SGDClipGrad, and SGDClipMomentum. The latter two are from Algorithm 1 in~\cite{zhang2020improved} where SGDClipGrad corresponds to the case when $\nu = 0$ and SGDClipMomentum corresponds to when $\nu = 1$.

\textbf{Training} Unless otherwise specified, we use grid-search to fine-tune the initial learning rate for all optimizers, as well as the clipping threshold for SGDClipGrad and SGDClipMomentum, and $\beta_2$ for Adam and our algorithm, to select the one giving the best validation performance on a separated validation set. We then employ the best performing hyperparameters to train the model over all training data and report the testing performance. The testing is repeated with random seeds 5 times to eliminate the influence of stochasticity. For more details, please refer to Section~\ref{ssec:exp_details}.

\textbf{Resnet for Image Classification on CIFAR-10}
We employ the 20-layer Residual Network model~\cite{he2016deep} to do image classification on the CIFAR-10 dataset. Images are normalized per channel using the means and standard deviations
computed from all training images. We adopt the data augmentation technique following~\cite{lee2015deeply} (for training only): 4 pixels are padded on each side of an image and a 32 × 32 crop is randomly sampled from the padded image or its horizontal flip. The mini-batch size is $128$ and we train all algorithms for $164$ epochs. We do not employ any learning rate decay schedule in order to focus on the comparison of the optimizers themselves. We fixed the weight decay value to be $0.0001$ and the momentum parameter ($\beta_1)$ to be $0.9$.
Figure~\ref{fig:cifar10} and Table~\ref{tab:results} report the training and testing performance for each algorithm, showing that ours is among the best.

\textbf{LSTM for Language Modeling on Penn Treebank}
We adopt a 3-layer AWD-LSTM \cite{merity2018regularizing} to do language modeling on the Penn Treebank (PTB) dataset \cite{marcus1993building}(word level). The mini-batch size is $40$ and we trained each algorithm for $750$ epochs. Apart from the hyperparameters we stated above, we further fine-tuned the weight decay value for all algorithms noticing its significant influence on the performance. We choose the set of hyperparameters that give the smallest final validation perplexity.
We report the results in Figure~\ref{fig:penntreebank} and Table~\ref{tab:results}. It can be seen that we can match the performance of Adam while beating the others.

\begin{table*}[t]
\caption{Average final training loss and test accuracy achieved by each method when optimizing respective models on each dataset. The $\pm$ shows $95\%$ confidence intervals of the mean loss/accuracy/perplexity value over 5 runs starting from different random seeds.}
\label{tab:results}
{\small
\begin{tabular}{|c|c|c|c|c|}
\hline
\multirow{2}{*}{Methods} & \multicolumn{2}{c|}{CIFAR10} & \multicolumn{2}{c|}{Penn Treebank}\\
\cline{2-5}
& Training loss & Test accuracy & Training loss & Test perplexity \\
\hline
SGD Momentum & $0.2226 \pm 0.0169$ & $0.8674 \pm 0.0048$ & $3.8587 \pm 0.0058$ & $65.4622 \pm 0.3842$\\
\hline
SGD Momentum Normalized & $0.1262 \pm 0.0170$ & $0.8795 \pm 0.0086$ & $3.8487 \pm 0.0073$ & $61.0558 \pm 0.3224$\\
\hline
SGDClipGrad & $0.1288 \pm 0.0403$ & $0.8677 \pm 0.0106$ & $\mathbf{3.5774 \pm 0.0081}$ & $60.1604 \pm 0.2797$\\
\hline
SGDClipMomentum & $0.1220 \pm 0.0162$ & $0.8809 \pm 0.0022$ & $3.6038 \pm 0.0102$ & $59.3052 \pm 0.2798$\\
\hline
Adam & $0.1161 \pm 0.0111$ & $0.8823 \pm 0.0041$ & $3.7692 \pm 0.0062$ & $\mathbf{58.9005 \pm 0.3058}$\\
\hline
Our Algorithm~\ref{alg:generalized_signsgd} & $\mathbf{0.1086 \pm 0.0129}$ & $\mathbf{0.8835 \pm 0.0032}$ & $3.7928 \pm 0.0425$ & $58.9661 \pm 1.5218$\\
\hline
\end{tabular}}
\end{table*}

\subsection{Transformers Observe $(L_0, L_1)$-smoothness}
\label{ssec:transformer_l0l1}
For Figure~\ref{fig:global_l0l1} which verifies the original form~\eqref{eq:global_l0l1} of the $(L_0, L_1)$ condition using the norm, we followed the method in Section H.3 of~\cite{zhang2019gradient}. Specifically, given $\bx_t$ and $\bx_{t+1}$, denote $\bd := \bx_{t+1} - \bx_t$. We estimate the smoothness at $\bx_t$ by
\begin{equation}
    \hat{L}_t = \max_{\gamma\in\{\delta_1, \delta_2,\ldots,\delta_N\}}\frac{\|\nabla F(\bx_{t}+\gamma\bd) - \nabla F(\bx_t)\|_2}{\|\gamma\bd\|_2}~,\label{eq:global_smooth_estimation}
\end{equation}
where $\{\delta_1, \delta_2,\ldots,\delta_N\}$ denotes the sample locations and we use $\{\frac16, \frac26, \frac36, \frac46, \frac56\}$.

For Figure~\ref{fig:transformer_l0l1_coordinate_wise} verifying the coordinate-wise version~\eqref{eq:coordinate_wise_l0l1} of the \BoldLZeroLOne condition, note that the equation is symmetric in that if we just swap $\bx$ and $\by$ it shall still holds. Thus, during plotting, we compare $\nicefrac{\left|\PartialDerivativeGeneral{\bx_{t+1}}{j} - \PartialDerivativeGeneral{\bx_t}{j}\right|}{|x_{t+1,j} - x_{t,j}|}$ vs.~$\min\left(\left|\PartialDerivativeGeneral{\bx_t}{j}\right|, \left|\PartialDerivativeGeneral{\bx_{t+1}}{j}\right|\right)$.

Figure~\ref{fig:global_l0l1}(a) is on training a $2$-layer Transformer Encoder to do language modeling on the Wikitext-2 dataset. The implementation, settings, and parameter choices follow this.\footnote{\url{https://pytorch.org/tutorials/beginner/transformer_tutorial.html}} We only plot the first $5$ training epochs.
Figure~\ref{fig:global_l0l1}(b) and~\ref{fig:transformer_l0l1_coordinate_wise} are on training a $6$-layer Transformer~\cite{VaswaniSPUJGKP17} to do machine translation on the WMT'16 Multimodal Machine Translation Task German-English dataset.
The implementation of the transformer is forked from here\footnote{\url{https://github.com/jadore801120/attention-is-all-you-need-pytorch}} and we also follow their default settings. The mini-batch size is $256$ and we trained for $400$ epochs using Adam and report the whole training trajectory.

\subsection{Clipping does not Affect Adam's Performance}
\label{ssec:adam_clipping}
We compare clipping and non-clipping for Adam optimizer on the Wikitext-103 (103 million tokens, 180MB) \cite{wikitext103} language modeling task, with a 16-layer GPT-2 transformer model \cite{gpt2}.  This GPT-2 model has an input length of 256 tokens, 410-dimension word embedding, 16 Attention layers with 10 Attention heads and 2100 hidden dimensions. Model size is 201.58 MB. The vocabulary size is 28996. We use the hyper-parameter settings prescribed in \cite{gpt2_naacl}: batch size 256, warm up learning rate from 0 to $2.5 \times 10^{-4}$  in the first 64000 samples (i.e., 250 iterations) and then cosine-anneal learning rate to zero, on top of an Adam optimizer. It takes about 40 hours to train 200 epochs on 8 V100 GPUs. We use clipping threshold max\_norm 0.25 for the entire model as prescribed in the literature \cite{gpt2_naacl}. We also count that with this clipping scheme, clipping occurs in every single batch. As we can see from Figure~\ref{fig:gpt2_adam_clipping}, neither training loss (2.79 vs 2.76) nor perplexity score (27.92 vs 27.97) differs much in the clipping and non-clipping case, which is consistent with our theory that Adam naturally achieves gradients clipping effect.

%%%%%%%%%%%%%%%%%%%%%%%%%%%%%%%%%%%%%%%%%%%%%%%%%%%%%%%%%%%%
% Conclusion
%%%%%%%%%%%%%%%%%%%%%%%%%%%%%%%%%%%%%%%%%%%%%%%%%%%%%%%%%%%%
\section{Conclusion and Limitations}
\label{sec:conclusion}
Smoothness has been a widely adopted condition for proving convergence rates of algorithms in the non-convex optimization scenario. Yet, it has been found that this assumption does not capture losses when employing some deep learning models including RNNs and LSTMs. In light of this, a relaxed smoothness assumption was proposed that aligns well with the practice. We observed that the loss surface of training using Transformers also exhibits this relaxed smoothness. Under this assumption, SGD with clipped gradient has been proven to work well. However, we found that clipping is not necessary for achieving convergence in such a setting. Indeed, we showed that a generalized SignSGD algorithm does not require explicit clipping but can almost guarantee the same bound as SGD with clipping. In the analyses, we identified the key effect of using momentum in analyzing Adam-type algorithms, that it reduces both the noise and the unbounded gradient norms. Finally, we conducted a variety of deep learning tasks showing that our algorithm can match Adam's performance while exceeding others.

\noindent\textbf{Limitations} The current work is in no way a perfect one and there are many directions worth exploring beyond it. First of all, though our algorithm could be seen as a close resemblance to the original Adam algorithm, they are still not equal. Considering the huge popularity of Adam and its established effectivity in practice, it is worth studying whether Adam in its original form can converge in the relaxed smooth setting. Second, while our Theorem~\ref{thm:generalized_signsgd} are upper bounds and cannot be directly compared between the two cases of $\beta_2$, it does suggests that $\beta_2 = 0$ minimizes the worst-case convergence rate. However, it still does not fully explain the phenomenon that a choice of $\beta_2$ close to $1$ yields better performance in using our Algorithm~\ref{alg:generalized_signsgd} as well as Adam in practice. Third, despite there are lower bounds showing that, for example, GD with a constant step size can be arbitrarily worse than GD with clipping, it would be more meaningful to study whether the relaxed smooth condition is inherently more difficult, possibly by establishing a lower bound for all first-order optimization algorithms. Fourth, we did show that Transformers observe the relaxed smoothness condition, but we consider it more beneficial to research in-depth what properties or structures make a model satisfy such conditions. Finally, when conducting our experiments, we observed that the weight decay value plays a prominent role in each optimizer's performance, and that the best weight decay value varies for different optimizers. Thus, one potential direction would be to explore different ways of incorporating the regularization in a way to preserve the scale-freeness~\cite{OrabonaP15,OrabonaP18} of Algorithm~\ref{alg:generalized_signsgd}, just as AdamW~\cite{loshchilov2018decoupled} does~\cite{zhuang2022understanding}.

\section*{Acknowledgements}
Michael Crawshaw is supported by the Institute for Digital Innovation fellowship from George Mason University. Michael Crawshaw and Mingrui Liu are both supported by a grant from George Mason University. Francesco Orabona is supported by the National Science Foundation under the grants no. 1908111 ``AF: Small: Collaborative Research: New Representations for Learning Algorithms and Secure Computation'', no. 2022446 ``Foundations of Data Science Institute'', and no. 2046096 “CAREER: Parameter-free Optimization Algorithms for Machine Learning''.

\bibliography{main}
\bibliographystyle{plainnat}

%%%%%%%%%%%%%%%%%%%%%%%%%%%%%%%%%%%%%%%%%%%%%%%%%%%%%%%%%%%%
% Appendix
%%%%%%%%%%%%%%%%%%%%%%%%%%%%%%%%%%%%%%%%%%%%%%%%%%%%%%%%%%%%
\clearpage
\appendix
\section{Appendix}
\label{sec:appendices}

\subsection{Properties of the coordinate-wise \BoldLZeroLOne Assumption~\ref{asp:l0l1_coordinate}}
\label{ssec:property_l0l1}

Below, we prove the descent lemma for coordinate-wisely \BoldLZeroLOne-smooth functions satisfying Assumption~\ref{asp:l0l1_coordinate}.

\begin{addcustomcountlemma}{\ref{lem:desent_ineq_l0l1_coordinatewise}}
Let $F$ be \BoldLZeroLOne-smooth coordinate-wisely. For any $\bx, \by\in\R^d$ for which $\PNorm{\bx - \by} \le \frac{1}{\VecLInftyNorm{L_1}}$, we have
\begin{equation}
    F(\by)
    \leq
    F(\bx) + \left\langle\nabla F(\bx), \by-\bx\right\rangle + \sum^d_{j=1}\frac{ \left(\frac{L_{0,j}}{\PNormDimension}+ L_{1,j}\left|\frac{\partial F}{\partial x_j}(\bx)\right|\right)\PNorm{\by - \bx}}{2}|y_j-x_j|~.
\end{equation}
\end{addcustomcountlemma}
\begin{proof}[Proof of Lemma~\ref{lem:desent_ineq_l0l1_coordinatewise}]
\begin{align}
    F(\by)
    % 1
    &=
    F(\bx) + \int^1_0 \langle \nabla F(\bx + u(\by - \bx)), \by - \bx\rangle du\\
    % 2
    &=
    F(\bx) + \left\langle\nabla F(\bx), \by-\bx\right\rangle + \int^1_0 \langle \nabla F(\bx + u(\by - \bx)) - \nabla F(\bx), \by - \bx \rangle du\\
    % 3
    &\le
    F(\bx) + \left\langle\nabla F(\bx), \by-\bx\right\rangle + \left|\int^1_0 \langle \nabla F(\bx + u(\by - \bx)) - \nabla F(\bx), \by - \bx) \rangle du\right|\\
    % 4
    &\le
    F(\bx) + \left\langle\nabla F(\bx), \by-\bx\right\rangle + \int^1_0 \left|\langle\nabla F(\bx + u(\by - \bx)) - \nabla F(\bx), \by - \bx \rangle \right|du\\
    % 5
    &\le
    F(\bx) + \left\langle\nabla F(\bx), \by-\bx\right\rangle + \int^1_0\sum^d_{j=1} \left|\left[\frac{\partial F}{\partial x_j}(\bx + u(\by - \bx)) - \frac{\partial F}{\partial x_j}(\bx)\right] (y_j - x_j)\right|du\\
    % 6
    &\le
    F(\bx) + \left\langle\nabla F(\bx), \by-\bx\right\rangle + \int^1_0\sum^d_{j=1} \left|\frac{\partial F}{\partial x_j}(\bx + u(\by - \bx)) - \frac{\partial F}{\partial x_j}(\bx)\right| |y_j - x_j|du\\
    % 7
    &=
    F(\bx) + \left\langle\nabla F(\bx), \by-\bx\right\rangle + \sum^d_{j=1}\int^1_0 \left|\frac{\partial F}{\partial x_j}(\bx + u(\by - \bx)) - \frac{\partial F}{\partial x_j}(\bx)\right| |y_j - x_j|du\\
    % 8
    &\le
    F(\bx) + \left\langle\nabla F(\bx), \by-\bx\right\rangle + \sum^d_{j=1}\int^1_0 u \left(\frac{L_{0,j}}{\PNormDimension} + L_{1,j}\left|\frac{\partial F}{\partial x_j}(\bx)\right|\right)\PNorm{\by - \bx}|y_j - x_j| du\\
    % 9
    & =
    F(\bx) + \left\langle\nabla F(\bx), \by-\bx\right\rangle + \sum^d_{j=1}\frac{\left(\frac{L_{0,j}}{\PNormDimension}+ L_{1,j}\left|\frac{\partial F}{\partial x_j}(\bx)\right|\right)\PNorm{\by - \bx}}{2}|y_j-x_j|~,
\end{align}
where the second inequality uses the fact that $\left|\int^b_a F(x)dx\right|\le\int^b_a|F(x)|dx$ and the final one is due to Assumption~\ref{asp:l0l1_coordinate}.
\end{proof}

The following Lemma shows that our coordinate-wise \BoldLZeroLOne smooth assumption~\ref{asp:l0l1_coordinate} is equivalent to the original $(L_0, L_1)$ smooth assumption~\eqref{eq:global_l0l1} at least in $1$-d case.
\begin{lemma}
\label{lem:equivalence_l0l1_asps}
Let $F: \R\rightarrow\R$ be a twice continuously differentiable function. Then if (1) there exists some $K_0, K_1 \ge 0$ such that it holds for any $x, y\in\R$ with $|y - x| \le \frac{1}{K_1}$ that $|F^{\prime}(y) - F^{\prime}(x)| \le (K_0 + K_1|F^{\prime}(x)|)|y - x|$, then (2) there exists some $L_0, L_1 \ge 0$ such that it holds for any $x\in\R^d$ that $|F^{\prime\prime}(x)| \le L_0+ L_1|F^{\prime}(x)|$, and vice versa.
\end{lemma}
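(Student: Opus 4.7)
The plan is to prove both implications separately. The easy direction, (1) $\Rightarrow$ (2), is essentially a limiting argument: since $F$ is twice continuously differentiable, dividing the assumed inequality $|F'(y)-F'(x)| \le (K_0+K_1|F'(x)|)|y-x|$ by $|y-x|$ and letting $y \to x$ (which is always permitted since the restriction $|y-x|\leq 1/K_1$ allows arbitrarily small $|y-x|$) yields $|F''(x)| \le K_0+K_1|F'(x)|$ directly from the definition of $F''(x)$. Setting $L_0 = K_0$ and $L_1 = K_1$ completes this direction.

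For the harder direction (2) $\Rightarrow$ (1), I would use a Gronwall-type argument. Define $h(s) := F'(x + s(y-x))$ for $s \in [0,1]$, so that by the chain rule $h'(s) = F''(x+s(y-x))(y-x)$. Using assumption (2), $|h'(s)| \le (L_0 + L_1|h(s)|)|y-x|$. Integrating from $0$ to $s$ and using the triangle inequality $|h(u)| \le |h(0)| + |h(u)-h(0)|$ inside the integrand gives
\begin{equation}
|h(s)-h(0)| \le (L_0+L_1|F'(x)|)|y-x|\,s + L_1|y-x|\int_0^s |h(u)-h(0)|\,du.
\end{equation}
Gronwall's inequality then yields $|h(s)-h(0)| \le (L_0+L_1|F'(x)|)|y-x|\,s\,\exp(L_1|y-x|\,s)$. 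Specializing to $s=1$ and using the constraint $|y-x| \le 1/L_1$ (so that $\exp(L_1|y-x|) \le e$), we obtain
\begin{equation}
|F'(y)-F'(x)| \le e(L_0+L_1|F'(x)|)|y-x|,
\end{equation}
which is precisely the required form with constants $K_0 = eL_0$ and $K_1 = L_1$.

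The main obstacle, if any, is mostly bookkeeping: setting up the Gronwall application so that the absolute value is taken outside, and the dependence on $|F'(x)|$ (rather than $|F'(y)|$ or on $|h(u)|$ for intermediate $u$) is preserved by the $|h(0)|+\phi(u)$ split. One subtlety worth flagging is how to reconcile the constraint $|y-x| \le 1/K_1$ in assumption (1) with the natural constraint $|y-x| \le 1/L_1$ that makes the Gronwall bound tight: choosing $K_1 = L_1$ (rather than absorbing the factor $e$ into $K_1$) keeps the two constraint regions identical and deposits the factor $e$ into $K_0$ alone. No deeper analytic obstacle is anticipated, since $F''$ is continuous and no differentiability issues arise along the segment from $x$ to $y$.
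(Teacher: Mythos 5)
Your forward direction, (1) $\Rightarrow$ (2), is exactly the paper's argument: divide by $|y-x|$ and let $y\to x$. For the reverse direction (2) $\Rightarrow$ (1), the paper does \emph{not} give a proof at all; it simply cites Corollary A.4 of Zhang et al.\ (2020) as a black box. Your self-contained Gronwall argument is therefore a genuine addition, and the core of it is correct: the setup $h(s)=F'(x+s(y-x))$, the split $|h(u)|\le|h(0)|+|h(u)-h(0)|$ to keep the bound in terms of $|F'(x)|$ only, the integral inequality $\phi(s)\le(L_0+L_1|F'(x)|)|y-x|s + L_1|y-x|\int_0^s\phi(u)\,du$, and the Gronwall conclusion $\phi(1)\le(L_0+L_1|F'(x)|)|y-x|e^{L_1|y-x|}\le e(L_0+L_1|F'(x)|)|y-x|$ are all sound.

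The one error is in the final bookkeeping. The derived bound expands to $(eL_0 + eL_1|F'(x)|)|y-x|$, so the factor $e$ multiplies the coefficient of $|F'(x)|$ as well; it cannot be ``deposited into $K_0$ alone'' as you claim, since $|F'(x)|$ is unbounded. The correct choice is $K_0=eL_0$ and $K_1=eL_1$. This also resolves the ``subtlety'' you flag, but not in the way you describe: with $K_1=eL_1$ the hypothesis region becomes $|y-x|\le 1/(eL_1)$, which is a \emph{subset} of the region $|y-x|\le 1/L_1$ where your Gronwall bound holds, so there is nothing to reconcile --- the constraint regions do not need to be identical, only nested in the right direction. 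Your stated choice $K_1=L_1$ would make the two regions identical but fails the inequality. This is a small fix, but as written the last line of the argument is wrong.
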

\begin{proof}[Proof of Lemma~\ref{lem:equivalence_l0l1_asps}]
\textbf{(1) $\Rightarrow$ (2)} By definition, for any $x\in\R$, we know that
\begin{align}
    F^{\prime\prime}(x)
    % 1
    &=
    \underset{h\rightarrow 0}{\text{lim}}\frac{F^{\prime}(x + h) - F^{\prime}(x)}{h}
    % 2
    \le
    \underset{h\rightarrow 0}{\text{lim}}\frac{|F^{\prime}(x + h) - F^{\prime}(x)|}{|h|}\\
    % 3
    &\le
    \underset{h\rightarrow 0}{\text{lim}}\frac{(K_0 + K_1|F^{\prime}(\bx)|)|h|}{|h|}
    % 4
    =
    K_0 + K_1|F^{\prime}(x)|~.
\end{align}

\textbf{(2) $\Rightarrow$ (1)} This is a special case for $1$-d and $c=1$ of Corollary A.4 in~\cite{zhang2020improved}.
\end{proof}

\begin{lemma}
\label{lem:l0l1_recover}
When $L_{0,j} = L_0$ and $L_{1,j} = L_1$ for all $j\in[d]$, Assumption~\ref{asp:l0l1_coordinate} implies~\eqref{eq:l0l1_global_gradient} (up to constants).
\end{lemma}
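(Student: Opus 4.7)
The plan is a direct computation that passes from the per-coordinate bound in Assumption~\ref{asp:l0l1_coordinate} to the $\ell_2$-norm bound in~\eqref{eq:l0l1_global_gradient} by squaring, summing, and taking square roots, losing only small universal constants along the way. Specifically, I would set $L_{0,j}=L_0$ and $L_{1,j}=L_1$ so that $\|\boldsymbol{L_1}\|_\infty = L_1$, and fix $\bx,\by\in\R^d$ with $\|\bx-\by\|_2 \le 1/(\sqrt{2}L_1)$; this is the range over which I aim to establish~\eqref{eq:l0l1_global_gradient} with constants $K_0 = \sqrt{2}L_0$ and $K_1 = \sqrt{2}L_1$. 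Note that $1/(\sqrt{2}L_1) \le 1/L_1 = 1/\|\boldsymbol{L_1}\|_\infty$, so Assumption~\ref{asp:l0l1_coordinate} is applicable on this set.

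Squaring the coordinate-wise inequality from Assumption~\ref{asp:l0l1_coordinate} and summing over $j\in[d]$ gives
\begin{equation}
\|\nabla F(\by)-\nabla F(\bx)\|_2^2 \le \|\by-\bx\|_2^2 \sum_{j=1}^d \Bigl(\tfrac{L_0}{\sqrt{d}} + L_1\bigl|\tfrac{\partial F}{\partial x_j}(\bx)\bigr|\Bigr)^2 .
\end{equation}
Applying $(a+b)^2 \le 2a^2 + 2b^2$ termwise, the inner sum is bounded by $\sum_{j=1}^d (2L_0^2/d + 2L_1^2 |\partial_j F(\bx)|^2) = 2L_0^2 + 2L_1^2 \|\nabla F(\bx)\|_2^2$. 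The $1/\sqrt{d}$ factor in the assumption is exactly what makes this dimension-free, which is the whole point of the scaling chosen in~\eqref{eq:coordinate_wise_l0l1}.

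Taking square roots and using $\sqrt{a^2+b^2}\le a+b$ for nonnegative $a,b$ yields
\begin{equation}
\|\nabla F(\by)-\nabla F(\bx)\|_2 \le \sqrt{2}\,(L_0 + L_1\|\nabla F(\bx)\|_2)\,\|\by-\bx\|_2,
\end{equation}
which is exactly~\eqref{eq:l0l1_global_gradient} with $K_0=\sqrt{2}L_0$ and $K_1=\sqrt{2}L_1$. I do not expect any obstacle: the argument is entirely algebraic and the only subtle bookkeeping is verifying that the valid range $\|\bx-\by\|_2\le 1/K_1$ lies inside the range $\|\bx-\by\|_2\le 1/\|\boldsymbol{L_1}\|_\infty$ on which Assumption~\ref{asp:l0l1_coordinate} supplies the per-coordinate estimate, which as noted above holds because $\sqrt{2}\ge 1$. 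The loss factor of $\sqrt{2}$ is the ``up to constants'' caveat stated in the lemma.
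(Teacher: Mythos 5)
Your proof is correct and follows essentially the same route as the paper: square the coordinate-wise bound, apply $(a+b)^2 \le 2a^2 + 2b^2$ to split the $L_0$ and $L_1$ terms, and then use $\sqrt{a^2+b^2} \le a+b$ to recover the linear form with $K_0 = \sqrt{2}L_0$, $K_1 = \sqrt{2}L_1$. The only addition beyond the paper's proof is your explicit check that the domain $\|\bx-\by\|_2 \le 1/K_1$ sits inside the domain where Assumption~\ref{asp:l0l1_coordinate} applies, which is a sensible piece of bookkeeping the paper leaves implicit.
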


\begin{proof}
Suppose all $L_{0,j}, L_{1,j}$ are the same across $j$, then we have
\begin{align}
    \|\nabla F(\by) - \nabla F(\bx)\|_2
    % 1
    &=
    \sqrt{\sum^d_{j=1}
    \left|\frac{\partial F}{\partial x_j}(\by) - \frac{\partial F}{\partial x_j}(\bx) \right|^2}\\
    % 2
    &\leq
    \sqrt{\sum^d_{j=1}\left(\frac{L_{0,j}}{\sqrt{d}} + L_{1,j}\left|\frac{\partial F}{\partial x_j}(\bx)\right|\right)^2\times\|\by - \bx\|_2^2}\\
    % 3
    &\leq
    \sqrt{\sum^d_{j=1}\left(\frac{2L_{0,j}^2 }{d}+ 2L_{1,j}^2\left|\frac{\partial F}{\partial x_j}(\bx)\right|^2\right)\times\|\by - \bx\|_2^2}\\
    % 4
    &\leq
    \sqrt{2}L_0\|\by - \bx\|_2 + \sqrt{2}L_1\|\by - \bx\|_2\sqrt{\sum^d_{j=1}\left|\frac{\partial F}{\partial x_j}(\bx)\right|^2}\\
    % 5
    &=
    \left(\sqrt{2}L_0 + \sqrt{2}L_1\|\nabla F(\bx)\|_2\right)\times \|\by - \bx\|_2~.\qedhere
\end{align}
\end{proof}

\subsection{Proof of Lower Bound}
\label{ssec:proof_lower_bound_gmd}

\begin{addcustomcounttheorem}{\ref{thm:lower_bound_gd_fixed}}
Fix $\epsilon > 0, L_0>0, L_1>0, M\geq \max(\frac{L_0}{L_1},\epsilon)$, and $x_0 \in \mathbb{R}$. Pick any constant learning rate $\eta$ for GD, with the knowledge of the above constants. Then, there exists a 1-d $(L_0,
L_1)$-smooth function, bounded from below by $f^*$ (finite), and such that $ \sup \{ |
f'(x)| : f(x) \leq f(x_0)\} \leq M$ on which the number of iterations $T$ of GD with learning rate $\eta$ to guarantee $| f'(x_T)|< \epsilon$ is at least
\begin{equation}
    \frac{ M L_1 (f(x_0) - f^*-\frac{15 \epsilon^2}{16L_0}) }{2\epsilon^2\left(\ln \frac{M L_1}{L_0}+1\right) }~.
\end{equation}
\end{addcustomcounttheorem}
\begin{proof}[Proof of Theorem~\ref{thm:lower_bound_gd_fixed}]
By Lemma~\ref{lem:equivalence_l0l1_asps}, we know that, in 1-d case, our coordinate-wise \BoldLZeroLOne assumption~\ref{asp:l0l1_coordinate} is equivalent as the original one~\eqref{eq:global_l0l1}. Thus, without loss of generality, we use the original condition~\eqref{eq:global_l0l1} in the proof. We will construct two different $(L_0,L_1)$-smooth functions based on the value of $\eta$.

\textbf{Case $\eta > \frac{2}{M L_1}\left(\ln \frac{M L_1}{L_0}+1\right)$.}
In this case, we can construct a function on which GD does not converge, hence the lower bound is trivially true.
Consider the function
\begin{equation*}
    f(x) = \begin{cases}
        L_0\frac{e^{-L_1 x-1}}{L_1^2} & x < -\frac{1}{L_1} \\
        \\
        L_0 \frac{x^2}{2} + \frac{L_0}{2 L_1^2} & x \in [-\frac{1}{L_1}, \frac{1}{L_1}] \\
        \\
        L_0\frac{e^{L_1 x-1}}{L_1^2} & x > \frac{1}{L_1}
    \end{cases}
\end{equation*}
Note that $f$ is $(L_0,L_1)$-smooth.
Without loss of generality, we can assume $x_0=\frac{1}{L_1}\left(\ln \frac{M L_1}{L_0}+1\right)$, in fact if this is not the case we can translate the function $f$ accordingly. This setting of $x_0$ guarantees that the bound on the gradient is correct.
Moreover, with this choice, we claim the function will diverge. 
To see this, we use mathematical induction to show that $|x_{t+1}| > |x_{t}|$ and $\text{sign}(x_{t+1})\ne \text{sign}(x_{t})$ for any $t\ge0$. First, for the case when $t=0$, we have
\begin{equation}
     x_{1}
     = x_{0} - \eta f^{\prime}(x_0)
     = x_0 - \frac{\eta L_0}{L_1}e^{L_1x_0 - 1}
     = x_0 - \eta M
     < x_0 - 2x_0
     = -x_0~.
\end{equation}
Then suppose the condition holds up until $t$ and we prove for $t+1$. From the formula of $f$, we have that $\text{sign}(f^{\prime}(x)) = \text{sign}(x)$ and that $f$ is monotonically increasing with $|x|$. Thus, from the update of gradient descent which moves along the negative direction of the gradient, if we can show that $|x_{t+1}| > |x_{t}|$, then $\text{sign}(x_{t+1})\ne \text{sign}(x_{t})$. This yields
\[
|x_{t+1}| = |x_{t} - \eta f^{\prime}(x_t)| > |x_t|
\Leftarrow
\eta |f^{\prime}(x_t)| > 2 |x_t|
\Leftarrow
\eta L_0 > \frac{2|x_t| L_1}{\exp(L_1 |x_t|-1)}~.
\]
Now, note that $\psi(x)=\frac{2|x| L_1}{\exp(L_1 |x|-1)}$ is decreasing for $x>\frac{1}{L_1}$ and increasing for $x<-\frac{1}{L_1}$. Hence, we have that
\[
\eta L_0 
> \frac{2|x_0| L_1}{\exp(L_1 |x_0|-1)}
> \frac{2|x_t| L_1}{\exp(L_1 |x_t|-1)},
\]
where the first inequality is true by the choice of $x_0>\frac{1}{L_1}$ and the condition on $\eta$ and the second one is true by the induction hypothesis.

\textbf{Case $\eta\leq \frac{2}{M L_1}\left(\ln \frac{M L_1}{L_0}+1\right)$.}

Now, consider
\begin{equation*}
    f(x) = \begin{cases}
        -\epsilon x , & x < -\frac{3\epsilon}{2L_0} \\
        \frac{L_0}{2} x^2-\frac{L_0^3 x^4}{27 \epsilon^2} +\frac{9\epsilon^2}{16L_0}, & x \in [-\frac{3\epsilon}{2L_0}, \frac{3\epsilon}{2L_0}] \\
        \epsilon x, & x > \frac{3\epsilon}{2L_0}
    \end{cases}
\end{equation*}
We have that $f$ is $(L_0, 0)$-smooth, hence also $(L_0, L_1)$-smooth. Note that the presence of the fourth power makes this function twice differentiable. Moreover, the maximum gradient in this case is $\epsilon\leq M$.

As before, without loss of generality, let the initial point $x_0 = \frac{3\epsilon}{2L_0}+\Delta$, where $\Delta>0$.
We have that $f(x_0)-f^* = \epsilon\left(\Delta + \frac{3\epsilon}{2L_0}\right)-\frac{9\epsilon^2}{16 L_0}$, hence $\Delta=\frac{1}{\epsilon}\left(f(x_0)-f^*\right)-\frac{15\epsilon}{16 L_0}$.
Now, while we stay on the last branch of the function, we have
\[
x_{t+1} 
= x_t - \eta \epsilon
\geq x_t - \epsilon \frac{2}{M L_1}\left(\ln \frac{M L_1}{L_0}+1\right)~.
\]
Hence, we have that, for 
\[
t
\leq \frac{M L_1 \Delta}{2 \epsilon \left(\ln \frac{M L_1}{L_0}+1\right)}
=\frac{M L_1 \left(f(x_0)-f^*-\frac{15 \epsilon^2}{16 L_0}\right)}{2\epsilon^2 \left(\ln \frac{M L_1}{L_0}+1\right)},
\]
we guarantee $|f^{\prime}(x_t)| = \epsilon$.
\end{proof}

\paragraph{Errors in the lower bound in \cite{zhang2019gradient}}
As we said in the main text, unfortunately, the lower bound theorem in \cite{zhang2019gradient} is wrong, both statement and proof. First of all, they have a logarithm of a quantity with units, $M$, which is an undefined mathematical operation. A closer look at the proof reveals that, differently from the statement of their theorem, they construct a function with $L_0=L_1$, which explains why these terms are missing in the logarithm. Moreover, it is also unclear if the second constructed function satisfies the assumptions of the theorem.
We correct all these issues by properly scaling the constructed functions so that they always satisfy the $(L_0, L_1)$ condition and all the units are coherent. This result in the correct term inside the logarithm and the right conditions on $L_0$, $L_1$, $M$, and $\epsilon$.

\subsection{Proof of Theorem~\ref{thm:generalized_signsgd}}
\label{ssec:proof_generalized_signsgd}
We first write down some notations here that we will use heavily later for easier reference:
\begin{align}
    % 1
    & \bar{\tau} = \frac{\sqrt{1-\beta_2}}{\eta\PNormDimension\VecLInftyNorm{L_1}},\quad
    \alpha = 1 - \beta_1,\quad
    \rho = 1 - \beta_2^{1/2}\beta_1^{-1},\\
    % 2
    & \boldepsilon_t = \bm_t - \nabla F(\bx_t),\quad
    \tilde{\boldepsilon}_t = \bg_t - \nabla F(\bx_t),\\
    % 3
    & E_j = 6\sigma_j\max(1, \log(1/\delta))
    + \frac{6}{\sqrt{1-\beta_1^2}}\sqrt{\sigma_j^2\max(1, \log(1/\delta))},\\
    % 4
    & B_j = \frac{\eta L_{0,j} }{\sqrt{1-\beta_2}(1-\beta_1)}
    +
    \beta_1^{\bar{\tau}}(M_j + \sigma_j)
    + (1-\beta_1)E_j,\\
    % 5
    & C_j = 1 + \frac{\eta\PNormDimension L_{1,j}}{(1-\beta_1)\sqrt{1-\beta_2}},\quad
    D = 1 - \frac{2\eta\PNormDimension\VecLInftyNorm{L_1}}{\sqrt{1-\beta_2}(1-\beta_1)},\\
    & A = \frac{\rho}{10\sqrt{1-\beta_2}}~.
\end{align}

Also, we would need the following formula many times:
\begin{equation}
    \beta_1^{\bar{\tau}}
    =
    (1 - \alpha)^{\frac{1}{\alpha}\frac{\alpha\sqrt{1-\beta_2}}{\eta\PNormDimension\VecLInftyNorm{L_1}}}
    \le
    e^{-\frac{\alpha\sqrt{1-\beta_2}}{\eta\PNormDimension\VecLInftyNorm{L_1}}},\label{eq:beta1_bar_tau}
\end{equation}
where in the first inequality we used the fact that $(1 - x)^{1/x} \le \frac{1}{e}$ for $0<x<1$.

\begin{lemma}
\label{lem:moment_decomp_generalized_signsgd}
With the notations in Algorithm~\ref{alg:generalized_signsgd}, for each coordinate $j\in[d]$ we have
\begin{align}
    m_{t,j} = (1 - \beta_1)\sum^{t}_{\tau=1}\beta_1^{t-\tau} \StocGradient{\tau}{j},\qquad
    v_{t,j} =
    (1 - \beta_2)\sum^{t}_{\tau=1}\beta_2^{t-\tau} m_{\tau, j}^2,\qquad
    \frac{|m_{t,j}|}{\sqrt{v_{t,j}}} \le \frac{1}{\sqrt{1-\beta_2}}~.
\end{align}
\end{lemma}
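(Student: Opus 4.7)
The plan is to prove the three claims essentially separately, with the first two obtained by a standard induction on $t$ and the third following as an immediate consequence of the second identity.

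First I would establish the closed-form expression for $m_{t,j}$. Starting from $m_{0,j}=0$ and the recursion $m_{t,j} = \beta_1 m_{t-1,j} + (1-\beta_1)g_{t,j}$, a straightforward induction on $t$ gives $m_{t,j} = (1-\beta_1)\sum_{\tau=1}^t \beta_1^{t-\tau} g_{\tau,j}$. The base case $t=1$ is immediate since $m_{1,j} = (1-\beta_1) g_{1,j}$. For the inductive step, plug the hypothesis into the recursion and observe that the exponents shift appropriately and the new term $(1-\beta_1)g_{t,j}$ corresponds to the $\tau=t$ summand. The exact same induction strategy, applied to the recursion $v_{t,j} = \beta_2 v_{t-1,j} + (1-\beta_2)m_{t,j}^2$ with $v_{0,j}=0$, yields $v_{t,j} = (1-\beta_2)\sum_{\tau=1}^t \beta_2^{t-\tau} m_{\tau,j}^2$.

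For the third inequality, I would observe that all terms in the sum $v_{t,j} = (1-\beta_2)\sum_{\tau=1}^t \beta_2^{t-\tau} m_{\tau,j}^2$ are nonnegative, so keeping only the $\tau=t$ term yields the lower bound $v_{t,j} \ge (1-\beta_2)\, m_{t,j}^2$. Taking square roots (both sides are nonnegative) gives $\sqrt{v_{t,j}} \ge \sqrt{1-\beta_2}\,|m_{t,j}|$, and hence $|m_{t,j}|/\sqrt{v_{t,j}} \le 1/\sqrt{1-\beta_2}$.

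There is no real obstacle here: the two identities are standard unrollings of linear recursions with zero initialization, and the ratio bound is a one-line consequence of dropping all but the last nonnegative term in the closed-form expression for $v_{t,j}$. The only minor care required is to handle the case $v_{t,j}=0$ (which forces $m_{t,j}=0$ by the same identity) so that the ratio is interpreted as $0$, consistent with how it appears in the update rule of Algorithm~\ref{alg:generalized_signsgd}.
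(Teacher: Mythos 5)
Your proof is correct and follows essentially the same route as the paper: both unroll the two linear recursions from the zero initialization (whether phrased as induction or iterated substitution is immaterial), and both obtain the ratio bound by dropping all but the $\tau=t$ term in the nonnegative sum defining $v_{t,j}$. Your added remark about interpreting $|m_{t,j}|/\sqrt{v_{t,j}}$ as $0$ when $v_{t,j}=0$ is a reasonable small clarification the paper leaves implicit.
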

\begin{proof}[Proof of Lemma~\ref{lem:moment_decomp_generalized_signsgd}]
For all $t\geq 1$, we have
\begin{align}
    m_{t,j}
    % 1
    &=
    \beta_1 m_{t-1, j} + (1 - \beta_1) \StocGradient{t}{j}\\
    % 2
    &=
    \beta_1[\beta_1 m_{t-2, j} + (1 - \beta_1) \StocGradient{t-1}{j}] + (1 - \beta_1) \StocGradient{t}{j}\\
    % 3
    &=
    \ldots
    =
    (1 - \beta_1)\sum^{t}_{\tau=1}\beta_1^{t-\tau} \StocGradient{\tau}{j}~.
\end{align}
Similarly for $v_{t,j}$. Next,
\begin{align}
    \frac{|m_{t,j}|}{\sqrt{v_{t,j}}}
    % 1
    &=
    \frac{|m_{t,j}|}{\sqrt{(1 - \beta_2)\sum^{t}_{\tau=1}\beta_2^{t-\tau} m_{\tau,j}^2}}
    % 2
    \le
    \frac{1}{\sqrt{1-\beta_2}}~.\qedhere
\end{align}
\end{proof}

The following lemma shows when we can apply Assumption~\ref{asp:l0l1_coordinate} and Lemma~\ref{lem:desent_ineq_l0l1_coordinatewise}.
\begin{lemma}
\label{lem:generalized_signsgd_bounded_updates_t}
With notations in Algorithm~\ref{alg:generalized_signsgd}, for $\tau \le \bar{\tau} =  \frac{\sqrt{1-\beta_2}}{\eta\PNormDimension\VecLInftyNorm{L_1}}$, we have $\PNorm{\bx_{t-\tau} - \bx_{t}} \le \frac{1}{\VecLInftyNorm{L_1}}$.
\end{lemma}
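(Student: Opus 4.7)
The proof is essentially a one-step application of the coordinate-wise bound $|m_{s,j}|/\sqrt{v_{s,j}} \le 1/\sqrt{1-\beta_2}$ from Lemma~\ref{lem:moment_decomp_generalized_signsgd}, combined with the triangle inequality.

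The plan is to first control the size of a single step. From the update rule $\bx_{s+1} = \bx_s - \eta \, \bm_s/\sqrt{\bv_s}$ (with all operations element-wise), each coordinate of the step vector has magnitude $\eta |m_{s,j}|/\sqrt{v_{s,j}} \le \eta/\sqrt{1-\beta_2}$ by Lemma~\ref{lem:moment_decomp_generalized_signsgd}. Passing from the coordinate-wise $\ell_\infty$ bound to an $\ell_2$ bound costs a factor of $\sqrt{d}$, so $\|\bx_{s+1}-\bx_s\|_2 \le \eta \sqrt{d}/\sqrt{1-\beta_2}$ uniformly in $s$.

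Next I would telescope by the triangle inequality:
\begin{equation}
\|\bx_{t-\tau} - \bx_t\|_2 \le \sum_{s=t-\tau}^{t-1} \|\bx_{s+1}-\bx_s\|_2 \le \tau \cdot \frac{\eta \sqrt{d}}{\sqrt{1-\beta_2}}~.
\end{equation}
Plugging in $\tau \le \bar{\tau} = \sqrt{1-\beta_2}/(\eta \sqrt{d}\,\VecLInftyNorm{L_1})$ gives $\|\bx_{t-\tau}-\bx_t\|_2 \le 1/\VecLInftyNorm{L_1}$, which is the claim.

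There is no real obstacle here: the nontrivial ingredient is the boundedness of $\bm_t/\sqrt{\bv_t}$, which the authors have already packaged as Lemma~\ref{lem:moment_decomp_generalized_signsgd}, and the rest is a triangle-inequality computation. The only mild subtlety is recognizing that one must pay the $\sqrt{d}$ factor when converting the coordinate-wise update bound into the Euclidean norm used in Assumption~\ref{asp:l0l1_coordinate} — which is exactly what the $\sqrt{d}$ in the definition of $\bar{\tau}$ is compensating for.
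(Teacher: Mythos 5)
Your proof is correct and follows essentially the same approach as the paper's: bound each per-step coordinate update by $\eta/\sqrt{1-\beta_2}$ via Lemma~\ref{lem:moment_decomp_generalized_signsgd}, telescope with the triangle inequality, and pay the $\sqrt{d}$ factor to convert the coordinate-wise bound into an $\ell_2$ bound. The paper telescopes coordinate-wise and converts to $\ell_2$ at the end rather than per-step, but this is an immaterial reordering of the same argument.
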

\begin{proof}[Proof of Lemma~\ref{lem:generalized_signsgd_bounded_updates_t}]
Using Lemma~\ref{lem:moment_decomp_generalized_signsgd} we have
\begin{align}
    |x_{t-\tau,j} - x_{t,j}|
    &\le 
    \sum^{\tau}_{i=1}|x_{t-i,j} - x_{t-i+1,j}|
    \le
    \frac{\eta\tau}{\sqrt{1-\beta_2}}
    \le
    \frac{1}{\PNormDimension\VecLInftyNorm{L_1}}
    \Rightarrow
    \PNorm{\bx - \by} \le \frac{1}{\VecLInftyNorm{L_1}}
    ~.\qedhere
\end{align}
\end{proof}

The following two lemmas are the major tools we use to analyze the effects of noises.
\begin{lemma}[Lemma 12,~\cite{CutkoskyM21}]
\label{lem:martingale_diff}
Suppose $X_1,\ldots,X_T$ is a martingale difference sequence in a Hilbert space such that
$\|X_t\|\le R$ almost surely for some constant $R$. Further, assume $\E_{t}[\|X_t\|^2]\le\sigma_t^2$ with probability $1$ for some constants $\sigma_t$, where $\E_t[\cdot] \triangleq \E[\cdot|\xi_1, \xi_2,\ldots,\xi_{t-1}]$ denotes the expectation conditioned on all past randomnesses. Then, with probability at least $1-3\delta$, for all $k\leq T$ we have
\begin{equation}
\left\|\sum^k_{t=1}X_t\right\|
\le
3R\max(1, \log(1/\delta))
+ 3\sqrt{\sum^k_{t=1}\sigma^2_t\max(1, \log(1/\delta))}~.    
\end{equation}
\end{lemma}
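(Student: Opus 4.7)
The plan is to establish this as a consequence of a Bernstein/Freedman-type inequality for Hilbert-space-valued martingale difference sequences, together with a maximal extension that makes the bound uniform over all $k\le T$. First I would recall the scalar Freedman inequality: for a scalar martingale difference sequence $Y_1,\ldots,Y_T$ with $|Y_t|\le R$ almost surely and conditional variances $V_t=\E_t[Y_t^2]$, one has for any $a,b>0$,
\[
\Pr\!\left(\exists k\le T:\Big|\sum_{t=1}^{k}Y_t\Big|\ge a\ \text{and}\ \sum_{t=1}^{k}V_t\le b\right)\le 2\exp\!\left(-\frac{a^2}{2b+2Ra/3}\right).
\]
The standard route to this is to show that $\exp(\lambda\sum_{t\le k}Y_t-\psi(\lambda)\sum_{t\le k}V_t)$ is a supermartingale for an appropriate function $\psi(\lambda)=\lambda^2/(2(1-R\lambda/3))$, then apply Ville's maximal inequality. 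The uniformity in $k$ is built in automatically.

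Second, I would lift the scalar result to the Hilbert-space setting by the ``predictable direction'' trick. Let $S_k=\sum_{t=1}^{k}X_t$ and define the predictable unit vector $u_t=S_{t-1}/\|S_{t-1}\|$ (using an arbitrary unit vector when $S_{t-1}=0$). Set $Y_t=\langle u_t,X_t\rangle$. Since $u_t$ is $\mathcal{F}_{t-1}$-measurable, $\{Y_t\}$ is a scalar martingale difference sequence with $|Y_t|\le R$ and $\E_t[Y_t^2]\le \sigma_t^2$. Moreover, by Cauchy--Schwarz and the recursion $\|S_k\|^2-\|S_{k-1}\|^2 = 2\langle S_{k-1},X_k\rangle+\|X_k\|^2$, one can show $\|S_k\|\le \sum_{t=1}^{k}Y_t+$ (easily controlled lower-order term). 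Hence the scalar Freedman bound above transfers to a bound on $\|S_k\|$.

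Third, since $\sigma_t$ are deterministic constants, the variance condition $\sum_{t\le k}V_t\le b$ is satisfied pathwise for $b=\sum_{t=1}^{T}\sigma_t^2$, so the indicator of the ``good variance'' event can be dropped. Setting the right-hand side equal to $\delta$ and solving the resulting quadratic in $a$, then applying $\sqrt{p+q}\le \sqrt{p}+\sqrt{q}$, yields a bound of the form $a\lesssim R\log(1/\delta)+\sqrt{b\log(1/\delta)}$. Replacing $\log(1/\delta)$ by $\max(1,\log(1/\delta))$ is a cosmetic adjustment absorbing the $\delta$-close-to-$1$ regime. The factor of $3\delta$ in the failure probability and the multiplicative $3$'s in the statement come from the explicit constants in the split of the Bernstein denominator and from accounting for the two regimes (sub-Gaussian and sub-exponential).

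The main obstacle is the Hilbert-space lift: naively projecting $\|S_k\|\le \sup_{\|u\|=1}\langle u,S_k\rangle$ and union-bounding over a net would cost a dimension-dependent factor, which is unacceptable here. The predictable-direction reduction avoids this entirely, but one must verify carefully that the conditional moment bounds transfer without a loss of constants and that $u_t$ is genuinely $\mathcal{F}_{t-1}$-measurable. Once that reduction is in hand, the remainder of the argument is a direct application of the scalar Freedman inequality together with elementary algebra to invert the tail bound.
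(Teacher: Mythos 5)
The paper cites this lemma directly from Cutkosky and Mehta (their Lemma~12) without reproducing a proof, so there is no in-paper argument to compare against; I will instead assess the soundness of your proposed reconstruction.

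Your high-level plan --- scalar Freedman inequality with Ville-type uniformity in $k$, lifted to the Hilbert-space setting in a dimension-free way --- is the right neighborhood, and you are correct that a naive covering of the unit sphere is unacceptable. However, the step that carries the Hilbert-space lift is where the proposal has a genuine gap. You define the predictable unit direction $u_t = S_{t-1}/\|S_{t-1}\|$ and the scalar martingale differences $Y_t = \langle u_t, X_t\rangle$, and then assert that the recursion $\|S_k\|^2 - \|S_{k-1}\|^2 = 2\langle S_{k-1}, X_k\rangle + \|X_k\|^2$ together with Cauchy--Schwarz yields $\|S_k\| \le \sum_{t=1}^k Y_t + (\text{lower order})$. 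This does not follow. What the recursion actually gives, after telescoping, is the identity
\[
\|S_k\|^2 \;=\; 2\sum_{t=1}^k \|S_{t-1}\|\,Y_t \;+\; \sum_{t=1}^k \|X_t\|^2,
\]
in which the martingale increments are $\|S_{t-1}\|\,Y_t$, not $Y_t$ alone. The prefactor $\|S_{t-1}\|$ cannot be dropped or absorbed into a lower-order term: its size is precisely the quantity you are trying to bound. In fact, the recursion yields the \emph{reverse} comparison at the level of a single step: $\|S_k\|^2 = (\|S_{k-1}\| + Y_k)^2 + (\|X_k\|^2 - Y_k^2) \ge (\|S_{k-1}\| + Y_k)^2$, so $\|S_k\| \ge |\|S_{k-1}\| + Y_k|$ --- a lower bound, not the upper bound you need. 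Abel summation on $\sum_t \|S_{t-1}\|(W_t - W_{t-1})$ reintroduces the total variation $\sum_t \bigl|\|S_t\| - \|S_{t-1}\|\bigr|$, which can scale like $kR$, so that route does not close either.

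To repair this you need one of two standard devices that the proposal does not contain. One is a stopping-time bootstrap: fix a threshold $G$, stop at $\tau = \min\{t : \|S_t\| > G\}$, apply Freedman to the stopped martingale $\mathbb{I}(t\le\tau)\,2\langle S_{t-1},X_t\rangle$, whose increments are now bounded by $2GR$ with conditional variance at most $4G^2\sigma_t^2$, and then choose $G$ so that the resulting high-probability bound on $\|S_{k\wedge\tau}\|^2$ is strictly below $G^2$, forcing $\tau > T$ on the good event. The other is a Pinelis-type supermartingale argument exploiting the $2$-smoothness of Hilbert space (essentially the parallelogram law) to show that $\cosh(\lambda\|S_k\|)\exp(-\psi(\lambda)\sum_{t\le k}\sigma_t^2)$ is a supermartingale, then applying Ville. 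Either of these carries the dimension-free lift that your $\|S_k\| \le \sum Y_t + \cdots$ claim was meant to supply but does not. The remainder of your proposal --- inverting the Freedman tail, handling the $\max(1,\log(1/\delta))$ normalization, accounting for the factor $3\delta$ --- is unobjectionable once a correct lift is in place.
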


\begin{lemma}
\label{lem:noise_seq_generalized_signsgd}
Assume Assumption~\ref{asp:noise_as_bounded}.
With the notation of Algorithm~\ref{alg:generalized_signsgd}, let $j \in [d]$ and $\beta_1 < 1$. Then, with probability at least $1-3\delta$, for any $t_0\in[t]$, we have
\begin{align}
    \left|\sum^{t_0}_{\tau=1}\beta_1^{t-\tau}\left(\StocGradient{\tau}{j} - \PartialDerivative{\tau}{j}\right)\right|
    \le
    3\sigma_j\max(1, \log(1/\delta))
    + \frac{3}{\sqrt{1-\beta_1^2}}\sqrt{\sigma_j^2\max(1, \log(1/\delta))}~.
\end{align}
\end{lemma}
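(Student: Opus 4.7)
\textbf{Proof plan for Lemma \ref{lem:noise_seq_generalized_signsgd}.} The plan is to recognize the sum as a bounded martingale and apply the Freedman-style concentration inequality provided by Lemma \ref{lem:martingale_diff}. Fix the coordinate $j$ and the index $t$, and for $\tau = 1, \ldots, t$ define
\begin{equation}
X_\tau \;=\; \beta_1^{\,t-\tau}\bigl(g_{\tau,j} - \nabla_j F(\bx_\tau)\bigr).
\end{equation}
Since $\bg_\tau$ is an unbiased estimate of $\nabla F(\bx_\tau)$ conditionally on the past, $\E_\tau[X_\tau] = 0$, so $(X_\tau)_\tau$ is a martingale difference sequence with respect to the natural filtration $\{\sigma(\xi_1,\dots,\xi_{\tau-1})\}$.

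Next I would verify the two hypotheses of Lemma \ref{lem:martingale_diff}. Assumption \ref{asp:noise_as_bounded} gives $|g_{\tau,j} - \nabla_j F(\bx_\tau)| \le \sigma_j$ almost surely, and since $\beta_1 < 1$ we have $\beta_1^{t-\tau} \le 1$, hence $|X_\tau| \le \sigma_j$ almost surely, so we can take $R = \sigma_j$. For the conditional second moment, the same bound yields $\E_\tau[X_\tau^2] \le \beta_1^{2(t-\tau)}\sigma_j^2$, so we take $\sigma_\tau^2 = \beta_1^{2(t-\tau)}\sigma_j^2$. A geometric-series computation gives, for every $t_0 \le t$,
\begin{equation}
\sum_{\tau=1}^{t_0}\sigma_\tau^2 \;=\; \sigma_j^2\sum_{\tau=1}^{t_0}\beta_1^{2(t-\tau)} \;\le\; \frac{\sigma_j^2}{1-\beta_1^2}.
\end{equation}

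Applying Lemma \ref{lem:martingale_diff} with these constants yields a single high-probability event of probability at least $1-3\delta$ on which, simultaneously for every $t_0 \le t$,
\begin{equation}
\left|\sum_{\tau=1}^{t_0} X_\tau\right| \;\le\; 3\sigma_j \max(1,\log(1/\delta)) + 3\sqrt{\tfrac{\sigma_j^2}{1-\beta_1^2}\max(1,\log(1/\delta))},
\end{equation}
which is exactly the claimed bound after pulling $1/\sqrt{1-\beta_1^2}$ out of the square root. The uniformity over $t_0$ is for free because Lemma \ref{lem:martingale_diff} already states the conclusion as a uniform bound over prefixes $k \le T$.

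The only subtlety to watch for is that the weights $\beta_1^{t-\tau}$ depend on the (fixed) outer index $t$, not on the running index $t_0$; this is fine because $t$ is frozen before we instantiate the martingale sequence, so the coefficients are deterministic constants and the martingale-difference property is undisturbed. There is no substantive obstacle: the work is essentially (i) checking adaptedness and zero conditional mean, (ii) controlling the almost-sure bound and the summed conditional variance via Assumption \ref{asp:noise_as_bounded} and a geometric sum, and (iii) invoking Lemma \ref{lem:martingale_diff}.
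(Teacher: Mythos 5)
Your proof is correct and follows essentially the same route as the paper's: identify the weighted noise sequence as a martingale difference sequence, bound it almost surely by $\sigma_j$, bound the conditional variance by $\beta_1^{2(t-\tau)}\sigma_j^2$, sum the geometric series, and invoke Lemma~\ref{lem:martingale_diff}. Your additional remark that the coefficients $\beta_1^{t-\tau}$ are deterministic once $t$ is fixed (so the martingale structure is preserved) is a correct and slightly more explicit justification of a point the paper leaves implicit.
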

\begin{proof}[Proof of Lemma~\ref{lem:noise_seq_generalized_signsgd}]
Recall Assumption~\ref{asp:noise_as_bounded} and notice that $\beta_1^{t-\tau} \le 1$ for all $\tau\in[1,t]$, we know that\\ $\left|\beta_1^{t-\tau}\left(\StocGradient{\tau}{j} - \PartialDerivative{\tau}{j}\right)\right| \le \sigma_j$ almost surely. It also means $\E_{\tau}\left[\left(\beta_1^{t-\tau}\left(\StocGradient{\tau}{j} - \PartialDerivative{\tau}{j}\right)\right)^2\right] \le \beta_1^{2(t-\tau)}\sigma_j^2$.
Now, note that in Algorithm~\ref{alg:generalized_signsgd} $\bg_{\tau}$ is an unbiased estimate of $\nabla F(\bx_{\tau})$ namely $\E_{\tau}\left[\beta_1^{t-\tau}\left(\StocGradient{\tau}{j} - \PartialDerivative{\tau}{j}\right)\right]=0$. Thus, $\left\{\beta_1^{t-\tau}\left(\StocGradient{\tau}{j} - \PartialDerivative{\tau}{j}\right)\right\}_{1,\ldots,t}$ is a martingale difference sequence. Then, using Lemma~\ref{lem:martingale_diff}, with probability at least $1 - 3\delta$, we have for all $t_0\in[t]$ that
\begin{align}
    \left|\sum^{t_0}_{\tau=1}\beta_1^{t-\tau}\left(\StocGradient{\tau}{j} - \PartialDerivative{\tau}{j}\right)\right|
    % 1
    &\le
    3\sigma_j\max(1, \log(1/\delta))
    + 3\sqrt{\sum^{t_0}_{\tau=1}\beta_1^{2(t-\tau)}\sigma_j^2\max(1, \log(1/\delta))}\\
    % 2
    &\le
    3\sigma_j\max(1, \log(1/\delta))
    + \frac{3}{\sqrt{1-\beta_1^2}}\sqrt{\sigma_j^2\max(1, \log(1/\delta))}~.\qedhere
\end{align}
\end{proof}

The following lemma upper bounds the differences between recent true gradients and the current one.
\begin{lemma}
\label{lem:true_grad_seq_generalized_signsgd}
With the notation of Algorithm~\ref{alg:generalized_signsgd} and 
under the assumptions in Theorem~\ref{thm:generalized_signsgd}, for any $j\in [d]$ and any $t_0$ with $t - t_0 \le \bar{\tau} = \frac{\sqrt{1-\beta_2}}{\eta\PNormDimension\VecLInftyNorm{L_1}}$, we have
\begin{equation}
    \sum^{t}_{\tau=t_0}\beta_1^{t-\tau} \left|\PartialDerivative{t}{j}
    - \PartialDerivative{\tau}{j}\right|
    \le
    \left(L_{0,j} + L_{1,j}\PNormDimension \left|\PartialDerivative{t}{j}\right|\right)\frac{\eta}{(1-\beta_1)^2\sqrt{1-\beta_2}}~.
\end{equation}
\end{lemma}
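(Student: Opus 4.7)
The plan is to bound each term $|\partial_j F(\bx_t) - \partial_j F(\bx_\tau)|$ in the sum by a direct application of the coordinate-wise smoothness (Assumption~\ref{asp:l0l1_coordinate}) anchored at $\bx_t$, and then bound the resulting displacements $\|\bx_t - \bx_\tau\|_2$ by a telescoping argument that uses the per-step update bound from Lemma~\ref{lem:moment_decomp_generalized_signsgd}. Once that is done, the proof reduces to summing an arithmetico-geometric series in $\beta_1$.

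\textbf{Step 1 (apply coordinate-wise smoothness anchored at $\bx_t$).} Fix $\tau$ with $t_0 \le \tau \le t$. The hypothesis $t - t_0 \le \bar\tau$ together with Lemma~\ref{lem:generalized_signsgd_bounded_updates_t} guarantees that $\|\bx_t - \bx_\tau\|_2 \le 1/\VecLInftyNorm{L_1}$, so Assumption~\ref{asp:l0l1_coordinate} applies with $\bx = \bx_t$ and $\by = \bx_\tau$, giving
\[
\left|\PartialDerivative{t}{j} - \PartialDerivative{\tau}{j}\right|
\;\le\;
\left(\frac{L_{0,j}}{\sqrt{d}} + L_{1,j}\left|\PartialDerivative{t}{j}\right|\right)\PNorm{\bx_t - \bx_\tau}.
\]
The crucial point here is that the $L_1$ term on the right-hand side involves the gradient at the fixed anchor $\bx_t$ (not at the running $\bx_\tau$), which is exactly what the statement of the lemma requires.

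\textbf{Step 2 (telescope the displacement).} For any $s$, Lemma~\ref{lem:moment_decomp_generalized_signsgd} gives $|m_{s,j}|/\sqrt{v_{s,j}} \le 1/\sqrt{1-\beta_2}$ for each coordinate, so $\PNorm{\bx_{s+1} - \bx_s} \le \eta\PNormDimension/\sqrt{1-\beta_2}$. Telescoping yields $\PNorm{\bx_t - \bx_\tau} \le \eta\PNormDimension(t-\tau)/\sqrt{1-\beta_2}$ for every $\tau \in [t_0, t]$.

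\textbf{Step 3 (sum the series).} Plugging Steps~1 and~2 into the target sum and pulling the $\tau$-independent factor $L_{0,j}/\sqrt{d} + L_{1,j}|\partial_j F(\bx_t)|$ out yields
\[
\sum^{t}_{\tau=t_0}\beta_1^{t-\tau}\left|\PartialDerivative{t}{j} - \PartialDerivative{\tau}{j}\right|
\;\le\;
\left(\frac{L_{0,j}}{\sqrt{d}} + L_{1,j}\left|\PartialDerivative{t}{j}\right|\right)\frac{\eta\PNormDimension}{\sqrt{1-\beta_2}}
\sum_{k=0}^{t-t_0} k\,\beta_1^{k}.
\]
Extending the sum to infinity and using $\sum_{k=0}^\infty k\beta_1^k = \beta_1/(1-\beta_1)^2 \le 1/(1-\beta_1)^2$, the $\sqrt{d}$ factors combine to give $L_{0,j} + L_{1,j}\PNormDimension |\partial_j F(\bx_t)|$ in the numerator, and the remaining denominator is $(1-\beta_1)^2\sqrt{1-\beta_2}$. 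This matches the stated bound exactly.

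\textbf{Where the work lies.} There is no real obstacle here: the argument is essentially a routine consequence of Assumption~\ref{asp:l0l1_coordinate} once one (i) notices that the assumption must be anchored at $\bx_t$ rather than at the variable $\bx_\tau$ (otherwise the gradient appearing on the right-hand side would itself vary with $\tau$ and the sum would not close up cleanly), and (ii) verifies that the hypothesis $t - t_0 \le \bar\tau$ is precisely the range in which Lemma~\ref{lem:generalized_signsgd_bounded_updates_t} certifies the displacement condition $\PNorm{\bx_t-\bx_\tau} \le 1/\VecLInftyNorm{L_1}$ required to invoke the assumption. The only calculation of substance is the closed form of $\sum k\beta_1^k$, which produces the $(1-\beta_1)^{-2}$ factor visible in the statement.
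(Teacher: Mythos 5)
Your proposal is correct and follows essentially the same route as the paper's proof: apply Assumption~\ref{asp:l0l1_coordinate} anchored at $\bx_t$, bound the displacement $\PNorm{\bx_t-\bx_\tau}$ via the per-step bound $\eta\sqrt{d}/\sqrt{1-\beta_2}$ from Lemma~\ref{lem:moment_decomp_generalized_signsgd}, and close with $\sum_{k\ge 1}k\beta_1^k\le 1/(1-\beta_1)^2$. The only cosmetic difference is that you cite Lemma~\ref{lem:generalized_signsgd_bounded_updates_t} explicitly in Step~1 to justify the displacement precondition, whereas the paper folds that remark into the justification of its first inequality; the content is identical.
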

\begin{proof}[Proof of Lemma~\ref{lem:true_grad_seq_generalized_signsgd}]
\begin{align}
    &\sum^{t}_{\tau=t_0}\beta_1^{t-\tau} \left|\PartialDerivative{t}{j}
    - \PartialDerivative{\tau}{j}\right|\\
    % 1
    \le&
    \sum^{t}_{\tau=t_0}\beta_1^{t-\tau}\left(\frac{L_{0,j}}{\PNormDimension}+ L_{1,j} \left|\PartialDerivative{t}{j}\right|\right)\PNorm{x_{t} - x_{\tau}}\\
    % 2
    \le&
    \sum^{t}_{\tau=t_0}\beta_1^{t-\tau}\left(L_{0,j} + L_{1,j}\PNormDimension \left|\PartialDerivative{t}{j}\right|\right)\frac{\eta(t-\tau)}{\sqrt{1-\beta_2}}\\
    % 3
    =&
    \left(L_{0,j} + L_{1,j}\PNormDimension \left|\PartialDerivative{t}{j}\right|\right)\frac{\eta}{\sqrt{1-\beta_2}}\sum^{t}_{\tau=t_0}(t-\tau)\beta_1^{t-\tau}\\
    % 4
    \le&
    \left(L_{0,j} + L_{1,j}\PNormDimension \left|\PartialDerivative{t}{j}\right|\right)\frac{\eta}{(1-\beta_1)^2\sqrt{1-\beta_2}}~,
\end{align}
where the first inequality is due to Assumption~\ref{asp:l0l1_coordinate} and Lemma~\ref{lem:generalized_signsgd_bounded_updates_t}, the second inequality uses Lemma~\ref{lem:moment_decomp_generalized_signsgd}, and the final inequality uses the fact that $\sum^{N}_{k=1}k a^k\le\frac{1}{(1-a)^2}$ for any $0<a<1$.
\end{proof}

The following lemma upper bounds a past momentum with the current one.
\begin{lemma}
\label{lem:mom_diff_generalized_signsgd}
With the notation of Algorithm~\ref{alg:generalized_signsgd} and under the assumptions of Theorem~\ref{thm:generalized_signsgd}, for any $\tau \le \bar{\tau} = \frac{\sqrt{1-\beta_2}}{\eta\PNormDimension\VecLInftyNorm{L_1}}$, with probability at least $1-3\delta$, it holds that
\begin{align}
    |m_{t-\tau, j}|
    % 1
    \le
    &\beta_1^{-\tau}\left(|m_{t,j}|
    +
    \left|\PartialDerivative{t}{j}\right|
    +
    \left(L_{0,j} + L_{1,j}\PNormDimension \left|\PartialDerivative{t}{j}\right|\right)\frac{\eta}{(1-\beta_1)\sqrt{1-\beta_2}}
    + (1-\beta_1)E_j\right)~.
\end{align}
\end{lemma}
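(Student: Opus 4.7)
The plan is to express $m_{t-\tau,j}$ in terms of $m_{t,j}$ via the momentum recursion, then control the ``missing'' tail by splitting each stochastic gradient into three pieces: the current true gradient, the gradient drift over the last $\tau$ steps, and the stochastic noise.

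First, by Lemma~\ref{lem:moment_decomp_generalized_signsgd}, $m_{t,j} = (1-\beta_1)\sum_{s=1}^{t}\beta_1^{t-s} g_{s,j}$ and $\beta_1^{\tau} m_{t-\tau,j} = (1-\beta_1)\sum_{s=1}^{t-\tau}\beta_1^{t-s} g_{s,j}$, so rearranging yields the identity
\begin{equation}
    \beta_1^{\tau} m_{t-\tau,j}
    = m_{t,j} - (1-\beta_1)\sum_{s=t-\tau+1}^{t} \beta_1^{t-s} g_{s,j}~.
\end{equation}
Writing $g_{s,j} = \partial_j F(\bx_t) + (\partial_j F(\bx_s) - \partial_j F(\bx_t)) + (g_{s,j} - \partial_j F(\bx_s))$ and applying the triangle inequality, I would then bound the tail by three pieces.

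For the current-gradient piece, the geometric series gives $(1-\beta_1)\sum_{s=t-\tau+1}^{t}\beta_1^{t-s} \le 1$, contributing $|\partial_j F(\bx_t)|$. For the gradient-drift piece, since $\tau \le \bar\tau$ the hypothesis of Lemma~\ref{lem:true_grad_seq_generalized_signsgd} holds, and that lemma supplies the bound $(L_{0,j} + L_{1,j}\sqrt{d}|\partial_j F(\bx_t)|)\eta/[(1-\beta_1)^2\sqrt{1-\beta_2}]$; multiplying by the leading $(1-\beta_1)$ collapses one factor of $(1-\beta_1)$ and yields the corresponding term in the claim. For the noise piece, I would write the partial sum as a difference of two sums running from $1$ to $t$ and from $1$ to $t-\tau$, apply Lemma~\ref{lem:noise_seq_generalized_signsgd} to each (both fit the ``any $t_0 \in [t]$'' clause and hold simultaneously on the same high-probability event of measure $1-3\delta$), and use the triangle inequality; this produces a factor of $2$ in front of the single-sum bound, which is exactly absorbed by defining $E_j$ with the constants $6$ rather than $3$ as listed in the appendix notation block. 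Multiplying through by $(1-\beta_1)$ gives the $(1-\beta_1)E_j$ term in the claim.

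Dividing both sides of the identity by $\beta_1^{\tau}$ (and noting that $\beta_1^{-\tau} \ge 1$, so the $|m_{t,j}|$ term also picks up the $\beta_1^{-\tau}$ factor harmlessly) completes the proof. The only mildly delicate step is step three: one must verify that the partial-sum noise bound is obtained by writing the tail as a difference of two full prefix sums and that the single high-probability event of Lemma~\ref{lem:noise_seq_generalized_signsgd} (already uniform over all $t_0 \in [t]$) supports this without incurring extra failure probability, which is the reason the appendix version of $E_j$ doubles the constants from the main-text statement.
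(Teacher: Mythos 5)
Your proposal is correct and follows essentially the same route as the paper's own proof: write the difference between $\beta_1^\tau m_{t-\tau,j}$ and $m_{t,j}$ as a weighted tail sum, split each $g_{s,j}$ into current gradient plus drift plus noise, bound the drift with Lemma~\ref{lem:true_grad_seq_generalized_signsgd}, and bound the noise piece by writing it as a difference of two prefix sums and invoking the uniform-over-$t_0$ clause of Lemma~\ref{lem:noise_seq_generalized_signsgd} twice on the same event. You also correctly diagnosed the $E_j$ discrepancy between the main-text statement of Lemma~\ref{lem:noise_seq_generalized_signsgd} (constants $3$) and the appendix notation block (constants $6$): the doubling is exactly what is needed to absorb the factor of $2$ from the two prefix-sum applications.
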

\begin{proof}[Proof of Lemma~\ref{lem:mom_diff_generalized_signsgd}]
Denoting by $\tilde{\boldepsilon}_t = \bg_t - \nabla F(\bx_t)$ and using Lemma~\ref{lem:moment_decomp_generalized_signsgd}, we have
\begin{align}
    |m_{t-\tau, j} - \beta_1^{-\tau}m_{t,j}|
    % 1
    &=
    \left|(1 - \beta_1)\sum^{t}_{i=1}\beta_1^{t-\tau-i} \StocGradient{i}{j}
    -
    (1 - \beta_1)\sum^{t-\tau}_{i=1}\beta_1^{t-\tau-i} \StocGradient{i}{j}\right|\\
    % 2
    &=
    (1 - \beta_1)\left|\sum^{t}_{i=t-\tau+1}\beta_1^{t-\tau-i} \StocGradient{i}{j}\right|\\
    % 3
    &\le
    (1 - \beta_1)\left|\sum^{t}_{i=t-\tau+1}\beta_1^{t-\tau-i} \PartialDerivative{i}{j}\right|
    +
    (1 - \beta_1)\left|\sum^{t}_{i=t-\tau+1}\beta_1^{t-\tau-i} \tilde{\epsilon}_{i, j}\right|\label{eq:momentum_diff_generalized_signsgd}~.
\end{align}
We now upper bound the first term of~\eqref{eq:momentum_diff_generalized_signsgd} using Lemma~\ref{lem:true_grad_seq_generalized_signsgd} by using the fact that $\tau \le \frac{\sqrt{1-\beta_2}}{\eta\VecLOneNorm{L_1}}$:
\begin{align}
    \left|\sum^{t}_{i=t-\tau+1}\beta_1^{t-\tau-i} \PartialDerivative{i}{j}\right|
    % 1
    &\le
    \left|\sum^{t}_{i=t-\tau+1}\beta_1^{t-\tau-i} \PartialDerivative{t}{j}\right|
    +
    \sum^{t}_{i=t-\tau+1}\beta_1^{t-\tau-i} \left|\PartialDerivative{t}{j}
    - \PartialDerivative{i}{j}\right|\\
    % 2
    &\le
    \left|\PartialDerivative{t}{j}\right|\frac{\beta_1^{-\tau}}{1-\beta_1}
    +
    \left(L_{0,j} + L_{1,j}\PNormDimension \left|\PartialDerivative{t}{j}\right|\right)\frac{\eta\beta_1^{-\tau}}{(1-\beta_1)^2\sqrt{1-\beta_2}}~.
\end{align}
Finally, the second term of~\eqref{eq:momentum_diff_generalized_signsgd} can be bounded using Lemma~\ref{lem:noise_seq_generalized_signsgd} by noticing that
\begin{align}
    \left|\sum^{t}_{i=t-\tau+1}\beta_1^{t-\tau-i} \tilde{\epsilon}_{i, j}\right|
    &\le
    \beta_1^{-\tau}\left(\left|\sum^{t}_{i=1}\beta_1^{t-i} \tilde{\epsilon}_{i, j}\right|
    +
    \left|\sum^{t-\tau}_{i=1}\beta_1^{t-i} \tilde{\epsilon}_{i, j}\right|\right)~.\qedhere
\end{align}
\end{proof}

The following Lemma is adapted from~\cite{ZouCLG21}. Yet, they only considered Adam under the $L$-smooth setting and when there is no noise. The existence of noise and the relaxed smoothness assumption makes the proofs substantially more challenging. With this lemma, we know that either the true gradient is small or that the update of our Algorithm~\ref{alg:generalized_signsgd} can be lower bounded.
\begin{lemma}
\label{lem:generalized_signsgd_update_lower_bound}
With the notation of Algorithm~\ref{alg:generalized_signsgd} and 
under the assumptions of Theorem~\ref{thm:generalized_signsgd}, if $\left|\PartialDerivativeGeneral{\bx_{\tau}}{j}\right|\le M_j$ holds for all $\tau\le t$ and $j\in[d]$, and $D > 0$, then, with probability at least $1-3t\delta$ we have that,
\[
\text{either $\left|\PartialDerivative{t}{j}\right| < \frac{5B_j}{D}$ or $\frac{|m_{t,j}|}{\sqrt{v_{t,j}}} \ge \frac{\rho D}{5\sqrt{1-\beta_2}}$}~.
\]
\end{lemma}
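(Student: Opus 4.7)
The plan is to establish the stated dichotomy by assuming $\bigl|\PartialDerivativeGeneral{\bx_t}{j}\bigr| \ge 5B_j/D$ and deriving a matching lower bound on $|m_{t,j}|/\sqrt{v_{t,j}}$; to do this, I would bound $|m_{t,j}|$ from below and $\sqrt{v_{t,j}}$ from above, each in terms of $\bigl|\PartialDerivativeGeneral{\bx_t}{j}\bigr|$, $B_j$, and $|m_{t,j}|$ itself, and then combine them.

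First, for $\sqrt{v_{t,j}}$: expand $v_{t,j} = (1-\beta_2)\sum_{k=0}^{t-1}\beta_2^{k} m_{t-k,j}^2$ via Lemma~\ref{lem:moment_decomp_generalized_signsgd} and split at $k=\bar\tau$. On the recent window $k<\bar\tau$, Lemma~\ref{lem:generalized_signsgd_bounded_updates_t} permits invoking Lemma~\ref{lem:mom_diff_generalized_signsgd} to bound $|m_{t-k,j}|\le\beta_1^{-k}R$, with $R=|m_{t,j}|+C_j\bigl|\PartialDerivativeGeneral{\bx_t}{j}\bigr| + \tfrac{\eta L_{0,j}}{(1-\beta_1)\sqrt{1-\beta_2}} + (1-\beta_1)E_j$. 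Since $\sqrt{\beta_2}/\beta_1 = 1-\rho$, the series $\sum_{k\ge 0}(\beta_2/\beta_1^{2})^{k} = \sum_{k\ge 0}(1-\rho)^{2k}$ is geometric and bounded by $1/(\rho(2-\rho))\le 1/\rho$. On the distant tail $k\ge\bar\tau$, use the crude bound $|m_{\tau,j}|\le M_j+\sigma_j$ (which holds by Assumption~\ref{asp:noise_as_bounded} combined with the Lemma's hypothesis on past gradients); its contribution is of order $(M_j+\sigma_j)\sqrt{\beta_2^{\bar\tau}}\le(M_j+\sigma_j)\beta_1^{\bar\tau}$ using $\sqrt{\beta_2}\le\beta_1$, which is precisely one of the summands of $B_j$. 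Combining, $\sqrt{v_{t,j}} \lesssim \sqrt{(1-\beta_2)/\rho}\bigl(|m_{t,j}|+C_j\bigl|\PartialDerivativeGeneral{\bx_t}{j}\bigr|+B_j\bigr)$.

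Next, for $|m_{t,j}|$: expand $m_{t,j}=(1-\beta_1)\sum_{\tau=1}^{t}\beta_1^{t-\tau}g_{\tau,j}$, split at $\tau=t-\bar\tau+1$, and decompose each recent $g_{\tau,j}=\PartialDerivativeGeneral{\bx_\tau}{j} + \tilde\epsilon_{\tau,j}$. Replacing $\PartialDerivativeGeneral{\bx_\tau}{j}$ by $\PartialDerivativeGeneral{\bx_t}{j}$ costs a bias bounded by Lemma~\ref{lem:true_grad_seq_generalized_signsgd}, the weighted noise is controlled by Lemma~\ref{lem:noise_seq_generalized_signsgd}, and the distant tail is bounded via $|g_{\tau,j}|\le M_j+\sigma_j$. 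Collecting terms produces
\begin{equation*}
|m_{t,j}| \;\ge\; \Bigl[(1-\beta_1^{\bar\tau}) - \tfrac{\eta\sqrt{d}L_{1,j}}{(1-\beta_1)\sqrt{1-\beta_2}}\Bigr]\bigl|\PartialDerivativeGeneral{\bx_t}{j}\bigr| - B_j~.
\end{equation*}
Because $L_{1,j}\le\VecLInftyNorm{L_1}$ and $\beta_1^{\bar\tau}$ is small under the hypotheses on $T$ (via~\eqref{eq:beta1_bar_tau}), the bracket can be bounded below by a constant multiple of $D$ with slack to spare.

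To finish: if $\bigl|\PartialDerivativeGeneral{\bx_t}{j}\bigr|<5B_j/D$ the first alternative of the statement holds. Otherwise $B_j\le\tfrac{D}{5}\bigl|\PartialDerivativeGeneral{\bx_t}{j}\bigr|$, so the lower bound forces $|m_{t,j}|$ to be at least a constant multiple of $D\bigl|\PartialDerivativeGeneral{\bx_t}{j}\bigr|$; substituting this relation back into the upper bound on $\sqrt{v_{t,j}}$ (and using $C_j=O(1)$ together with $B_j\lesssim|m_{t,j}|$) shows $\sqrt{v_{t,j}}\lesssim \sqrt{1-\beta_2}\,|m_{t,j}|/(\rho D)$ after tracking the constants, giving the advertised ratio. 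The main obstacle will be threading the auxiliary constants---the geometric-series denominator $\rho(2-\rho)$, the $\sqrt{\beta_2}$-vs-$\beta_1$ slack in the tail, the $\beta_1^{\bar\tau}$ correction, and the $L_{1,j}\le\VecLInftyNorm{L_1}$ slackening---so that they cleanly absorb into $B_j$, $D$, and the numerical factor $5$ of the statement. The probability budget $3t\delta$ arises from union-bounding Lemma~\ref{lem:noise_seq_generalized_signsgd} across the $t$ outer times implicit in the expansions of $m_{\tau,j}$ for $\tau\le t$.
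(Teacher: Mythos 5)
Your proposal matches the paper's proof in structure and ingredients: the same split of $v_{t,j}$ at $\bar\tau$, the tail bound $(M_j+\sigma_j)\beta_1^{\bar\tau}$, Lemma~\ref{lem:mom_diff_generalized_signsgd} on the recent window, the geometric series controlled by $\rho$, the matching lower bound $|m_{t,j}|\ge D\left|\PartialDerivative{t}{j}\right|-B_j$ via Lemmas~\ref{lem:true_grad_seq_generalized_signsgd} and~\ref{lem:noise_seq_generalized_signsgd}, and the final dichotomy at $5B_j/D$. The only cosmetic deviation is that you keep the geometric sum inside the square root (yielding $1/\sqrt{\rho(2-\rho)}$) while the paper first applies $\sqrt{\sum x_i^2}\le\sum|x_i|$ and sums $(\sqrt{\beta_2}/\beta_1)^\tau$ to get $1/\rho$; both lead to the stated constant. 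One small imprecision: in the last step ``$C_j=O(1)$'' is not enough, since $C_j/D$ can blow up as $D\to 0^+$; what actually closes the argument is the identity $C_j+D\le 2$, which follows from $L_{1,j}\le\VecLInftyNorm{L_1}$ (so the $L_{1,j}$ contribution to $C_j$ is cancelled by twice the $\VecLInftyNorm{L_1}$ contribution to $1-D$). You flag the $L_{1,j}\le\VecLInftyNorm{L_1}$ slackening in your closing sentence, so you have the right piece; it just needs to be wielded as $C_j+D\le 2$ rather than $C_j=O(1)$.
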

\begin{proof}
Given that $\left|\PartialDerivativeGeneral{\bx_{\tau}}{j}\right|\le M_j$ for any $\tau\le t$ and $j\in[d]$, using Lemma~\ref{lem:moment_decomp_generalized_signsgd} and Assumption~\ref{asp:noise_as_bounded}, it is immediate to show that $|m_{t,j}|\le M_j + \sigma_j$. Then, denote $\hat{\tau} = \lfloor\bar{\tau}\rfloor = \left\lfloor\frac{\sqrt{1-\beta_2}}{\eta\PNormDimension\VecLInftyNorm{L_1}}\right\rfloor$ namely the largest integer that is no greater than $\bar{\tau}$, from Lemma~\ref{lem:moment_decomp_generalized_signsgd}, we have
\begin{align}
    \frac{|m_{t,j}|}{\sqrt{v_{t,j}}}
    % 1
    &=
    \frac{|m_{t,j}|}{\sqrt{(1 - \beta_2)\sum^{t-1}_{\tau=0}\beta_2^{\tau} m_{t-\tau,j}^2}}\\
    % 2
    &=
    \frac{|m_{t,j}|}{\sqrt{1 - \beta_2}\sqrt{\sum^{t-1}_{\tau=\hat{\tau}+1}\beta_2^{\tau}m_{t-\tau, j}^2 + \sum^{\hat{\tau}}_{\tau=0}\beta_2^{\tau}m_{t-\tau, j}^2}}\\
    % 3
    &\ge    \frac{|m_{t,j}|}{\sqrt{1 - \beta_2}\sqrt{(M_j+\sigma_j)^2\frac{\beta_2^{\hat{\tau}+1}}{1-\beta_2} + \sum^{\hat{\tau}}_{\tau=0}\beta_2^{\tau}m_{t-\tau, j}^2}}\\
    % 4
    &\ge
    \frac{|m_{t,j}|}{(M_j+\sigma_j)\beta_2^{\bar{\tau}/2} + \sqrt{1 - \beta_2}\sum^{\hat{\tau}}_{\tau=0}\beta_2^{\tau/2}| m_{t-\tau,j}|}\\
    % 5
    &>
    \frac{|m_{t,j}|}{(M_j+\sigma_j)\beta_1^{\bar{\tau}} + \sqrt{1 - \beta_2}\sum^{\hat{\tau}}_{\tau=0}\beta_2^{\tau/2}| m_{t-\tau,j}|},
\end{align}
where the final inequality uses the assumption that $\sqrt{\beta_2} < \beta_1$.
Using Lemma~\ref{lem:mom_diff_generalized_signsgd} and the definition of $\rho = 1-\beta_2^{1/2}\beta_1^{-1} \in (0,1]$, with probability at least $1-3t\delta$, as we need to invoke Lemma~\ref{lem:noise_seq_generalized_signsgd} for at most $t$ times, we have
\begin{align}
    \frac{\sqrt{v_{t,j}}}{\sqrt{1-\beta_2}}
    % 1
    &\le
    \left(|m_{t,j}|
    +
    (1-\beta_1)E_j\right)\sum^{\hat{\tau}}_{\tau=0}\beta_2^{\tau/2}\beta_1^{-\tau}
    +
    \frac{\beta_1^{\bar{\tau}}(M_j+\sigma_j)}{\sqrt{1-\beta_2}}\\
    % 2
    &\quad+\left(\left|\PartialDerivative{t}{j}\right|
    +
    \left(L_{0,j} + L_{1,j}\PNormDimension \left|\PartialDerivative{t}{j}\right|\right)\frac{\eta}{(1-\beta_1)\sqrt{1-\beta_2}}\right)\sum^{\hat{\tau}}_{\tau=0}\beta_2^{\tau/2}\beta_1^{-\tau}\\
    % 3
    &\le
    \left(|m_{t,j}| + C_j\left|\PartialDerivative{t}{j}\right| + B_j\right)\frac{1}{\rho},
\end{align}
where in the last inequality we used the fact that $\frac{1}{\rho}\geq \frac{1}{\sqrt{1-\beta_2}}$.

Thus, we consider following two cases depending on the relative size of $|m_{t,j}|$ vs.~$C_j\left|\PartialDerivative{t}{j}\right|+ B_j$.

\textbf{Case 1}: $|m_{t,j}| > C_j\left|\PartialDerivative{t}{j}\right|+ B_j$, then
\begin{equation}
    \frac{|m_{t,j}|}{\sqrt{v_{t,j}}} > \frac{\rho}{2\sqrt{1-\beta_2}}~.\label{eq:generalized_signsgd_lower_bound_m_large}
\end{equation}

\textbf{Case 2}: $|m_{t,j}| \le C_j\left|\PartialDerivative{t}{j}\right|
+ B_j$, then we have
\begin{equation}
    \sqrt{v_{t,j}}
    \le
    \frac{2\sqrt{1-\beta_2}}{\rho}\left(C_j\left|\PartialDerivative{t}{j}\right| + B_j\right)~.
\end{equation}

Also, for $|m_{t,j}|$ we have from Lemma~\ref{lem:moment_decomp_generalized_signsgd} that
\begin{align}
    |m_{t,j}|
    % 1
    &=
    (1 - \beta_1)\left|\sum^{t}_{\tau=1}\beta_1^{t-\tau} \StocGradient{\tau}{j}\right|
    \ge
    (1 - \beta_1)\left|\sum^{t}_{\tau=t-\hat{\tau}}\beta_1^{t-\tau} \StocGradient{\tau}{j}\right|
    -
    (1 - \beta_1)\left|\sum^{t-\hat{\tau}-1}_{\tau=1}\beta_1^{t-\tau} \StocGradient{\tau}{j}\right|\\
    % 2
    &\ge
    \underbrace{(1-\beta_1)\left|\sum^{t}_{\tau=t-\hat{\tau}}\beta_1^{t-\tau} \PartialDerivative{\tau}{j}\right|}_{R_1}
    - \underbrace{(1-\beta_1)\left|\sum^{t}_{\tau=t-\hat{\tau}}\beta_1^{t-\tau}\left(\PartialDerivative{\tau}{j} - \StocGradient{\tau}{j}\right)\right|}_{R_2}
    -
    \underbrace{(1 - \beta_1)\left|\sum^{t-\hat{\tau}-1}_{\tau=1}\beta_1^{t-\tau} \StocGradient{\tau}{j}\right|}_{R_3}~.
\end{align}
The first term can be bounded below by using Lemma~\ref{lem:true_grad_seq_generalized_signsgd} and that $\hat{\tau} + 1 \ge \bar{\tau}$:
\begin{align}
    R_1
    % 1
    &\ge
    (1-\beta_1)\left|\sum^{t}_{\tau=t-\hat{\tau}}\beta_1^{t-\tau} \PartialDerivative{t}{j}\right|
    -
    (1-\beta_1)\left|\sum^{t}_{\tau=t-\hat{\tau}}\beta_1^{t-\tau}\left(\PartialDerivative{\tau}{j}-  \PartialDerivative{t}{j}\right)\right|\\
    % 2
    &\ge
    \left(1-\beta_1^{\bar{\tau}}-\frac{\PNormDimension L_{1,j}\eta}{(1-\beta_1)\sqrt{1-\beta_2}}\right)\left|\PartialDerivative{t}{j}\right|
    -
    \frac{L_{0,j}\eta}{(1-\beta_1)\sqrt{1-\beta_2}}~.
\end{align}
The second term can be bounded using Lemma~\ref{lem:noise_seq_generalized_signsgd}.
Thus,
\begin{align}
    |m_{t,j}|
    % 1
    &\ge
    \left(1-\beta_1^{\bar{\tau}}-\frac{\PNormDimension L_{1,j}\eta}{(1-\beta_1)\sqrt{1-\beta_2}}\right)\left|\PartialDerivative{t}{j}\right|
    -
    \frac{L_{0,j}\eta}{(1-\beta_1)\sqrt{1-\beta_2}}
    - \beta_1^{\bar{\tau}}(M_j+\sigma_j)
    - (1-\beta_1)E_j\\
    % 2
    &\ge
    D\left|\PartialDerivative{t}{j}\right| - B_j~,
\end{align}
where we used~\eqref{eq:beta1_bar_tau} and that $e^{-x} \le \frac{1}{x}$ for $x > 0$.

Therefore, with probability at least $1 - 3t\delta$ we have
\begin{equation}
\label{eq:lower_bound_m_over_sqrt_v}
    \frac{|m_{t,j}|}{\sqrt{v_{t,j}}}
    \ge
    \frac{\rho\left(D\left|\PartialDerivative{t}{j}\right| - B_j\right)}{2\sqrt{1-\beta_2}\left(C_j\left|\PartialDerivative{t}{j}\right| + B_j\right)}~.
\end{equation}

Given that $D>0$, depending on the relative size of $\left|\PartialDerivative{t}{j}\right|$ vs.~$B_j$, we have following two cases.

\textbf{Case 2.1}:
$\left|\PartialDerivative{t}{j}\right| < \frac{5B_j}{D}$.

\textbf{Case 2.2}: $\left|\PartialDerivative{t}{j}\right| \ge \frac{5B_j}{D}$, using the fact that the r.h.s of \eqref{eq:lower_bound_m_over_sqrt_v} is decreasing in $B_j$, we have
\begin{align}
    \frac{|m_{t,j}|}{\sqrt{v_{t,j}}}
    % 1
    &\ge \frac{\frac{4D}{5}\rho\left|\PartialDerivative{t}{j}\right|}{2\sqrt{1-\beta_2}\left(C_j+\frac{D}{5}\right)\left|\PartialDerivative{t}{j}\right|}
    % 2
    \ge \frac{2\rho D}{5\sqrt{1-\beta_2}(C_j + D )}
    % 3
    \ge \frac{\rho D}{5\sqrt{1-\beta_2}},
\end{align}
where in the last inequality we used the fact that $C_j+D\leq 2$. Note that $D < 1$ so the above lower bound is smaller than~\eqref{eq:generalized_signsgd_lower_bound_m_large}.
\end{proof}

The following two lemmas are for the special case of $\beta_2 = 0$.
\begin{lemma}
\label{lem:signsgdm_bounded_updates}
With choices of parameters in Theorem~\ref{thm:generalized_signsgd}, when $\beta_2 = 0$, we have $\PNorm{\bx_{t+1} - \bx_{t}} = \eta\PNormDimension \le \frac{1}{\VecLInftyNorm{L_{1}}}$.
\end{lemma}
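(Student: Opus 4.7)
The plan is direct and essentially a calculation: first identify the update as a pure sign step when $\beta_2 = 0$, and then verify the step-size condition from the choice of $\eta$ and the lower bound on $T$ in Theorem \ref{thm:generalized_signsgd}.

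First I would substitute $\beta_2 = 0$ into the formula for $\bv_t$. From Lemma \ref{lem:moment_decomp_generalized_signsgd}, we have $v_{t,j} = (1-\beta_2)\sum_{\tau=1}^{t}\beta_2^{t-\tau} m_{\tau,j}^2$, which collapses to $v_{t,j} = m_{t,j}^2$ when $\beta_2 = 0$ (with the usual convention that $0^0 = 1$). Consequently, $\sqrt{v_{t,j}} = |m_{t,j}|$ coordinate-wise, and the update in Algorithm \ref{alg:generalized_signsgd} becomes $x_{t+1,j} - x_{t,j} = -\eta\, \mathrm{sign}(m_{t,j})$. Hence $|x_{t+1,j}-x_{t,j}| = \eta$ for every coordinate $j$, which gives $\PNorm{\bx_{t+1}-\bx_t} = \eta\PNormDimension$.

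Next I would verify $\eta\PNormDimension \le 1/\VecLInftyNorm{L_1}$ using the hyperparameter choices. By the theorem, $\eta = \sqrt{\Delta\alpha/(\VecLOneNorm{L_0} T)}$ with $\alpha \le 1$, so it suffices to show $\Delta d \VecLInftyNorm{L_1}^2 / \VecLOneNorm{L_0} \le T$. When $\beta_2 = 0$ we have $\rho = 1 - \sqrt{\beta_2}/\beta_1 = 1$ and $1-\beta_2 = 1$, so the first term inside the $\max$ in the lower bound on $T$ simplifies to $100 d\Delta \VecLInftyNorm{L_1}^2 / \VecLOneNorm{L_0}$. The assumed lower bound on $T$ therefore already implies $T \ge 100 \Delta d \VecLInftyNorm{L_1}^2/\VecLOneNorm{L_0}$, which is more than enough to conclude.

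I do not anticipate a genuine obstacle here: the only thing to be careful about is handling the $\beta_2 = 0$ convention in $\bv_t$ cleanly (so that $\sqrt{v_{t,j}}$ is literally $|m_{t,j}|$ and the quotient $m_{t,j}/\sqrt{v_{t,j}}$ is well-defined as $\mathrm{sign}(m_{t,j})$, with a trivial or arbitrary convention on the measure-zero event $m_{t,j} = 0$), and making sure the simplified factors $(1-\beta_2) = \rho = 1$ are tracked in the $T$-condition. Both are routine.
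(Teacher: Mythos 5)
Your proposal is correct and follows essentially the same route as the paper: substitute the formula for $\eta$, use $\alpha \le 1$, and invoke the first term in the $\max$ of the lower bound on $T$ (with $\rho = 1$ and $1-\beta_2 = 1$) to conclude $\eta \le \frac{1}{\PNormDimension\VecLInftyNorm{L_1}}$. The only difference is that you explicitly derive the equality $\PNorm{\bx_{t+1}-\bx_t} = \eta\PNormDimension$ from the collapse of $v_{t,j}$ to $m_{t,j}^2$, whereas the paper takes that step as evident and only verifies the inequality on $\eta$; your version is a bit more complete but substantively identical.
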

\begin{proof}[Proof of Lemma~\ref{lem:signsgdm_bounded_updates}]
Using the fact that $\alpha \le 1$ and the condition on $T$, we have
\begin{align}
\eta &= \frac{\sqrt{\Delta\alpha}}{\sqrt{\VecLOneNorm{L_0}}\sqrt{T}}
\leq
\frac{\sqrt{\Delta}}{\sqrt{\VecLOneNorm{L_0}}\sqrt{T}}
\leq
\frac{\sqrt{\Delta}}{\sqrt{\VecLOneNorm{L_0}}}\frac{\sqrt{\VecLOneNorm{L_0}}}{10\PNormDimension\sqrt{\Delta}\VecLInftyNorm{L_1}}
\le\frac{1}{\PNormDimension\VecLInftyNorm{L_{1}}}~.
\qedhere
\end{align}
\end{proof}

The following lemma is adapted from the proof of Theorem 2 in~\cite{CutkoskyM21}.
\begin{lemma}
\label{lem:err_unravel_signsgdmom}
Under Assumptions~\ref{asp:objective_func}, \ref{asp:l0l1_coordinate}, and \ref{asp:noise_as_bounded}, using the settings of the hyperparameters in Theorem~\ref{thm:generalized_signsgd}, denoting $\alpha = 1 - \beta_1$ and $\boldepsilon_t = \bm_t - \nabla F(\bx_t)$, for all $t\geq 1$ and $j\in[d]$ we have, with probability at least $1-3\delta$,
\begin{equation}
    |\epsilon_{t+1, j}|
    \le
    (1 - \alpha)^t\left(\alpha\sigma_j + (1-\alpha)\left|\PartialDerivative{1}{j}\right|\right) +
    \frac{\eta L_{0,j}}{\alpha}
    + \alpha E_j
    + (1 - \alpha)\eta\PNormDimension L_{1,j}\sum^{t-1}_{\tau=0}(1-\alpha)^{\tau}\left|\PartialDerivative{t-\tau}{j}\right|~.
\end{equation}
\end{lemma}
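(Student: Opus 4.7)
The plan is to derive a closed-form expression for the error $\boldepsilon_{t+1} = \bm_{t+1} - \nabla F(\bx_{t+1})$ via an unrolled recursion, and then bound its three natural components (initial transient, stochastic noise, and gradient drift) using the tools already established.

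Starting from $\bm_{t+1} = \beta_1 \bm_t + (1-\beta_1)\bg_{t+1}$, I would subtract $\nabla F(\bx_{t+1})$ and add/subtract $\beta_1\nabla F(\bx_t)$ to obtain the one-step recursion
\[
\boldepsilon_{t+1} = \beta_1\boldepsilon_t + (1-\beta_1)\tilde{\boldepsilon}_{t+1} + \beta_1\bigl(\nabla F(\bx_t) - \nabla F(\bx_{t+1})\bigr),
\]
where $\tilde{\boldepsilon}_\tau = \bg_\tau - \nabla F(\bx_\tau)$. Iterating backward to the initial condition $\boldepsilon_1 = (1-\beta_1)\tilde{\boldepsilon}_1 - \beta_1\nabla F(\bx_1)$ (which holds since $\bm_0=0$) yields
\[
\boldepsilon_{t+1} = -\beta_1^{t+1}\nabla F(\bx_1) + (1-\beta_1)\sum_{s=1}^{t+1}\beta_1^{t+1-s}\tilde{\boldepsilon}_s + \beta_1\sum_{s=2}^{t+1}\beta_1^{t+1-s}\bigl(\nabla F(\bx_{s-1})-\nabla F(\bx_s)\bigr).
\]

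Passing to coordinate $j$ and applying the triangle inequality, I would bound each piece in turn. The initial term gives exactly $(1-\alpha)^{t+1}|\partial_j F(\bx_1)| = (1-\alpha)^t(1-\alpha)|\partial_j F(\bx_1)|$. For the noise sum, I would peel off the first summand $(1-\beta_1)\beta_1^t\tilde{\epsilon}_{1,j}$, bounded by $(1-\alpha)^t\alpha\sigma_j$ via Assumption~\ref{asp:noise_as_bounded}, and control the tail by applying Lemma~\ref{lem:noise_seq_generalized_signsgd} (after an index shift so that the sum looks like a weighted martingale-difference sum starting at index $1$) to obtain $\alpha E_j$. For the drift, since this lemma is invoked in the $\beta_2=0$ regime, the update is coordinate-wise sign-based and $\|\bx_s - \bx_{s-1}\|_2 = \eta\sqrt{d}$, which is $\le 1/\VecLInftyNorm{L_1}$ by Lemma~\ref{lem:signsgdm_bounded_updates}; this validates invoking Assumption~\ref{asp:l0l1_coordinate} at the anchor $\bx_{s-1}$ to get $|\partial_j F(\bx_s)-\partial_j F(\bx_{s-1})| \le \eta L_{0,j} + \eta\sqrt{d}\,L_{1,j}|\partial_j F(\bx_{s-1})|$. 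Summing the constant part against the geometric weights and using $\sum_{k\ge 1}\beta_1^{k} \le \beta_1/(1-\beta_1) \le 1/\alpha$ yields $\eta L_{0,j}/\alpha$, while the gradient-dependent part, after re-indexing $\tau = t+1-s$ (so that $s-1 = t-\tau$ and $\tau$ runs from $0$ to $t-1$), produces precisely the remaining $(1-\alpha)\eta\sqrt{d}\,L_{1,j}\sum_{\tau=0}^{t-1}(1-\alpha)^\tau|\partial_j F(\bx_{t-\tau})|$ term.

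The main obstacle is the drift contribution. Unlike under standard $L$-smoothness, the $L_{1,j}|\partial_j F|$ factor cannot be absorbed into a constant here, so the past-gradient magnitudes must be carried through unchanged; they are only controlled later in the main proof of Theorem~\ref{thm:generalized_signsgd} via the inductive sub-level-set bound $M_j$. Two secondary points require care but no new ideas: (i) verifying that the Freedman-based estimate of Lemma~\ref{lem:noise_seq_generalized_signsgd} applies uniformly to the shifted tail sum, and (ii) keeping the index bookkeeping between $s$ and $\tau$ consistent so that the smoothness is evaluated at $\bx_{s-1}=\bx_{t-\tau}$, matching the statement exactly.
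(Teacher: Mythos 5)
Your proposal is correct and follows essentially the same route as the paper's proof: the same one-step recursion $\epsilon_{t+1,j}=(1-\alpha)\epsilon_{t,j}+(1-\alpha)S_j(\bx_t,\bx_{t+1})+\alpha\tilde{\epsilon}_{t+1,j}$, the same unrolling, the same split into initial/noise/drift pieces, the same smoothness estimate at anchor $\bx_{t-\tau}$ with $\PNorm{\bx_{t+1-\tau}-\bx_{t-\tau}}=\eta\sqrt d$ via Lemma~\ref{lem:signsgdm_bounded_updates}, and the same invocation of Lemma~\ref{lem:noise_seq_generalized_signsgd}. The only cosmetic difference is that the paper keeps $\epsilon_{1,j}=\alpha\tilde{\epsilon}_{1,j}-(1-\alpha)\PartialDerivative{1}{j}$ as a single term and bounds $|\epsilon_{1,j}|\le\alpha\sigma_j+(1-\alpha)\left|\PartialDerivative{1}{j}\right|$ directly, whereas you unfold it into the noise and gradient pieces and then re-absorb $\tilde{\epsilon}_{1,j}$; these are trivially equivalent.
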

\begin{proof}[Proof of Lemma~\ref{lem:err_unravel_signsgdmom}]
Denote $\tilde{\boldepsilon}_t = \bg_t - \nabla F(\bx_t)$ and $S_j(\ba, \bb) = \frac{\partial F}{\partial x_j}(\ba) - \frac{\partial F}{\partial x_j}(\bb)$. Then, from Assumption~\ref{asp:l0l1_coordinate} and Lemma~\ref{lem:signsgdm_bounded_updates}, for all $t \ge 1$ and all $j\in[d]$ we have
\begin{align}
&\boldepsilon_1 = \alpha\tilde{\boldepsilon}_1-(1-\alpha)\nabla F(\bx_1),\label{eq:init_mom_grad_diff}\\
&\PNorm{\bx_{t+1} - \bx_{t}}\le\frac{1}{\VecLInftyNorm{L_{1}}} \Rightarrow |S_j(\bx_{t+1}, \bx_t)| \le \left(\frac{L_{0,j}}{\sqrt{d}} + L_{1,j}\left|\frac{\partial F}{\partial x_j}(\bx_t)\right|\right)\PNorm{\bx_{t+1} - \bx_{t}}~.\label{eq:adjacent_step_diff}
\end{align}
We can derive the following recursive formulation for any $t \ge 1$:
\begin{align*}
    m_{t+1, j}
    % 1
    & =
    (1 - \alpha)m_{t, j} + \alpha \StocGradient{t+1}{j}\\
    % 2
    & =
    (1 - \alpha)\PartialDerivative{t}{j} + (1 - \alpha)\epsilon_{t, j} + \alpha\PartialDerivative{t+1}{j} + \alpha\tilde{\epsilon}_{t+1,j}\\
    % 3
    & =
    \PartialDerivative{t+1}{j} + (1 - \alpha)S_j(\bx_t, \bx_{t+1}) + (1 - \alpha)\epsilon_{t, j} + \alpha\tilde{\epsilon}_{t+1,j},
\end{align*}
which implies
\begin{align}
\label{eq:recur_err}
    \epsilon_{t+1, j}
    % 4
    &=
    (1 - \alpha)\epsilon_{t, j} + (1 - \alpha)S_j(\bx_t, \bx_{t+1}) + \alpha\tilde{\epsilon}_{t+1,j}~.
\end{align}
Unravel~\eqref{eq:recur_err} from $1$ to $t$ gives us
\begin{equation}
    \epsilon_{t+1, j}
    =
    (1 - \alpha)^t\epsilon_{1, j} + 
    (1 - \alpha)\sum^{t-1}_{\tau=0}(1-\alpha)^{\tau} S_j(\bx_{t-\tau}, \bx_{t+1-\tau}) + \alpha\sum^{t-1}_{\tau=0}(1-\alpha)^{\tau}\tilde{\epsilon}_{t+1-\tau,j}~.
\end{equation}
Take the absolute value of both sides, to obtain
\begin{align}
    \label{eq:err_unravel}
    |\epsilon_{t+1, j}|
    % 1
    \le &
    (1 - \alpha)^t|\epsilon_{1, j}| + 
    (1 - \alpha)\sum^{t-1}_{\tau=0}(1-\alpha)^{\tau} |S_j(\bx_{t-\tau}, \bx_{t+1-\tau})| + \alpha\left|\sum^{t-1}_{\tau=0}(1-\alpha)^{\tau}\tilde{\epsilon}_{t+1-\tau,j}\right|\\
    % 2
    \le &
    (1 - \alpha)^t|\epsilon_{1, j}| +
    (1 - \alpha)\sum^{t-1}_{\tau=0}(1-\alpha)^{\tau} \left(\frac{L_{0,j}}{\sqrt{d}} + L_{1,j}\left|\PartialDerivative{t-\tau}{j}\right|\right)\PNorm{\bx_{t+1-\tau} - \bx_{t-\tau}}
    + \alpha E_j\\
    % 3
    \le &
    (1 - \alpha)^t|\epsilon_{1, j}| +
    (1 - \alpha)\eta L_{0,j}\sum^{t-1}_{\tau=0}(1-\alpha)^{\tau}
    + (1 - \alpha)\eta\PNormDimension L_{1,j}\sum^{t-1}_{\tau=0}(1-\alpha)^{\tau}\left|\PartialDerivative{t-\tau}{j}\right| + \alpha E_j\\
    % 4
    \le &
    (1 - \alpha)^t\left(\alpha\sigma_j + (1-\alpha)\left|\PartialDerivative{1}{j}\right|\right) +
    \frac{\eta L_{0,j}}{\alpha}
    + (1 - \alpha)\eta\PNormDimension L_{1,j}\sum^{t-1}_{\tau=0}(1-\alpha)^{\tau}\left|\PartialDerivative{t-\tau}{j}\right|
    + \alpha E_j~,
\end{align}
where the second inequality uses~\eqref{eq:adjacent_step_diff} and Lemma~\ref{lem:noise_seq_generalized_signsgd}, the fourth and fifth inequalities use~\eqref{eq:adjacent_step_diff}, and the final one is due to~\eqref{eq:init_mom_grad_diff}.
\end{proof}

\begin{addcustomcounttheorem}{\ref{thm:generalized_signsgd}}
Under Assumptions~\ref{asp:objective_func},~\ref{asp:l0l1_coordinate}, and~\ref{asp:noise_as_bounded}, assume $M_j := \sup \left\{ \left|\PartialDerivativeGeneral{\bx}{j}\right| : F(\bx) \leq F(\bx_1)\right\}$ is finite for each $j\in[d]$, let $\Delta$ be any upper bound on 
$F(\bx_1) - F^*$,
$\alpha = \min\left(\frac{\sqrt{\VecLOneNorm{L_0}}\sqrt{\Delta}}{\VecLOneNorm{\sigma}\sqrt{T}}, 1\right)$, $\beta_1 = 1 - \alpha$, $\frac{\sqrt{\beta_2}}{\beta_1} < 1$,
$\rho = 1-\frac{\sqrt{\beta_2}}{\beta_1}$,
$\eta = \frac{\sqrt{\Delta\alpha}}{\sqrt{\VecLOneNorm{L_0}}\sqrt{T}}$,
for $T\ge\max\left(\frac{100d\Delta\VecLInftyNorm{L_1}^2}{(1-\beta_2)\rho^2\VecLOneNorm{L_0}}, \frac{10000d^2\Delta\VecLOneNorm{\sigma}^2\VecLInftyNorm{L_1}^4}{(1-\beta_2)^2\rho^4\VecLOneNorm{L_0}^3}\right)$, Algorithm~\ref{alg:generalized_signsgd} guarantees, with probability at least $1 - \delta$, that
\begin{align}
    \min_{t\in[T]} \ \|\nabla F(\bx_t)\|_1
    =
    % 1
    &\mathcal{O}\left(\frac{\sqrt{\log(dT/\delta)}\VecLOneNorm{L_0}^{1/4}\Delta^{1/4}\VecLOneNorm{\sigma}^{1/2}}{\rho\sqrt{1-\beta_2}T^{1/4}} + \frac{\log(dT/\delta)\sqrt{\VecLOneNorm{L_0}\Delta}}{\rho\sqrt{T}}\right)\\
    % 2
    &+ \mathcal{O}\left(\frac{\VecLOneNorm{M} + \VecLOneNorm{\sigma}}{\rho}\exp\left(-\frac{\sqrt{1-\beta_2}\VecLOneNorm{L_0}^{3/4}}{\PNormDimension\VecLInftyNorm{L_1}\VecLOneNorm{\sigma}^{1/2}\Delta^{1/4}}T^{1/4}\right)
    +\frac{\|\nabla F(\bx_1)\|_1}{T} \right)~.
\end{align}
Furthermore, for the case when $\beta_2 = 0$, we have the following refined guarantee:
\begin{align}
\min_{t\in[T]} \ \|\nabla F(\bx_t)\|_1
=
% 1
&\mathcal{O}\left(\frac{\sqrt{\log(dT/\delta)}\VecLOneNorm{L_0}^{1/4}\Delta^{1/4}\VecLOneNorm{\sigma}^{1/2}}{T^{1/4}}
+ \frac{\log(dT/\delta)\sqrt{\VecLOneNorm{L_0}\Delta}}{\sqrt{T}}\right)\\
% 2
& + \mathcal{O}\left(\frac{\|\nabla F(\bx_1)\|_1}{\sqrt{T}}\left(\frac{1}{\sqrt{T}} + \frac{\VecLOneNorm{\sigma}}{\sqrt{\VecLOneNorm{L_0}\Delta}}\right)
+ \frac{\VecLOneNorm{\sigma}}{T}\right)~.
\end{align}
\end{addcustomcounttheorem}

\begin{proof}[Proof of Theorem~\ref{thm:generalized_signsgd} for $\beta_2 = 0$]
From Lemma~\ref{lem:signsgdm_bounded_updates} we know that $\PNorm{\bx_{t+1} - \bx_{t}} \le \frac{1}{\VecLInftyNorm{L_{1}}}$ for all $t\in[T]$. Thus, we can apply Lemma~\ref{lem:desent_ineq_l0l1_coordinatewise} to have
\begin{align}
    &F(\bx_{t+1}) - F(\bx_t)\\
    % 1
    &\le
    \langle \nabla F(\bx_t), \bx_{t+1} - \bx_t \rangle
    +
    \sum^d_{j=1}\frac{\left(\frac{L_{0,j}}{\sqrt{d}} +  L_{1,j}\left|\PartialDerivative{t}{j}\right|\right)\PNorm{\bx_{t+1}-\bx_{t}}}{2}|x_{t+1,j}-x_{t,j}|\\
    % 2
    &=
    \langle \nabla F(\bx_t), -\eta\text{sign}(\bm_t) \rangle
    +
    \sum^d_{j=1}\frac{ L_{0,j}+ L_{1,j}\PNormDimension\left|\PartialDerivative{t}{j}\right|}{2}\eta^2\\
    % 3
    &=
    -\eta\|\nabla F(\bx_t)\|_1
    + \eta\langle \nabla F(\bx_t), \text{sign}(\nabla F(\bx_t)) - \text{sign}(\bm_t) \rangle
    +
    \sum^d_{j=1}\frac{ L_{0,j}+ L_{1,j}\PNormDimension\left|\PartialDerivative{t}{j}\right|}{2}\eta^2\\
    % 4
    &=
    -\eta\|\nabla F(\bx_t)\|_1
    + 2\eta\sum^d_{j=1}\left|\PartialDerivative{t}{j}\right|\mathbb{I}\left[ \text{sign}\left(\PartialDerivative{t}{j}\right) \neq \text{sign}(m_{t,j})\right]
    +
    \sum^d_{j=1}\frac{ L_{0,j}+ L_{1,j}\PNormDimension\left|\PartialDerivative{t}{j}\right|}{2}\eta^2\label{eq:descent_signsgdm}~,
\end{align}
where $\mathbb{I}(\cdot)$ is the indicator function and the first inequality uses Lemma~\ref{lem:desent_ineq_l0l1_coordinatewise},

Now, note that
\begin{align}
    \mathbb{I}\left[\text{sign}\left(\PartialDerivative{t}{j}\right) \neq \text{sign}(m_{t,j})\right]
    % 1
    & \le
    \mathbb{I}\left[\left|\PartialDerivative{t}{j} - m_{t,j}\right| \ge \left|\PartialDerivative{t}{j}\right|\right]
    % 2
    \le
    \frac{\left|\PartialDerivative{t}{j} - m_{t,j}\right|}{\left|\PartialDerivative{t}{j}\right|}~.
\end{align}

Thus, denoting by $\boldepsilon_t = \bm_t - \nabla F(\bx_t)$ gives
\begin{align}
    F(\bx_{t+1}) - F(\bx_t)
    % 1
    &\le
    -\eta\|\nabla F(\bx_t)\|_1
    + 2\eta\|\boldepsilon_t\|_1
    +
    \sum^d_{j=1}\frac{L_{0,j} + L_{1,j}\PNormDimension\left|\PartialDerivative{t}{j}\right|}{2}\eta^2\\
    % 2
    &=
    -\eta\|\nabla F(\bx_t)\|_1
    + 2\eta\|\boldepsilon_t\|_1
    +
    \frac{\VecLOneNorm{L_0}\eta^2}{2}
    + \frac{\eta^2\PNormDimension}{2}\sum^d_{j=1}L_{1,j}\left|\PartialDerivative{t}{j}\right|~.
\end{align}
Sum both sides over $t=1, \dots,T$, to have
\begin{align}
    F^* - F(\bx_1)
    \le
    -\eta\sum^T_{t=1}\|\nabla F(\bx_t)\|_1
    + 2\eta\sum^T_{t=1}\|\boldepsilon_t\|_1
    +
    \frac{\VecLOneNorm{L_0}\eta^2 T}{2}
    + \frac{\eta^2\PNormDimension}{2}\sum^T_{t=1}\sum^d_{j=1}L_{1,j}\left|\PartialDerivative{t}{j}\right|~. \label{eq:decomp_ssgdm}
\end{align}
Use Lemma~\ref{lem:err_unravel_signsgdmom} to bound each coordinate of $\sum^T_{t=1}\|\boldepsilon_t\|_1$:
\begin{align}
    &\sum^{T-1}_{t=0}|\epsilon_{t+1, j}|\\
    % 1
    &\le
    \sum^{T-1}_{t=0}\left[(1 - \alpha)^t\left(\alpha\sigma_j + (1-\alpha)\left|\PartialDerivative{1}{j}\right|\right) +
    \frac{\eta L_{0,j}}{\alpha}
    + \alpha E_j\right]\\
    % 2
    &\quad
    + (1 - \alpha)\eta\PNormDimension L_{1,j}\sum^{T-1}_{t=0}\sum^{t-1}_{\tau=0}(1-\alpha)^{\tau}\left|\PartialDerivative{t-\tau}{j}\right|\\
    % 3
    &=
    \sigma_j +
    \frac{1}{\alpha}\left|\PartialDerivative{1}{j}\right|
    + \frac{\eta L_{0,j} T}{\alpha}
    + \alpha E_j T
    + (1 - \alpha)\eta \PNormDimension L_{1,j}\sum^{T-1}_{t=1}\sum^{t}_{\tau^{\prime}=1}(1-\alpha)^{t-\tau^{\prime}}\left|\PartialDerivative{\tau^{\prime}}{j}\right|\\
    % 4
    &=
    \sigma_j +
    \frac{1}{\alpha}\left|\PartialDerivative{1}{j}\right|
    + \frac{\eta L_{0,j} T}{\alpha}
    + \alpha E_j T
    + (1 - \alpha)\eta\PNormDimension L_{1,j}\sum^{T-1}_{\tau^{\prime}=1}\left(\sum^{T-1}_{t=\tau^{\prime}}(1-\alpha)^{t}\right)(1-\alpha)^{-\tau^{\prime}}\left|\PartialDerivative{\tau^{\prime}}{j}\right|\\
    % 5
    &\le
    \sigma_j +
    \frac{1}{\alpha}\left|\PartialDerivative{1}{j}\right|
    + \frac{\eta L_{0,j} T}{\alpha}
    + \alpha E_j T
    + \frac{(1 - \alpha)\eta\PNormDimension L_{1,j}}{\alpha}\sum^{T-1}_{t=1}\left|\PartialDerivative{t}{j}\right|~.
\end{align}

The above one holds with probability at least $1-3T\delta$ as we invoked Lemma~\ref{lem:err_unravel_signsgdmom} for $T$ times which in turns means invoking Lemma~\ref{lem:noise_seq_generalized_signsgd} for $T$ times. Yet, the above inequality would need to sum from $j=1$ to $d$, meaning in total we would invoke Lemma~\ref{lem:noise_seq_generalized_signsgd} for $dT$ times. Thus, following results hold with probability at least $1-3dT\delta$.

Now, put the above inequality back into \eqref{eq:decomp_ssgdm} to have
\begin{align}
    F^*& - F(\bx_1)\\
    \le
    % 1
    & -\eta\sum^T_{t=1}\|\nabla F(\bx_t)\|_1
    +
    \frac{\VecLOneNorm{L_0}\eta^2T}{2}
    + \frac{\eta^2\PNormDimension}{2}\sum^T_{t=1}\sum^d_{j=1}L_{1,j}\left|\PartialDerivative{t}{j}\right|\\
    % 2
    & + 2\eta\sum^d_{j=1}\left(\sigma_j +
    \frac{1}{\alpha}\left|\PartialDerivative{1}{j}\right|
    + \frac{\eta L_{0,j} T}{\alpha}
    + \alpha E_j T
    + \frac{(1 - \alpha)\eta\PNormDimension L_{1,j}}{\alpha}\sum^{T}_{t=1}\left|\PartialDerivative{t}{j}\right|\right)\\
    =
    % 3
    & -\eta\sum^T_{t=1}\|\nabla F(\bx_t)\|_1
    +
    \frac{\VecLOneNorm{L_0}\eta^2T}{2}
    + \frac{\eta^2\PNormDimension}{2}\sum^T_{t=1}\sum^d_{j=1}L_{1,j}\left|\PartialDerivative{t}{j}\right|
    + 2\eta\VecLOneNorm{\sigma}\\
    % 4
    &
    + \frac{2\eta}{\alpha}\|\nabla F(\bx_1)\|_1
    + \frac{2\eta^2 \VecLOneNorm{L_0} T}{\alpha}
    + 2\eta\alpha T\sum^{d}_{j=1}E_j
    + \frac{2\eta^2\PNormDimension}{\alpha}\sum^T_{t=1}\sum^d_{j=1}L_{1,j}\left|\PartialDerivative{t}{j}\right|\\
    =
    % 5
    & -\eta\sum^T_{t=1}\|\nabla F(\bx_t)\|_1
    + 2\eta\VecLOneNorm{\sigma}
    + \frac{2\eta}{\alpha}\|\nabla F(\bx_1)\|_1
    + \left(\frac12 + \frac{2}{\alpha}\right)\VecLOneNorm{L_0}\eta^2T
    + 2\eta\alpha T\sum^{d}_{j=1}E_j\\
    % 6
    &+ \left(\frac12 + \frac{2}{\alpha}\right)\eta^2\PNormDimension\sum^T_{t=1}\sum^d_{j=1}L_{1,j}\left|\PartialDerivative{t}{j}\right|~.
\end{align}

Now, using the definitions of $\eta$ and $\alpha$, and the conditions on $T$, we have
\begin{align}
    \left(\frac12 + \frac{2}{\alpha}\right)\eta^2\PNormDimension L_{1,j}
    % 1
    &\leq
    \frac{\eta}{2}\left(1 + \frac{4}{\alpha}\right)\frac{\PNormDimension\sqrt{\Delta\alpha}}{\sqrt{T}}\frac{\VecLInftyNorm{L_1}}{\sqrt{\VecLOneNorm{L_0}}}
    \le
    \frac{\eta}{2}\frac{5\PNormDimension\sqrt{\Delta}}{\sqrt{\alpha T}}\frac{\VecLInftyNorm{L_1}}{\sqrt{\VecLOneNorm{L_0}}}\\
    % 2
    &=
    \frac{\eta}{2}\frac{5\PNormDimension\sqrt{\Delta}}{\sqrt{ T}}\frac{\VecLInftyNorm{L_1}}{\sqrt{\VecLOneNorm{L_0}}}\cdot\max\left(\frac{\sqrt{\VecLOneNorm{\sigma}}T^{1/4}}{\VecLOneNorm{L_0}^{1/4}\Delta^{1/4}}, 1\right)\\
    % 3
    &=
    \frac{\eta}{2}\max\left(\frac{5\PNormDimension\sqrt{\VecLOneNorm{\sigma}}\VecLInftyNorm{L_1}\Delta^{1/4}}{\VecLOneNorm{L_0}^{3/4}T^{1/4}},\frac{5\PNormDimension\sqrt{\Delta}}{\sqrt{ T}}\frac{\VecLInftyNorm{L_1}}{\sqrt{\VecLOneNorm{L_0}}}\right)
    \le
    \frac{\eta}{2}~.
\end{align}

Thus, we have
\begin{equation}
    F^* - F(\bx_1)
    \le
    -\frac{\eta}{2}\sum^T_{t=1}\|\nabla F(\bx_t)\|_1
    +
    2\eta\VecLOneNorm{\sigma}
    + \frac{2\eta}{\alpha}\|\nabla F(\bx_1)\|_1
    + \left(\frac12 + \frac{2}{\alpha}\right)\VecLOneNorm{L_0}\eta^2T
    + 2\eta\alpha T\sum^{d}_{j=1}E_j~.
\end{equation}
Divide both sides by $T$ and rearrange terms to give
\begin{align}
    \frac{1}{T}\sum^T_{t=1}\|\nabla F(\bx_t)\|_1
    % 1
    \le
    &\frac{2}{\eta T}[F(\bx_1) - F^*]
    + \frac{4}{T}\VecLOneNorm{\sigma}
    + \frac{4}{\alpha T}\|\nabla F(\bx_1)\|_1
    +
    \frac{5}{\alpha}\VecLOneNorm{L_0}\eta\\
    % 2
    &+
    24\VecLOneNorm{\sigma}(\alpha\max(1, \log(1/\delta))
    + \sqrt{\alpha}\sqrt{\max(1, \log(1/\delta))})~.
\end{align}

Now, we need to consider the following two cases:
\begin{enumerate}
    % Case 1
    \item $\VecLOneNorm{\sigma} < \frac{\sqrt{\VecLOneNorm{L_0}}\sqrt{\Delta}}{\sqrt{T}}$: then $\alpha = 1$ and $\eta = \frac{\sqrt{\Delta}}{\sqrt{\VecLOneNorm{L_0}}\sqrt{T}}$
    \begin{align}
        \frac{1}{T}\sum^T_{t=1}\|\nabla F(\bx_t)\|_1
        % 1
        &\le
        \frac{2\sqrt{\VecLOneNorm{L_0}}}{\sqrt{\Delta}\sqrt{T}}[F(\bx_1) - F^*]
        + \frac{5\VecLOneNorm{L_0}\sqrt{\Delta}}{\sqrt{\VecLOneNorm{L_0}}\sqrt{T}}
        + \frac{4\sqrt{\VecLOneNorm{L_0}}\sqrt{\Delta}}{T^{3/2}}
        + \frac{4}{T} \|\nabla F(\bx_1)\|_1\\
        % 2
        &\quad
        + \frac{24\sqrt{\VecLOneNorm{L_0}}\sqrt{\Delta}(\max(1, \log(1/\delta)) + \sqrt{\max(1, \log(1/\delta))})}{\sqrt{T}}\\
        % 3
        &\le
        \frac{59\max(1, \log(1/\delta))\sqrt{\VecLOneNorm{L_0}\Delta}}{\sqrt{T}} + \frac{4}{T} \|\nabla F(\bx_1)\|_1\label{eq:small_noise_bound}~.
    \end{align}
    % Case 2
    \item $\VecLOneNorm{\sigma}\ge\frac{\sqrt{\VecLOneNorm{L_0}\Delta}}{\sqrt{T}}$: then $\alpha = \frac{\sqrt{\VecLOneNorm{L_0}\Delta}}{\VecLOneNorm{\sigma}\sqrt{T}} \le 1$ and $\eta = \frac{\Delta^{3/4}}{\VecLOneNorm{L_0}^{1/4}\sqrt{\VecLOneNorm{\sigma}}T^{3/4}}$
    \begin{align}
        &\frac{1}{T}\sum^T_{t=1}\|\nabla F(\bx_t)\|_1\\
        % 1
        \le& \frac{2\VecLOneNorm{L_0}^{1/4}\sqrt{\VecLOneNorm{\sigma}}}{\Delta^{3/4}T^{1/4}}[F(\bx_1) - F^*]
        + \frac{4\VecLOneNorm{\sigma}}{T}
        +
        \frac{4\VecLOneNorm{\sigma}}{\sqrt{\VecLOneNorm{L_0}\Delta T}}\|\nabla F(\bx_1)\|_1
        +
        \frac{5\VecLOneNorm{L_0}^{1/4}\sqrt{\VecLOneNorm{\sigma}}\Delta^{1/4}}{T^{1/4}}\\
        % 2
        &+ \frac{24\max(1, \log(1/\delta))\sqrt{\VecLOneNorm{L_0}\Delta}}{\sqrt{T}}
        +\frac{24\sqrt{\max(1, \log(1/\delta))}\VecLOneNorm{L_0}^{1/4}\sqrt{\VecLOneNorm{\sigma}}\Delta^{1/4}}{T^{1/4}}\\
        % 3
        \le&
        \frac{31\sqrt{\max(1, \log(1/\delta))}\VecLOneNorm{L_0}^{1/4}\Delta^{1/4}\sqrt{\VecLOneNorm{\sigma}}}{T^{1/4}}
        + \frac{24\max(1, \log(1/\delta))\sqrt{\VecLOneNorm{L_0}\Delta}}{\sqrt{T}}\\
        % 4
        &+ \frac{4\VecLOneNorm{\sigma}\|\nabla F(\bx_1)\|_1}{\sqrt{\VecLOneNorm{L_0}\Delta T}}
        + \frac{4\VecLOneNorm{\sigma}}{T}~.
    \end{align}
\end{enumerate}

Put $\delta^{\prime} = \frac{\delta}{3dT}$ concludes the proof.
\end{proof}

\begin{proof}[Proof of Theorem~\ref{thm:generalized_signsgd} for general $\beta_2$]
Note that when $\VecLOneNorm{\sigma}\le\frac{\sqrt{\VecLOneNorm{L_0}\Delta}}{\sqrt{T}}$, $\alpha = 1$, $\beta_2 < \beta_1^2 = 0$. Then our Generalized SignSGD algorithm~\ref{alg:generalized_signsgd} reduces to the SignSGD algorithm, and thus has the same guarantee of~\eqref{eq:small_noise_bound}. Therefore, we only consider the other case when $\VecLOneNorm{\sigma}\ge\frac{\sqrt{\VecLOneNorm{L_0}\Delta}}{\sqrt{T}}$.

Note that the only randomness comes from evaluating stochastic gradients. In the following proof, we will need to invoke Lemma~\ref{lem:noise_seq_generalized_signsgd} for $T$ times for each coordinate $j\in[d]$. Therefore, the following results hold with probability at least $1 - 3dT\delta$. For simplicity, we use the term "with high probability" later in the proof to denote this.

We derive the following quantity which will be used multiple times later:
\begin{equation}
    \frac{\eta}{\alpha}
    =
    \frac{1}{\sqrt{\VecLOneNorm{L_0}}}\cdot\frac{\sqrt{\Delta}}{\sqrt{T}} \cdot \frac{1}{\sqrt{\alpha}}
    =
    \frac{1}{\sqrt{\VecLOneNorm{L_0}}} \cdot\frac{\sqrt{\Delta}}{\sqrt{T}} \cdot \frac{\VecLOneNorm{\sigma}^{1/2}T^{1/4}}{\VecLOneNorm{L_0}^{1/4}\Delta^{1/4}}
    =
    \frac{\VecLOneNorm{\sigma}^{1/2}\Delta^{1/4}}{\VecLOneNorm{L_0}^{3/4}T^{1/4}}~.\label{eq:eta_over_alpha}
\end{equation}

First, from Lemma~\ref{lem:generalized_signsgd_update_lower_bound} we have $D = 1-\frac{2\PNormDimension\VecLInftyNorm{L_{1}}\eta}{(1-\beta_1)\sqrt{1-\beta_2}}$. Then, from the choice of the hyperparameters, we have
\begin{equation}
    \frac{\PNormDimension\VecLInftyNorm{L_{1}}\eta}{(1-\beta_1)\sqrt{1-\beta_2}}
    =
    \frac{\VecLInftyNorm{L_{1}}}{\sqrt{1-\beta_2}} \cdot \frac{\PNormDimension\VecLOneNorm{\sigma}^{1/2}\Delta^{1/4}}{\VecLOneNorm{L_0}^{3/4}T^{1/4}}
    \le
    \frac{\rho}{10}
    \le
    \frac{1}{10}~.\label{eq:L1_eta_over_alpha}
\end{equation}
Thus, we have $D \ge 1 - \frac{1}{5} \ge \frac12$ and, as $\sqrt{\beta_2}/\beta_1 < 1$,
\begin{equation}
    \frac{\rho D}{5\sqrt{1-\beta_2}}
    \ge
    \frac{\rho}{10\sqrt{1-\beta_2}}
    =
    A~.\label{eq:update_lower_bound_generalized_signsgd}
\end{equation}

Also, for those coordinates with small gradients $\left|\PartialDerivative{t}{j}\right| < \frac{5B_j}{D} \le 10B_j$, we have
\begin{align}
    \PartialDerivative{t}{j}\cdot(x_{t+1, j} - x_{t,j})
    % 1
    =
    &-\eta\PartialDerivative{t}{j}\cdot\frac{m_{t,j}}{\sqrt{v_{t,j}}}\\
    % 2
    =
    &-A\eta\left|\PartialDerivative{t}{j}\right| + \eta\left|\PartialDerivative{t}{j}\right|\cdot\left(A - \text{sign}\left(\PartialDerivative{t}{j}\right)\frac{m_{t,j}}{\sqrt{v_{t,j}}}\right)\\
    % 3
    \le
    &-A\eta\left|\PartialDerivative{t}{j}\right| + 10B_j\eta\left(\frac{1}{\sqrt{1-\beta_2}} + A\right)~. \label{eq:inner_prod_upper_bound_grad_small}
\end{align}

We are now ready to prove the theorem. We will need to use Lemma~\ref{lem:generalized_signsgd_update_lower_bound}, hence we first need to show that all past true gradients are bounded by $M_j$, namely that, for any $t$, $\left|\PartialDerivativeGeneral{\bx_{\tau}}{j}\right| \le M_j$ holds for all $\tau \le t$ and all $j\in[d]$. From the definition of $M_j$ stated in the theorem, in order to guarantee this, we only need to prove that $F(\bx_{\tau}) \le F(\bx_1)$ for all $\tau\le t$. We will prove this by induction.

For $t = 1$ the condition is trivially true.

We then assume that the condition holds for $t$ and prove based on this that it still holds for $t+1$.

For those coordinates with $\left|\PartialDerivative{t}{j}\right| \ge \frac{5B_j}{D}$,
denote $\hat{\tau} = \lfloor\bar{\tau}\rfloor = \left\lfloor\frac{\sqrt{1-\beta_2}}{\eta\PNormDimension\VecLInftyNorm{L_1}}\right\rfloor$, with high probability, we have
\begin{align}
    \left|m_{t,j} - \PartialDerivative{t}{j}\right|
    % 1
    &=
    \left|(1 - \beta_1)\sum^{t}_{\tau=1}\beta_1^{t-\tau} \StocGradient{\tau}{j} - \PartialDerivative{t}{j} \right|\\
    % 2
    &\le
    \left|(1 - \beta_1)\sum^{t-\hat{\tau}-1}_{\tau=1}\beta_1^{t-\tau} \StocGradient{\tau}{j}\right| + \left|(1 - \beta_1)\sum^{t}_{\tau=t-\hat{\tau}}\beta_1^{t-\tau} \StocGradient{\tau}{j} - \PartialDerivative{t}{j} \right|\\
    % 3
    &\le
    \beta_1^{\bar{\tau}}(M_j + \sigma_j)
    +
    \beta_1^{\bar{\tau}}\left|\PartialDerivative{t}{j}\right|\\
    % 4
    &\quad+(1-\beta_1)\left|\sum^{t}_{\tau=t-\hat{\tau}}\beta_1^{t-\tau} \left(\PartialDerivative{\tau}{j}- \PartialDerivative{t}{j}\right)\right|\\
    % 5
    &\quad+
    (1-\beta_1)\left|\sum^{t}_{\tau=t-\hat{\tau}}\beta_1^{t-\tau}\left(\StocGradient{\tau}{j} - \PartialDerivative{\tau}{j}\right)\right|\\
    % 6
    &\le
    \left(1 - D\right)\left|\PartialDerivative{t}{j}\right| + B_j\\
    % 7
    &\le
    \left(1 - \frac{4D}{5}\right)\left|\PartialDerivative{t}{j}\right|
    \le
    \left|\PartialDerivative{t}{j}\right|, \label{eq:equal_sign_moment_grad}
\end{align}
where the first equality comes from Lemma~\ref{lem:moment_decomp_generalized_signsgd}; for the second inequality, the third term can be bounded using Lemma~\ref{lem:true_grad_seq_generalized_signsgd}, and the final term can be bounded using Lemma~\ref{lem:noise_seq_generalized_signsgd}; for the third inequality, we used~\eqref{eq:beta1_bar_tau} and that $e^{-x}\le\frac{1}{x}$ for $x > 0$.
This inequality implies that $\text{sign}(m_{t,j}) = \text{sign}\left(\PartialDerivative{t}{j}\right)$ with high probability.

Denote $\mathcal{U}_t = \left\{j\in[d]: \left|\PartialDerivative{t}{j}\right| \ge \frac{5B_j}{D}\right\}$. From the choices of hyperparameters we can show that $1\le\bar{\tau}$ which means $\PNorm{\bx_{t+1} - x_{t}}\le\frac{1}{\VecLInftyNorm{L_{1}}}$ (Lemma~\ref{lem:generalized_signsgd_bounded_updates_t}). Thus, using Lemma~\ref{lem:desent_ineq_l0l1_coordinatewise}, with high probability we have
\begin{align}
    % 1
    & F(\bx_{t+1}) - F(\bx_t)\\
    % 2
    &\le
    \langle \nabla F(\bx_t), \bx_{t+1} - \bx_t \rangle
    +
    \sum^d_{j=1}\frac{\left(\frac{L_{0,j}}{\sqrt{d}} +  L_{1,j}\left|\PartialDerivative{t}{j}\right|\right)\PNorm{\bx_{t+1}-\bx_{t}}}{2}|x_{t+1,j}-x_{t,j}|\\
    % 3
    &=
    \sum^d_{j=1}\left(-\PartialDerivative{t}{j} \cdot \eta\text{sign}(m_{t,j})\frac{|m_{t,j}|}{\sqrt{v_{t,j}}}
    +
    \frac{\left( \frac{L_{0,j}}{\sqrt{d}}+ L_{1,j}\left|\PartialDerivative{t}{j}\right|\right)\PNorm{\bx_{t+1}-\bx_{t}}}{2}|x_{t+1,j}-x_{t,j}|\right)\\
    % 4
    &\le
    \sum_{j\in\mathcal{U}_t}-\eta\left|\PartialDerivative{t}{j}\right|\cdot \frac{|m_{t,j}|}{\sqrt{v_{t,j}}}
    +
    \sum_{j\not\in\mathcal{U}_t}\left(-A\eta\left|\PartialDerivative{t}{j}\right|+
    10\eta B_j\left(\frac{1}{\sqrt{1-\beta_2}} + A\right)\right)
    +
    \sum^d_{j=1}\frac{ L_{0,j}+ L_{1,j}\PNormDimension\left|\PartialDerivative{t}{j}\right|}{2(1-\beta_2)}\eta^2\\
    % 5
    &\le
    -A\eta\|\nabla F(x_t)\|_1
    +
    \sum^d_{j=1}\frac{ L_{0,j}+ L_{1,j}\PNormDimension\left|\PartialDerivative{t}{j}\right|}{2(1-\beta_2)}\eta^2
    + 10\eta\left(\frac{1}{\sqrt{1-\beta_2}} + A\right)\sum^d_{j=1}B_j, \label{eq:descent_generalized_signsgd}
\end{align}
where the second inequality uses~\eqref{eq:inner_prod_upper_bound_grad_small},~\eqref{eq:equal_sign_moment_grad}, and Lemma~\ref{lem:moment_decomp_generalized_signsgd}, and the third inequality uses Lemma~\ref{lem:generalized_signsgd_update_lower_bound} and~\eqref{eq:update_lower_bound_generalized_signsgd}.

Now, noticing the conditions on $\eta$, $\alpha$, $\beta_2 < \beta_1^2 < \beta_1$, and $T$, use~\eqref{eq:eta_over_alpha} to have
\begin{align}
    \frac{\eta^2\PNormDimension L_{1,j}}{2(1-\beta_2)}
    \le
    \frac{\eta\PNormDimension\VecLInftyNorm{L_{1}}}{2}\cdot\frac{\eta}{\alpha}
    =
    \frac{\eta}{2}\frac{\PNormDimension\VecLInftyNorm{L_{1}}\VecLOneNorm{\sigma}^{1/2}\Delta^{1/4}}{\VecLOneNorm{L_0}^{3/4}T^{1/4}}
    \le
    \frac{\eta}{2}\frac{\rho\sqrt{1-\beta_2}}{10}
    \le
    \frac{\eta}{2}A~.
\end{align}
Thus,~\eqref{eq:descent_generalized_signsgd} becomes
\begin{equation}
    F(\bx_{t+1}) - F(\bx_t)
    \le
    -\frac{A\eta}{2}\|\nabla F(x_t)\|_1
    +
    \frac{\eta^2\VecLOneNorm{L_{0}}}{2(1-\beta_2)}
    + 10\eta\left(\frac{1}{\sqrt{1-\beta_2}} + A\right)\sum^d_{j=1}B_j~. \label{eq:main_descent_generalized_signsgd}
\end{equation}
Therefore, either
\begin{equation}
\|\nabla F(\bx_t)\|_1\le \frac{\eta\VecLOneNorm{L_0}}{A(1-\beta_2)} + 20\left(\frac{1}{A\sqrt{1-\beta_2}} + 1\right)\sum^d_{j=1}B_j\label{eq:generalized_signsgd_stop_condition},
\end{equation}
or $F(\bx_{t+1}) - F(\bx_t) \le 0$.

This concludes the mathematical induction up until~\eqref{eq:generalized_signsgd_stop_condition} is met for the first time which we denote as $T_0$. 
In the following, we will explain that if \eqref{eq:generalized_signsgd_stop_condition} holds then the algorithm has found an approximate stationary point.

Now, suppose $T \le T_0$, then~\eqref{eq:main_descent_generalized_signsgd} holds all the time and we sum both sides of it from $1$ to $T$ to have, with high probability,
\begin{align}
    F^* - F(\bx_1)
    % 1
    \le
    &-\frac{A\eta}{2}\sum^T_{t=1}\|\nabla F(\bx_t)\|_1 +
    \frac{\VecLOneNorm{L_0}\eta^2T}{2(1-\beta_2)}
    + 10\eta T\left(\frac{1}{\sqrt{1-\beta_2}} + A\right)\sum^d_{j=1}B_j~.
\end{align}
Rearrange terms to obtain
\begin{equation}
    \min_{t\in[T]}\|\nabla F(\bx_t)\|_1
    \le
    \frac{1}{T}\sum^T_{t=1}\|\nabla F(\bx_t)\|_1
    \le
    \frac{2}{A\eta T}[F(\bx_1) - F^*] +
    \frac{\eta\VecLOneNorm{L_0}}{A(1-\beta_2)}
    + 20\left(\frac{1}{A\sqrt{1-\beta_2}} + 1\right)\sum^d_{j=1}B_j\label{eq:generalized_signsgd_min_grad_norm_bound}~.
\end{equation}

Note that RHS of~\eqref{eq:generalized_signsgd_stop_condition} is less than RHS of~\eqref{eq:generalized_signsgd_min_grad_norm_bound}. Thus, for the other case of $T > T_0$,~\eqref{eq:generalized_signsgd_min_grad_norm_bound} still holds.

Recalling that $\rho = 1-\frac{\sqrt{\beta_2}}{\beta_1}$, $A = \frac{\rho}{10\sqrt{1-\beta_2}}$, $\beta_2 < \beta_1^2 < \beta_1$, and $B_j \triangleq
\frac{\eta L_{0,j}}{(1-\beta_1)\sqrt{1-\beta_2}} + \beta_1^{\bar{\tau}}(M_j + \sigma_j)
+ 6(1-\beta_1) \sigma_j\max(1, \log(1/\delta))
+ \frac{6(1-\beta_1)}{\sqrt{1-\beta_1^2}}\sqrt{\sigma_j^2\max(1, \log(1/\delta))}$, we have
\begin{align}
    \min_{t\in[T]}\|\nabla F(\bx_t)\|_1
    % 1
    \le
    &\frac{20}{\rho\eta T}[F(\bx_1) - F^*] +
    \frac{10\eta\VecLOneNorm{L_0}}{\rho\sqrt{1-\beta_1}}
    + 20\left(\frac{10}{\rho} + 1\right)\left(\frac{\VecLOneNorm{L_{0}}\eta}{(1-\beta_1)\sqrt{1-\beta_2}} + \beta_1^{\bar{\tau}}(\VecLOneNorm{M} + \VecLOneNorm{\sigma})\right)\\
    % 3
    &
    +120\VecLOneNorm{\sigma}(1-\beta_1)\left(\frac{10}{\rho} + 1\right)\left(\max(1, \log(1/\delta))
    + \frac{1}{\sqrt{1-\beta_1^2}}\sqrt{\max(1, \log(1/\delta))}\right)~.
\end{align}

When $\VecLOneNorm{\sigma}\ge\frac{\sqrt{\VecLOneNorm{L_0}\Delta}}{\sqrt{T}}$, then $\alpha = \frac{\sqrt{\VecLOneNorm{L_0}\Delta}}{\VecLOneNorm{\sigma}\sqrt{T}}$ and $\eta = \frac{\Delta^{3/4}}{\VecLOneNorm{L_0}^{1/4}\sqrt{\VecLOneNorm{\sigma}}T^{3/4}}$. Hence, we obtain
\begin{align}
    % 1
    &\min_{t\in[T]}\|\nabla F(\bx_t)\|_1\\
    % 2
    &\le
    \frac{20\Delta^{1/4}\VecLOneNorm{L_0}^{1/4}\VecLOneNorm{\sigma}^{1/2}T^{3/4}}{\rho T}
    + \frac{10\sqrt{\VecLOneNorm{L_0}\Delta}}{\rho\sqrt{T}}
    + 20\left(\frac{10}{\rho} + 1\right)\left(\frac{\Delta^{1/4}\VecLOneNorm{\sigma}^{1/2}\VecLOneNorm{L_{0}} }{\sqrt{1-\beta_2}\VecLOneNorm{L_0}^{3/4}T^{1/4}} + \beta_1^{\bar{\tau}}(\VecLOneNorm{M} + \VecLOneNorm{\sigma})\right)\\
    % 3
    &
    \quad +120\left(\frac{10}{\rho} + 1\right)\left(\frac{\sqrt{\VecLOneNorm{L_0}\Delta}}{\sqrt{T}}\max(1, \log(1/\delta))
    + \frac{\Delta^{1/4}\VecLOneNorm{L_0}^{1/4}\VecLOneNorm{\sigma}^{1/2}}{T^{1/4}}\sqrt{\max(1, \log(1/\delta))}\right)\\
    % 4
    &\le
    \left(\frac{20}{\rho T^{1/4}} + \left(\frac{10}{\rho} + 1\right)\frac{20+120\sqrt{\max(1, \log(1/\delta))}}{\sqrt{1-\beta_2}T^{1/4}}\right)\Delta^{1/4}\VecLOneNorm{L_0}^{1/4}\VecLOneNorm{\sigma}^{1/2}\\
    % 5
    & \quad +\left(\frac{10}{\rho}
    + 120\left(\frac{10}{\rho} + 1\right)\max(1, \log(1/\delta))\right)\frac{\sqrt{\VecLOneNorm{L_0}\Delta}}{\sqrt{T}}
    + 20\left(\frac{10}{\rho} + 1\right)\beta_1^{\bar{\tau}}(\VecLOneNorm{M} + \VecLOneNorm{\sigma})\\
    % 6
    &\le
    \frac{1560\Delta^{1/4}\VecLOneNorm{L_0}^{1/4}\VecLOneNorm{\sigma}^{1/2}\sqrt{\max(1, \log(1/\delta))}}{\rho\sqrt{1-\beta_2}T^{1/4}} + \frac{1330\max(1, \log(1/\delta))\sqrt{\VecLOneNorm{L_0}\Delta}}{\rho\sqrt{T}}\\
    % 7
    &\quad +\frac{220}{\rho}(\VecLOneNorm{M} + \VecLOneNorm{\sigma})\exp\left(-\frac{\sqrt{1-\beta_2}\VecLOneNorm{L_0}^{3/4}}{\PNormDimension\VecLInftyNorm{L_1}\VecLOneNorm{\sigma}^{1/2}\Delta^{1/4}}T^{1/4}\right)~.
\end{align}

Finally, taking $\delta^{\prime} = \frac{\delta}{3dT}$, we obtain the stated result.
\end{proof}

\clearpage
\subsection{More Experiment Details}
\label{ssec:exp_details}
In this section, we provide more details for the experiments we show in Section~\ref{sec:experiments}.

\textbf{Hyperparameter Tuning} During the validation stage, we used grid-search to fine-tune respective hyperparameters and choose the ones that yield the best validation results.
We tuned the hyperparameters using the following two-stage grid searching strategy: First, search over a coarse grid, and select the one yielding the best validation result. Next, continue searching in a fine grid centering at the best-performing hyperparameters found in the coarse stage, and in turn, take the best one as the final choice. Also, whenever the best-performing hyperparameters lie in the boundary of the searching grid, we always extend the grid to make the final best-performing hyperparameters fall into the interior of the grid, if possible.

\textbf{Resnet on CIFAR-10}
We randomly selected $10\%$ images from the training dataset for validation. Yet, during testing, we trained on the whole training dataset.
The detailed search ranges and the hyperparameter choices yielding the highest validation accuracy for each optimizer are listed in Table~\ref{tab:cifar10_hp}.

\begin{table}[t]
    \centering
    \caption{Hyperparameter grid search ranges and choices yielding the highest validation accuracy for each optimizer for training a 20-layer Resnet to do image classification on CIFAR-10. ({"lr" denotes the initial learning rate, "clip" denotes the clipping parameter $\gamma$ in Algorithm 1 of~\cite{zhang2020improved}, and "$\beta_2$" is defined in~\cite{KingmaB14} for Adam and in Algorithm~\ref{alg:generalized_signsgd} for ours.})}
    \label{tab:cifar10_hp}
    {\small{
    \begin{tabular}{|c|c|c|}
        \hline
        Optimizer & Grid Search Range & Best Choice \\
        \hline
        SGD Momentum & lr \{1e-5, 0.0001, 0.001, 0.01, 0.05, 0.07, 0.1, 0.2, 0.3, 1, 10\} & lr=0.07\\
        \hline
        SGD Momentum Normalized & lr \{0.0001, 0.001, 0.01, 0.05, 0.07, 0.09, 0.1, 0.2, 0.3, 0.5 1, 10\} & lr=0.1\\
        \hline
        SGDClipGrad & \makecell{lr \{0.001, 0.01, 0.05, 0.1, 0.5, 1, 10\}\\ clip \{0.1, 1, 10\}} & \makecell{lr=0.5\\ clip=1} \\
        \hline
        SGDClipMomentum & \makecell{lr \{0.001, 0.01, 0.1, 1, 5, 10, 20, 50\}\\ clip \{0.01, 0.1, 1, 10\}} & \makecell{lr=10\\ clip=0.1} \\
        \hline
        Adam & \makecell{lr \{1e-5, 0.0001, 0.0007, 0.0009, 0.001, 0.002, 0.003, 0.01, 0.1\} \\ $\beta_2$ \{0.4, 0.8, 0.999\}} & \makecell{lr=0.0009\\ $\beta_2$=0.999} \\
        \hline
        Our Algorithm~\ref{alg:generalized_signsgd} & \makecell{lr \{5e-5, 8e-5, 0.0001, 0.0002, 0.0005, 0.001, 0.01\} \\ $\beta_2$ \{0.4, 0.8, 0.999\}} & \makecell{lr=0.0002\\ $\beta_2$ = 0.999} \\
        \hline
    \end{tabular}}
    }
\end{table}

\textbf{AWD-LSTM on Penn Treebank} We used the original train-validation-test split that comes with the dataset. The momentum parameter ($\beta_1)$ is fixed to be $0.9$ except for SGDClipGrad which does not use momentum.
The detailed search ranges and the hyperparameter choices yielding the lowest validation perplexity for each optimizer are listed in Table~\ref{tab:ptb_hp}.

\begin{table}[t]
    \centering
    \caption{Hyperparameter grid search ranges and choices yielding the lowest validation perplexity for each optimizer for training an AWD-LSTM to do language modeling on Penn Treebank. ({"wd" denotes the weight decay value, "lr" denotes the initial learning rate, "clip" denotes the clipping parameter $\gamma$ in Algorithm 1 of~\cite{zhang2020improved}, and "$\beta_2$" is defined in~\cite{KingmaB14} for Adam and in Algorithm~\ref{alg:generalized_signsgd} for ours.})}
    \label{tab:ptb_hp}
    \begin{tabular}{|c|c|c|}
        \hline
        Optimizer & Grid Search Range & Best Choice \\
        \hline
        SGD Momentum & \makecell{wd \{1e-7, 1.2e-6, 5e-6, 1e-5, 1e-4, 1e-3\}\\ lr \{0.001, 0.01, 0.1, 0.5, 0.8, 1, 2, 4, 5\}} & \makecell{wd=1e-5 \\ lr=1}\\
        \hline
        SGD Momentum Normalized & \makecell{wd \{1e-7, 1.2e-6, 5e-6, 1e-5, 5e-5, 1e-4\}\\ lr \{0.01, 0.05, 0.1, 0.5, 0.8, 1, 2, 4, 5, 10\}} & \makecell{wd=5e-6 \\ lr=2}\\
        \hline
        SGDClipGrad & \makecell{wd \{1e-7, 1.2e-6, 5e-6, 1e-5\}\\ lr \{0.1, 0.5, 1, 5, 10, 20, 30, 40, 50, 60, 70\}\\ clip \{1, 2.5, 7.5, 10, 15, 20\}} & \makecell{wd=1.2e-6 \\ lr=50 \\ clip=10}\\
        \hline
        SGDClipMomentum & \makecell{wd \{1e-7, 1.2e-6, 5e-6, 1e-5\}\\ lr \{5, 10, 20, 30, 50, 100\}\\ clip \{1, 2.5, 7.5\}} & \makecell{wd=1.2e-6 \\ lr=20 \\ clip=2.5} \\
        \hline
        Adam & \makecell{wd \{1e-7, 1.2e-6, 5e-6, 1e-5\}\\ lr \{0.0001, 0.001, 0.002, 0.003, 0.01, 0.1\}\\ $\beta_2$ \{0.4, 0.8, 0.999\}} & \makecell{wd=5e-6 \\ lr=0.002 \\ $\beta_2$=0.999} \\
        \hline
        Our Algorithm~\ref{alg:generalized_signsgd} & \makecell{wd \{1e-7, 1.2e-6, 5e-6, 1e-5\}\\ lr \{0.0001, 0.001, 0.002, 0.003, 0.01, 0.1\}\\ $\beta_2$ \{0.4, 0.8, 0.999\}} & \makecell{wd=1.2e-6 \\ lr=0.001 \\ $\beta_2$=0.999}  \\
        \hline
    \end{tabular}
\end{table}

\subsection{Training a Transformer Model on WMT'16 German-English Translation Task}
\label{ssec:transformer_exp}

We noted that Transformers~\cite{VaswaniSPUJGKP17} are gaining huge popularities recently and reported in Figure~\ref{fig:global_l0l1} and~\ref{fig:transformer_l0l1_coordinate_wise} that Transformers observe the relaxed smoothness conditions. Thus, to further showcase the effectivity of our algorithm~\ref{alg:generalized_signsgd} compared with other optimizers listed in~\ref{ssec:comparison}, we train a $6$-layer Transformer model to do machine translation on the WMT'16 Multimodal Machine Translation Task German-English dataset.
The implementation of the transformer is forked from here\footnote{\url{https://github.com/jadore801120/attention-is-all-you-need-pytorch}} and we inherited the default model structure. We also adopted the warm-up steps of $128000$ and the learning rate decay strategy as recommended by the GitHub repo. The mini-batch size is $256$ and we trained for $400$ epochs.

\begin{table}[H]
\centering
\vspace{-1em}
\caption{Average final training loss, test perplexity, and test accuracy achieved by each method when optimizing the Transformer model on the WMT'16 Multimodal Machine Translation Task German-English dataset. The $\pm$ shows $95\%$ confidence intervals of the mean value over 5 runs starting from different random seeds.}
\label{tab:results_multi30k}
\begin{tabular}{|c|c|c|c|}
\hline
Methods
& Training loss & Test perplexity & Test accuracy\\
\hline
SGD Momentum & $2.8045 \pm 0.0209$ & $10.4319 \pm 0.1973$ & $63.9108 \pm 0.5797$\\
\hline
SGD Momentum Normalized & $2.9268 \pm 0.0512$ & $10.5793 \pm 0.6383$ & $63.0774 \pm 0.8231$\\
\hline
SGDClipGrad & $3.0214 \pm 0.0508$ & $10.5974 \pm 0.3527$ & $62.2534 \pm 0.3145$\\
\hline
SGDClipMomentum & $2.8128 \pm 0.0295$ & $10.4677 \pm 0.3619$ & $63.6562 \pm 0.5370$\\
\hline
Adam & $\mathbf{1.4303 \pm 0.0009}$ & $8.9088 \pm 0.1294$ & $\mathbf{68.9828 \pm 0.2786}$\\
\hline
Our Algorithm~\ref{alg:generalized_signsgd} & $1.6263 \pm 0.0024$ & $\mathbf{7.2731 \pm 0.0870}$ & $68.5790 \pm 0.4693$\\
\hline
\end{tabular}
\vspace{-1em}
\end{table}

For the hyperparameter tuning, the momentum parameter ($\beta_1)$ is fixed to be $0.9$ except for SGDClipGrad which does not use momentum. We then used grid-search to fine-tune the initial learning rate and the weight decay value for all optimizers, as well as the clipping threshold for SGDClipGrad and SGDClipMomentum, and $\beta_2$ for Adam and our algorithm. We select the combination of hyperparameters that gives the lowest validation perplexity. The detailed search ranges and the hyperparameter choices yielding the lowest validation perplexity for each optimizer are listed in Table~\ref{tab:ptb_hp}.

\begin{figure}[t]
    \centering
    \includegraphics[width=\textwidth]{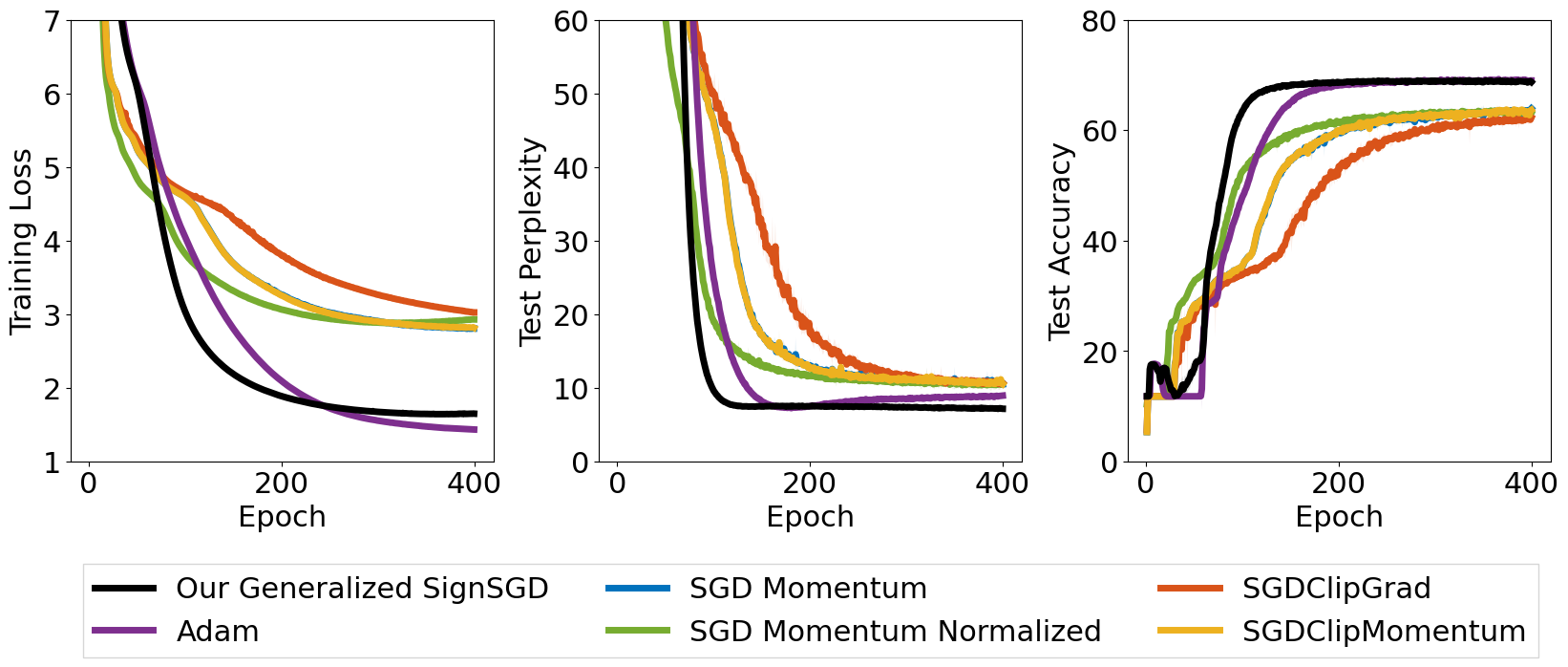}
    \caption{Training a $6$-layer Transformer model to do machine translation on the WMT'16 Multimodal Machine Translation Task German-English dataset. The shading of each curve represents the 95\% confidence interval computed across five independent runs from different random seeds.}
    \label{fig:multi30k}
\end{figure}

\begin{table}[t]
    \centering
    \caption{Hyperparameter grid search ranges and choices yielding the lowest validation perplexity for each optimizer for training a $6$-layer Transformer to do machine translation on the WMT'16 Multimodal Machine Translation Task German-English dataset. ({"wd" denotes the weight decay value, "lr" denotes the initial learning rate, "clip" denotes the clipping parameter $\gamma$ in Algorithm 1 of~\cite{zhang2020improved}, and "$\beta_2$" is defined in~\cite{KingmaB14} for Adam and in Algorithm~\ref{alg:generalized_signsgd} for ours.})}
    \label{tab:multi30k_hp}
    \begin{tabular}{|c|c|c|}
        \hline
        Optimizer & Grid Search Range & Best Choice \\
        \hline

        SGD Momentum & \makecell{wd \{1e-6, 0.001, 0.01, 0.1, 1, 10, 100, 1000\}\\ lr \{0.1, 1, 10, 100, 1000, 10000\}} & \makecell{wd=1 \\ lr=1}\\
        \hline

        SGD Momentum Normalized & \makecell{wd \{1e-6, 0.001, 0.01, 0.1, 1, 10, 100, 1000\}\\ lr \{0.01, 0.1, 1, 10, 100, 1000, 10000, 100000\}} & \makecell{wd=1 \\ lr=10000}\\
        \hline
        SGDClipGrad & \makecell{wd \{1e-6, 0.01, 0.1, 1, 10, 100, 1000\}\\ lr \{0.01, 0.1, 1, 10, 100, 1000, 10000\}\\ clip \{0.01, 0.1, 1, 10\}} & \makecell{wd=1 \\ lr=10 \\ clip=1}\\
        \hline
        SGDClipMomentum & \makecell{wd \{1e-6, 0.01, 0.1, 1, 10\}\\ lr \{0.01, 0.1, 1, 10, 100, 1000, 10000\}\\ clip \{0.01, 0.1, 1, 10\}} & \makecell{wd=1 \\ lr=1 \\ clip=1} \\
        \hline
        Adam & \makecell{wd \{1e-6, 0.001, 0.01, 0.1, 1, 10, 100\}\\ lr \{0.1, 1, 10, 100, 1000, 10000\}\\ $\beta_2$ \{0.9, 0.98, 0.999\}} & \makecell{wd=0.1 \\ lr=10 \\ $\beta_2$=0.98} \\
        \hline
        Our Algorithm~\ref{alg:generalized_signsgd} & \makecell{wd \{1e-6, 0.001, 0.01, 0.1, 1, 10, 100\}\\ lr \{0.1, 1, 10, 100, 1000, 10000\}\\ $\beta_2$ \{0.9, 0.98, 0.999\}} & \makecell{wd=0.1 \\ lr=10 \\ $\beta_2$=0.98}  \\
        \hline
    \end{tabular}
\end{table}

We then employ the best-performing hyperparameters to report the testing performance. The testing is repeated with random seeds 5 times to eliminate the influence of stochasticity. The results are reported in Figure~\ref{fig:multi30k} and Table~\ref{tab:results_multi30k}. Here the accuracy means the proportion of correct correspondences of words, namely the same word at the same location, between the machine-translated output and the target. It can be seen that we can still match the performance of Adam while beating the others. Also, note that the curves of SGD Momentum and the ones of SGDClipMomentum overlap as they utilize the same weight decay values and initial learning rates, and turns out clipping is seldomly performed when employing SGDClipMomentum.

\end{document}